%% file: main.tex
\documentclass{article}

\usepackage[preprint]{neurips_2025}

\usepackage{glossaries}

\input{acronyms}

\input{preamble}
\title{From Atoms to Entropy: Optimal Noise Allocation for Diffusion Training in the Convex Regime}

\author{%
  \begin{minipage}{\dimexpr\textwidth-2\tabcolsep\relax}\centering\bfseries
    Luca Ambrogioni\textsuperscript{1}\thanks{Equal contribution.}, Giulio Franzese\textsuperscript{2}\footnotemark[1], Alberto Foresti\textsuperscript{2}, Gabriel Raya\textsuperscript{3}, Bac Nguyen\textsuperscript{4}, Georgios Batzolis\textsuperscript{5}, Yuhta Takida\textsuperscript{4}, Naoki Murata\textsuperscript{4}, Chieh-Hsin Lai\textsuperscript{4} and Yuki Mitsufuji\textsuperscript{4,6}\\[6pt]
    \normalfont\small
    \textsuperscript{1}Radboud University \quad
    \textsuperscript{2}EURECOM \quad
    \textsuperscript{3}Tilburg University \& JADS \quad
    \textsuperscript{4}Sony AI \quad
    \textsuperscript{5}University of Cambridge \quad
    \textsuperscript{6}Sony Group Corporation
  \end{minipage}%
}

\begin{document}
\maketitle

\begin{abstract}
How should a diffusion model decide which noise levels to train on, and how much? Despite the importance of this choice, current noise schedules are based largely on heuristics or empirical tuning. Here, we develop a general statistical framework for studying asymptotically optimal noise-level allocation in diffusion training. Our first main result concerns the fully coupled regime, where information can spread between different time points. Under convexity or Polyak–Łojasiewicz-type assumptions, we show that the optimized training schedule admits an atomic minimizer, concentrated on finitely many noise levels. Our second main result specializes this framework to an idealized independent-learner regime, intended to model temporal specialization in neural networks. Under an additional feature-noise decoupling condition, a random-matrix analysis leads to an information-theoretic proxy: the decoupled sampling density is proportional to the square root of the generative entropy rate, the rate at which conditional entropy grows along the forward process. We test these predictions in controlled settings where the coupled objective can be optimized directly, including Dirac mixtures, low-dimensional manifolds, and MNIST. In these settings, the optimized schedules are consistently finite-support, while the smooth entropic proxy closely tracks the atomic optimum in neural-network models and breaks down mainly in the fully coupled parametric case, as the theory suggests. We then evaluate the entropic schedule in larger-scale experiments, where full schedule optimization is currently intractable. The results indicate that square-root entropy scheduling can substantially improve training efficiency on discrete domains and remains competitive with standard EDM-style heuristics on continuous images.

\end{abstract}

\input{sections/introduction}

\input{sections/related_work}

\input{sections/diffusion_models}
\input{sections/temporal_weighting}       
\input{sections/asymptotic_scaling}       
\input{sections/optimal_scheduling}       
\input{sections/uncoupled_limit}          
\input{sections/experiments}              
\input{sections/discussion}              

\bibliography{ref}
\bibliographystyle{plainnat}

\appendix

\input{appendices/elbo_denoiser}
\input{appendices/fokker_planck}
\input{appendices/random_jacobian}
\input{sections/feature_noise_decoupling} 
\input{appendices/toy_experiments}    
\input{appendices/experimental_details}

\end{document}

%% file: acronyms.tex
\newacronym{ODE}{\textsc{ode}}{Ordinary Differential Equation}
\newacronym{CTMC}{\textsc{ctmc}}{Continuous Time Markov Chain}
\newacronym{SEM}{\textsc{sem}}{Standard Error of the Mean}
\newacronym{KL}{\textsc{kl}}{Kullback-Leibler}
\newacronym{MLP}{\textsc{MLP}}{Multi-Layer-Perceptron}
\newacronym{NLP}{\textsc{NLP}}{Natural Language Processing}
\newacronym{GWAS}{\textsc{GWAS}}{Genome-Wide Association Study}
\newacronym{WSSS}{\textsc{WSSS}}{Weakly Supervised Semantic Segmentation}
\newacronym{CAM}{\textsc{CAM}}{Class Activation Map}
\newacronym{SDE}{\textsc{SDE}}{Stochastic Differential Equation}
\newacronym{DTC}{\textsc{DTC}}{Dual Total Correlation}
\newacronym{TC}{\textsc{TC}}{Total Correlation}
\newacronym{MI}{\textsc{MI}}{Mutual Information}
\newacronym{DWDSE}{\textsc{dwdse}}{Diffusion Weighted Denoising Score Entropy}
\newacronym{GP}{\textsc{GP}}{Gaussian Process}
\newacronym{CI}{\textsc{CI}}{Confidence Interval}
\newacronym{MEC}{\textsc{MEC}}{Minimum Entropy Coupling}

%% file: preamble.tex
\usepackage[utf8]{inputenc}
\usepackage[T1]{fontenc}
\usepackage{hyperref}
\usepackage{url}
\usepackage{booktabs}
\usepackage{amsfonts}
\usepackage{nicefrac}
\usepackage{microtype}
\usepackage{xcolor}
\usepackage{graphicx}
\usepackage{tikz}
\usetikzlibrary{arrows.meta, patterns}
\usepackage{float}
\usepackage{amsthm}
\usepackage{amsmath}
\usepackage{caption}
\usepackage{subcaption}
\usepackage{comment}
\usepackage{siunitx}
\usepackage{multirow}
\usepackage{mathrsfs}
\usepackage{amssymb}
\usepackage{algorithm}
\usepackage{algpseudocode}
\usepackage{enumitem}
\usepackage{wrapfig}
\usepackage{lipsum}

\usepackage{cleveref}

\newtheorem{theorem}{Theorem}[section]
\newtheorem{proposition}[theorem]{Proposition}

\newtheorem{lemma}[theorem]{Lemma}

\newcommand{\mean}[2]{\mathbb{E}_{#2}\left[ #1 \right]}

\newcommand{\norm}[1]{\left\lVert#1\right\rVert}

\DeclareMathOperator{\Ent}{Ent}
\DeclareMathOperator{\conv}{conv}

\input{macros}

%% file: sections/introduction.tex
\section{Introduction}
Diffusion models have become a leading framework for generative modeling across images, audio, and other high-dimensional domains \citep{sohl2015deep, ho2020ddpm, song2019generative, song2021score, dhariwal2021diffusion}. Central to their performance is the choice of how training samples are allocated across noise levels. Although numerous schedules have been proposed, ranging from heuristic designs \citep{nichol2021improved, kingma2021variational} to empirically tuned allocations \citep{karras2022elucidating} and adaptive or learned methods \citep{kingma2021variational,hang2023efficient, raya2026information}, there remains no generally accepted theoretical principle that explains why certain schedules work well while other are highly sub-optimal. 

In this paper, we develop a general statistical framework for asymptotically optimal noise-level allocation in diffusion training under streaming SGD \citep{robbins1951stochastic, polyak1992acceleration, bottou1999online}. To keep the problem theoretically tractable, we consider a convex loss landscape around the global optimum, or more generally, Polyak–Łojasiewicz assumptions \citep{polyak1963gradient, karimi2016linear}. Even under these restrictions, we will show that the analysis results in a very rich and counterintuitive phenomenology, where the coupled objective always admits an optimal schedule that is atomic, supported on finitely many noise levels. In the larger scale experiments, we apply some of the resulting formulas heuristically to unrestricted deep learning training. 

\subsection{Contributions}
While the work contains a substantial amount of experimental results ranging from toy models to natural image dataset, our main focus is in the theoretical analysis, which lays the foundations for the theoretical study of optimal noise allocation. We formulate the problem as choosing a training schedule, namely a distribution over noise levels, that minimizes the ELBO-weighted asymptotic prediction error. A key conceptual issue is the distinction between the error weighting function and the optimal training schedule, a distinction that is partially conflated in the previous literature and is central to our analysis.

The theoretical backbone of our work is an asymptotic analysis of averaged stochastic gradient descent in the equilibrium regime, where we track the long-run average of the iterates to obtain stable estimates of the parameter vector. This allows us to quantify how training on a data point at a given noise level asymptotically reduces prediction error along the full denoising path. As our analysis shows, this induces a trade-off between directly reducing error at challenging noise levels and “spreading” information across the time axis by sampling from easier noise levels. 

While the problem definition is general, our main results are derived in a frozen-feature regime: a local linearization around a converged parameter (the lazy/NTK regime \citep{jacot2018neural}), where the induced objective is convex. In this regime, our analysis yields an explicit asymptotic formula for the prediction error at any target noise level as a function of the training schedule, capturing how efficiently training steps are converted into denoising accuracy across the denoising path (Section~\ref{sec:asymptotic_scaling}). Using this formalism, we characterize two main regimes:

\paragraph{Coupled regime: atomic schedules.}
When the features learned by the network (or, more generally, the curvature geometry of the parametric model) couple inference at different time points, we prove the existence of an atomic optimizer, namely an optimized schedule that is a mixture of delta functions centered at finitely many noise levels. The result bounds the number of atoms but does not by itself prescribe their locations, which are determined by the curvature-and-noise geometry of the model and which we identify numerically through a matrix algorithm.

\paragraph{Decoupled regime: the square-root density.}
The coupled theory simplifies considerably when the network exhibits \emph{temporal specialization} where the internal representations that the network uses to denoise at one noise level are largely independent of those it uses at distant noise levels. In this regime the bin-level allocation problem admits a closed-form (water-filling) minimizer; combined with a feature--noise decoupling approximation, this yields the square-root entropy schedule
$$
m^*(t) \propto \sqrt{\dot{\Ent}[x_0 \mid x_t]},
$$
where $\dot{\Ent}[x_0 \mid x_t]$ is the rate at which conditional entropy grows. This identifies a stylized regime in which the entropy-based schedule of \citep{raya2026information} is provably optimal, lending theoretical support to that heuristic.

The coupled and decoupled theories are complementary: the former gives a general characterization of optimal schedules in the frozen-feature setting, while the latter yields an explicit data-dependent formula under simplifying conditions. Although these conditions are strong, they isolate the intrinsic information geometry of diffusion training and lead to an interpretable practical heuristic. 

We also apply the theory heuristically to actual deep learning training in the unconstrained regime, where we find that the uncoupled entropic schedulers has high performance when compared to highly optimized schedulers such as the ones provided in \cite{karras2022elucidating}.

%% file: sections/related_work.tex
\section{Related Work}
Optimizing how diffusion models allocate training or sampling effort across noise levels has been explored from several complementary perspectives. Early theoretical work derives principled temporal weightings from the continuous-time ELBO. \cite{kingma2021variational} show that the ELBO decomposes into an integral of the denoising loss weighted by the signal-to-noise ratio (SNR), motivating the widely used SNR-based loss weighting. While this establishes a meaningful notion of time importance, it does not yield a data-dependent optimal sampling density. A large empirical literature proposes improved noise schedules. \cite{nichol2021improved} introduced the cosine schedule, and \cite{karras2022elucidating} analyzed the diffusion design space empirically to recommend specific distributions of noise levels that improve stability and sampling accuracy. Perceptual and gradient-stability-based schemes, such as P2 weighting~\cite{choi2022perception} and Min-SNR weighting~\cite{hang2023efficient}, similarly adjust the emphasis placed on different noise regions during training, often prioritizing mid-noise intervals. A similar SNR based approach is discussed in \citep{hang2025improved}. Another line of work optimizes schedules algorithmically. \cite{wang2023learning} frame timestep selection as a reinforcement-learning problem, learning adaptive sampling schedules, and \cite{liu2023omsdpm} search over ``model schedules'' by training a surrogate quality predictor. These approaches show that time allocation affects performance, but they offer procedural rather than theoretical solutions and primarily target sampling rather than training-time allocation. A distinct set of works addresses numerical optimality in sampling. \cite{lu2022dpm} derive high-order ODE solvers for diffusion sampling, implicitly inducing non-uniform step distributions, while \cite{williams2024scoreoptimal} optimize discretization from a transport-cost perspective, producing ``score-optimal'' sampling schedules. Most closely related is the literature on information-theoretic time reparameterizations. \cite{stancevic2026entropic} show that the conditional generative entropy and its rate can be used to define an ``entropic time'' coordinate that improves sampling when steps are placed uniformly in that transformed space. This approach was recently used in \citep{raya2026information}, where entropy is used heuristically as noise scheduling distribution. Our work offers a theoretical justification of this latter work and also shows that the technique is sub-optimal under strong temporal coupling.

%% file: sections/diffusion_models.tex
\section{Generative diffusion models}
For the sake of notational simplicity, we will consider a generative diffusion model with forward process governed by a Brownian dynamics:
$
    \text{d} x_{t} = a(t) \text{d} w_t~,
$
where $a(t)$ is a positive-valued noise scheduling function and $w_t$ is a Wiener process. All results in the paper extend to general linear forward processes. The transition finite-time distribution of this model is given by the formula
$
    x_t = x_0 + \sigma(t) ~z,~
$
where $z \sim \mathcal{N}(0,1)$ and $\sigma^2(t)$ is the cumulative variance of the noise:
$
    \sigma^2(t) = \int_0^t a^2(\tau) \text{d} \tau
$
Now consider that the data at $t = 0$ is sampled from a data distribution $\phi(x_0)$. Given a data distribution $\phi$, the generative dynamics, the so called \emph{reverse process}, follows a reverse time stochastic differential equation~\citep{anderson1982reverse}:
\begin{equation}
    \text{d} x_t = \frac{a^2(t)}{\sigma^2(t)} \left(x_t - \mean{x_0 \mid x_t, \phi}{} \right)  \text{d} t + a(t) \text{d} \tilde{w}_t
\end{equation}
where $\tilde{w}_t$ is a separate Wiener process. In generative diffusion training, the aim is to learn how to sample from the distribution $\phi$ only accessible through a finite dataset $D = \{y^{1}, \dots, y^N\}$ independently sampled from $\phi$. To this purpose, a denoising neural network $s_\theta(x_t,t)$ is trained to approximate the denoising expected value $\mean{x_0 \mid x_t}{}$, which is then used to integrate the reverse dynamics. For a given time $t$, the ideal loss (density) is
\begin{equation}\label{eq: denoising loss}
    \mathscr{L}_t(\theta) = \mean{\norm{s_\theta(x_t,t) - \mean{x_0 \mid x_t, \phi}{}}^2}{x_t} = \underbrace{\mean{\norm{s_\theta(x_t,t) - x_0}^2}{x_t, x_0}}_{\hat{\mathscr{L}}_t(\theta)~: \text{ denoising autoencoder loss}} - c~,
\end{equation}
where the optimal error $c$ is a constant independent from the network $s_\theta(x_t,t)$ and can therefore be ignored during optimization. Note that the \emph{denoising autoencoder} error density $\hat{\mathscr{L}}_t(\theta)$~\citep{vincent2011connection} can be estimated from the data, being averaged over the joint distribution of $x_t$ and $x_0$.

%% file: sections/temporal_weighting.tex
\section{Temporal weighting and optimal sampling densities}\label{sec:temporal_weighting}
Using $\hat{\mathscr{L}}_t$ as a loss function for a generative diffusion model is theoretically principled but requires a measure of relative importance between time points. In fact, in a generative diffusion model, the time parameter $t$ is ultimately arbitrarily defined and it is therefore not invariant under a change of time variable. We can obtain a well-defined loss on the whole diffusion trajectory by introducing a weighting function $w(t)$, which can be used to define the global error by integrating over the diffusion interval $[0,T]$. A principled weighting is given by the continuous ELBO, which provides an upper bound on the negative log-likelihood (see \citep{huang2021variational} and Supp.~\ref{supp sec: ELBO}):
\begin{equation}
    \mathcal{L}_{\text{ELBO}} = \int_0^T \hat{\mathscr{L}_t} ~\dot{\text{SNR}}(t) \text{d} t
\end{equation}
where for our forward process we define the (signed) SNR rate as $\dot{\text{SNR}}(t) = \dot{\sigma}(t)/\sigma^3(t)$. It is tempting to interpret $\dot{\text{SNR}}(t)$ as a time sampling density. However, optimal sampling must be kept separate from time weighting, since the performance of a time sampler depends on how the average error scales with the size of the training set at different time points, and it is therefore a data-dependent quantity. 

The empirical autoencoder loss $\hat{\mathscr{L}}_t(\theta)$ decomposes as
$
  \hat{\mathscr{L}}_t(\theta) \;=\; \mathscr{L}_t(\theta) \;+\; \mathrm{MMSE}(t),
$
where, for a parameter vector $\theta$,
\begin{equation}\label{eq:Lt_def}
  \mathscr{L}_t(\theta) \;:=\; \mathbb{E}_{x_t}\bigl[\|s_\theta(x_t,t)-f_t(x_t)\|^2\bigr]
\end{equation}
is the prediction error of $s_\theta$ relative to the Bayes-optimal denoiser
$f_t(x_t):=\mathbb{E}[x_0\mid x_t,t]$, and
$\mathrm{MMSE}(t):=\mathbb{E}[\|x_0-f_t(x_t)\|^2]$ is the irreducible
Bayes error (Supp.~\ref{supp sec: ELBO}). Minimizing the ELBO over $\theta$ is
therefore equivalent to minimizing the SNR-weighted prediction error
$\int_0^T\mathscr{L}_\tau(\theta)\,\dot{\mathrm{SNR}}(\tau)\,d\tau$. 

\subsection{SGD optimization}
In practice, $\theta$ is not optimized exactly but is approximated by a \emph{random}
iterate produced by stochastic gradient descent. Let $\bar\theta_K$ denote the
Polyak--Ruppert averaged iterate (see \citep{polyak1992acceleration}) after $K$ SGD steps under sampling measure $\mu$.
Because $\bar\theta_K$ is a
random variable whose distribution depends on both $K$ and $\mu$, the prediction error
$\mathscr{L}_\tau(\bar\theta_K)$ is itself random. The natural optimization target is
therefore the \emph{expected} prediction error
\begin{equation}\label{eq:Ltau_Kmu_def}
  \mathscr{L}_\tau(K;\mu)
  \;:=\;
  \mathbb{E}\bigl[\mathscr{L}_\tau(\bar\theta_K)\bigr]
  \;=\;
  \mathbb{E}\!\left[\bigl\|s_{\bar\theta_K}(x_\tau,\tau)-f_\tau(x_\tau)\bigr\|^2\right],
\end{equation}
where the expectation is joint over the SGD trajectory and the test point $x_\tau$.

\subsection{Optimal noise allocation}
We formulate the optimal time allocation as a probability measure
$\mu^* \in \mathcal{P}([0,T])$ minimizing the ELBO-weighted expected prediction error \citep{kingma2021variational}:
\begin{equation}\label{eq:minimizer}
  \mu^* = \arg\min_{\mu \in \mathcal{P}([0,T])} \mathcal{J}(\mu),
  \qquad
  \mathcal{J}(\mu) = \int_0^T \dot{\mathrm{SNR}}(\tau)\,
    \lim_{K \to \infty} K\,\mathscr{L}_\tau(K;\mu)\;d\tau,
\end{equation}
where $K$ is the total sampling budget.
The factor $\lim_{K\to\infty}K\,\mathscr{L}_\tau(K;\mu)$ is the \emph{normalized
asymptotic prediction rate}, a finite quantity independent of $K$: for any large budget
$K$, the expected prediction error satisfies $\mathscr{L}_\tau(K;\mu)=C_\tau(\mu)/K+O(K^{-2})$,
where $C_\tau(\mu):=\lim_{K\to\infty}K\,\mathscr{L}_\tau(K;\mu)$. Interchanging the limit with
the $\tau$-integral in~\eqref{eq:minimizer} is justified when the $O(K^{-2})$ remainder is
uniform in $\tau$ and $\dot{\mathrm{SNR}}(\tau)\,C_\tau(\mu)$ is integrable on $[0,T]$, so that
dominated convergence applies.
Note that $\mu$ need not be absolutely continuous: it may concentrate on finitely many
noise levels (Dirac atoms), form a mixture of atoms and a smooth component, or be any
Borel probability measure on $[0,T]$. The optimal schedule in the coupled regime is
generically atomic, while in the independent-learner regime the optimizer spreads mass
into a smooth density. Note that the specification of a weighting function is essential in
the definition of an optimal sampling measure.

%% file: sections/asymptotic_scaling.tex
\section{Asymptotic scaling of the score matching error}
\label{sec:asymptotic_scaling}
We now consider the realistic setting in which a single denoising network is trained across all
diffusion times with shared parameters. This induces temporal coupling: stochastic gradient updates
obtained at one noise level modify the denoiser at all other noise levels. Our goal in this section is
to derive the asymptotic prediction error at an arbitrary evaluation time $\tau$ in a streaming SGD
regime, and to express this error in a form that makes the dependence on the time sampling measure
$\mu\in\mathcal{P}([0,T])$ explicit. Let $s_\theta(x,t)$ denote a denoiser network with a single parameter vector
$\theta\in\mathbb{R}^p$. Training uses the standard squared denoising loss
$\ell(\theta;t,x_0,x_t)=\frac12\|s_\theta(x_t,t)-x_0\|^2$.
Each streaming SGD update is generated by sampling
$t\sim \mu$, $x_0\sim\phi$, and $x_t\sim q_t(\cdot\mid x_0)$,
where $\mu\in\mathcal{P}([0,T])$ is the time-sampling measure. Let $\theta^\star\in\mathbb{R}^p$ denote the global population minimizer of the
squared denoising loss.

\paragraph{Assumption A (Well-specified model; $\mu$-independence of
$\theta^\star$).}\label{ass:well_specified}
We assume there exists a unique $\theta^\star$ such that
$s_{\theta^\star}(x_t,t)=f_t(x_t):=\mathbb{E}[x_0\mid x_t,t]$
for \emph{all} $t\in[0,T]$ simultaneously.
In particular, $\theta^\star$ does not depend on the time-sampling measure~$\mu$:
varying $\mu$ reshapes the geometry of the SGD trajectory but leaves the
target parameter fixed.
This assumption is essential for the atomic-optimizer result: varying $\mu$ changes the
optimization geometry but not the target, so the asymptotic prediction error depends on
$\mu$ only through the aggregate second-moment operators $A_t$ and $B_t$.

Define the residual $\varepsilon_t:=x_0-f_t(x_t)$, which satisfies
$\mathbb{E}[\varepsilon_t\mid x_t,t]=0$ and
$\mathrm{Var}(\varepsilon_t\mid x_t,t)=\Sigma_t(x_t)$.

\paragraph{Linearization and local geometry.}
We work in the local linear regime around $\theta^\star$. Define the Jacobian
$J_t(x_t):=\partial_\theta s_\theta(x_t,t)|_{\theta=\theta^\star}\in\mathbb{R}^{d\times p}$,
and the associated Gram matrix $G_t(x_t):=J_t(x_t)^\top J_t(x_t)$.
A first-order expansion gives
$s_\theta(x_t,t)=s_{\theta^\star}(x_t,t)+J_t(x_t)(\theta-\theta^\star)+R_t(\theta,x_t)$,
where the remainder satisfies $R_t(\theta,x_t)=O(\|\theta-\theta^\star\|^2)$ in $L_2$. We consider streaming SGD with a decreasing step size $\eta_n\to 0$,
\begin{equation}
\theta_{n+1}=\theta_n-\eta_n\nabla_\theta\ell(\theta_n;t_n,x_0^{(n)},x_{t_n}^{(n)}),
\label{eq:sgd_update_general}
\end{equation}
where $(t_n,x_0^{(n)},x_{t_n}^{(n)})$ are sampled i.i.d.\ as described above. Although deep networks are nonconvex, a growing body of theory supports the use of Polyak--\L{}ojasiewicz-type
assumptions in overparameterized regimes, implying stability of streaming SGD near $\theta^\star$.
We therefore analyze the Polyak--Ruppert averaged iterate
$\bar\theta_K=\frac1K\sum_{n=1}^K\theta_n$.

Using the linearization, the per-sample gradient takes the form
$\nabla_\theta\ell(\theta;t,x_0,x_t)=J_t(x_t)^\top(s_\theta(x_t,t)-x_0)$.
At $\theta^\star$ this reduces to $-J_t(x_t)^\top\varepsilon_t$. Define the pointwise curvature and noise operators
\begin{equation}
A_t := \mathbb{E}_{x_t\mid t}[G_t(x_t)],
\qquad
B_t := \mathbb{E}_{x_t\mid t}\!\left[J_t(x_t)^\top\Sigma_t(x_t)J_t(x_t)\right],
\label{eq:pointwise_ops}
\end{equation}
Although $A_t$ is only the Gauss--Newton term of the loss Hessian, at $\theta^\star$ it coincides with the \emph{exact} population Hessian: the residual $s_{\theta^\star}(x_t,t)-x_0=-\varepsilon_t$ is conditionally centered, $\mathbb{E}[\varepsilon_t\mid x_t]=0$, while $\nabla_\theta^2 s$ is $x_t$-measurable, so the neglected second-order term vanishes in expectation. The global curvature and gradient-noise operators induced by the sampling measure $\mu$ are
\begin{equation}
H[\mu]
=\int A_t\,d\mu(t),
\label{eq:global_curvature}
\end{equation}
\begin{equation}
\Gamma[\mu]
=\int B_t\,d\mu(t).
\label{eq:global_noise}
\end{equation}
When $\mu$ is absolutely continuous with density $m(t)=d\mu/dt$, these reduce to
$H[m]=\int m(t)A_t\,dt$ and $\Gamma[m]=\int m(t)B_t\,dt$.
Under standard regularity assumptions (finite second moments, local smoothness, and nondegeneracy
of $H[\mu]$), Polyak--Ruppert averaging yields the asymptotic
normality~\citep{polyak1992acceleration}
\begin{equation}
\sqrt{K}\,(\bar\theta_K-\theta^\star)
\;\Rightarrow\;
\mathcal{N}\!\left(0,\,H[\mu]^{-1}\Gamma[\mu]H[\mu]^{-1}\right).
\label{eq:parameter_clt_general}
\end{equation}

Fix an evaluation time $\tau$. Consistent with the definition in
Section~\ref{sec:temporal_weighting} (Eq.~\eqref{eq:Ltau_Kmu_def}), the expected
denoiser-matching error is
\begin{equation}
\mathscr{L}_\tau(K;\mu)
\;=\;
\mathbb{E}\!\left[\bigl\|s_{\bar\theta_K}(x_\tau,\tau)-f_\tau(x_\tau)\bigr\|^2\right],
\label{eq:denoise_error_def}
\end{equation}
where the expectation is \emph{joint} over $x_\tau\sim q_\tau$ (a fresh test point,
independent of the training sequence) and the SGD randomness in $\bar\theta_K$.
The rescaled quantity $K\,\mathscr{L}_\tau(K;\mu)$ is the natural figure of merit for
comparing time-sampling strategies, converging to a finite, schedule-dependent constant
that characterizes the asymptotic efficiency of~$\mu$. We now derive this constant.

\paragraph{Step 1 -- Linearization.}
The first-order expansion at $\theta^\star$ gives
\[
s_{\bar\theta_K}(x_\tau,\tau)-f_\tau(x_\tau)
=J_\tau(x_\tau)(\bar\theta_K-\theta^\star)+R_\tau(\bar\theta_K,x_\tau),
\]
where $\|R_\tau(\bar\theta_K,x_\tau)\|=O_p(\|\bar\theta_K-\theta^\star\|^2)=O_p(K^{-1})$
in $L_2$. Since $\bar\theta_K-\theta^\star=O_p(K^{-1/2})$ from \eqref{eq:parameter_clt_general},
\[
\bigl\|s_{\bar\theta_K}(x_\tau,\tau)-f_\tau(x_\tau)\bigr\|^2
=\bigl\|J_\tau(x_\tau)(\bar\theta_K-\theta^\star)\bigr\|^2+o_p(K^{-1}).
\]

\paragraph{Step 2 -- Factorization of the joint expectation.}
Writing the squared norm as a quadratic form and taking expectations,
\begin{align*}
\mathscr{L}_\tau(K;\mu)
&=\mathbb{E}\!\left[
  \mathrm{Tr}\!\bigl(G_\tau(x_\tau)\,
  (\bar\theta_K-\theta^\star)(\bar\theta_K-\theta^\star)^\top\bigr)\right]+o(K^{-1}).
\end{align*}
Since $x_\tau$ is a fresh test point drawn independently of the training sequence,
$x_\tau\perp\bar\theta_K$ and the expectation factorizes:
\begin{equation}
\mathscr{L}_\tau(K;\mu)
=\mathrm{Tr}\!\left(A_\tau\,\Sigma_K\right)+o(K^{-1}),
\label{eq:Lt_trace_form}
\end{equation}
where $\Sigma_K:=\mathbb{E}[(\bar\theta_K-\theta^\star)(\bar\theta_K-\theta^\star)^\top]$
is the parameter covariance and $A_\tau=\mathbb{E}_{x_\tau}[G_\tau(x_\tau)]$
as in \eqref{eq:pointwise_ops}.

\paragraph{Step 3 -- CLT substitution.}
The Polyak--Ruppert CLT \eqref{eq:parameter_clt_general} controls the distribution of
$\bar\theta_K$; under the uniform integrability of $\{K\|\bar\theta_K-\theta^\star\|^2\}$
supplied by the bounded-moment conditions of~\citet{polyak1992acceleration}, weak
convergence upgrades to convergence of second moments, so
$K\,\Sigma_K\to H[\mu]^{-1}\Gamma[\mu]H[\mu]^{-1}$ as $K\to\infty$.
Substituting into \eqref{eq:Lt_trace_form}:
\begin{equation}
\lim_{K\to\infty} K\,\mathscr{L}_\tau(K;\mu)
=
\mathrm{Tr}\!\left(
A_\tau
\,H[\mu]^{-1}\Gamma[\mu]H[\mu]^{-1}
\right).
\label{eq:general_prediction_error}
\end{equation}

Integrating~\eqref{eq:general_prediction_error} against the ELBO weight
$\dot{\mathrm{SNR}}(\tau)\,d\tau$ gives the full schedule-optimization objective:
\begin{equation}
\mathcal{J}(\mu)
\;=\;
\int_0^T \dot{\mathrm{SNR}}(\tau)\,
\mathrm{Tr}\!\left(
A_\tau\,H[\mu]^{-1}\,\Gamma[\mu]\,H[\mu]^{-1}
\right)d\tau
\;=\;
\mathrm{Tr}\!\left(
\bar A\,H[\mu]^{-1}\,\Gamma[\mu]\,H[\mu]^{-1}
\right),
\label{eq:general_obj}
\end{equation}
where $\bar A:=\int_0^T\dot{\mathrm{SNR}}(\tau)\,A_\tau\,d\tau$ is the SNR-weighted
average curvature. The second equality uses the linearity of the trace and shows that
$\mathcal{J}(\mu)$ depends on~$\mu$ only through the pair
$(H[\mu],\Gamma[\mu])$.

\paragraph{Cross-time impact: the kernel representation.}
Both $\mu$-dependent operators in~\eqref{eq:general_prediction_error} aggregate
information across \emph{all} training times. The gradient-noise operator
$\Gamma[\mu]=\int B_t\,d\mu(t)$ is the \emph{source} term: it collects the noise injected
at every training time~$t$. The global inverse $H[\mu]^{-1}$ is the \emph{propagator}:
being the inverse of an aggregate curvature, it routes each time's contribution through
the shared parameters. Consequently the prediction error at any evaluation time~$\tau$
depends on the curvature and noise at all training times, not just on~$A_\tau$. To make
this explicit, substitute $\Gamma[\mu]=\int B_t\,d\mu(t)$
into~\eqref{eq:general_prediction_error} and use, in turn, the linearity of the trace and
integral, the definition $B_t=\mathbb{E}_{x_t\mid t}[J_t^\top\Sigma_t J_t]$
of~\eqref{eq:pointwise_ops}, and the cyclic invariance of the trace (arguments~$x_t$
suppressed under the expectation):
\begin{align*}
\lim_{K\to\infty} K\,\mathscr{L}_\tau(K;\mu)
&=\mathrm{Tr}\!\bigl(A_\tau\,H[\mu]^{-1}\,\Gamma[\mu]\,H[\mu]^{-1}\bigr)
 =\int \mathrm{Tr}\!\bigl(A_\tau\,H[\mu]^{-1}B_t\,H[\mu]^{-1}\bigr)\,d\mu(t)
\\[2pt]
&=\int \mathbb{E}_{x_t\mid t}\!\left[
   \mathrm{Tr}\!\bigl(A_\tau\,H[\mu]^{-1}J_t^\top\Sigma_t J_t\,H[\mu]^{-1}\bigr)
   \right]d\mu(t)
\\[2pt]
&=\int \mathbb{E}_{x_t\mid t}\!\left[
   \mathrm{Tr}\!\bigl(
   \underbrace{J_t\,H[\mu]^{-1}A_\tau\,H[\mu]^{-1}J_t^\top}_{=\;P_{\tau,t}}\,
   \Sigma_t\bigr)
   \right]d\mu(t).
\end{align*}
The final step cycles the trailing factor $J_t\,H[\mu]^{-1}$ to the front, regrouping the
sandwich into the single operator~$P_{\tau,t}$. The prediction error at~$\tau$ is
therefore an integral over training times~$t$, weighted by the sampling measure~$\mu$,
\begin{equation}
\lim_{K\to\infty} K\,\mathscr{L}_\tau(K;\mu)
=
\int
\mathbb{E}_{x_t\mid t}
\!\left[
\mathrm{Tr}\!\left(P_{\tau,t}(x_t)\,\Sigma_t(x_t)\right)
\right]d\mu(t),
\label{eq:kernel_error_general}
\end{equation}
where
\begin{equation}
P_{\tau,t}(x_t)
=
J_t(x_t)\,H[\mu]^{-1}\,
A_\tau\,
H[\mu]^{-1}\,J_t(x_t)^\top
\label{eq:projection_operator_general}
\end{equation}
is a projection-like operator that measures the \emph{cross-time overlap} between
training at time~$t$ and evaluation at time~$\tau$: the columns of $J_t$ (features
learned at~$t$) are projected through the shared-parameter geometry ($H[\mu]^{-1}$) and
weighted by the evaluation curvature ($A_\tau$). The trace
$\mathrm{Tr}(P_{\tau,t}\,\Sigma_t)$ then quantifies how much of the conditional
uncertainty $\Sigma_t(x_t)$ at training time~$t$ leaks into prediction error at
evaluation time~$\tau$. When features at~$t$ and~$\tau$ use the same parameter
directions, $P_{\tau,t}$ is large and uncertainty propagates across times; when they use
orthogonal directions, $P_{\tau,t}$ vanishes and the two noise levels decouple.
The two $\mu$-dependent operators play distinct roles in this coupling. The source
$\Gamma[\mu]=\int B_t\,d\mu(t)$ supplies the integral over training times---a sum of
per-time noise contributions---but it is \emph{linear} in~$\mu$ and so, on its own, would
contribute additively. The propagator $H[\mu]^{-1}$ is the only factor \emph{nonlinear}
in~$\mu$: as the inverse of an aggregate curvature it does not factor over times, so noise
injected at~$t$ generically contaminates the prediction at $\tau\neq t$. This
non-factoring is what keeps the off-diagonal ($t\neq\tau$) part of $P_{\tau,t}$ alive and
is the source of the atomic-schedule phenomenon. When $P_{\tau,t}$ localizes to the diagonal
$t=\tau$, the integral over training times~$t$ collapses to that single point, and the
optimal schedule transitions from atomic to smooth.

Since $\lim_{K\to\infty}K\,\mathscr{L}_\tau(K;\mu)$ depends on $\mu$ only through the
pair $(H[\mu],\Gamma[\mu])\in\mathbb{S}^p\times\mathbb{S}^p$, any continuous objective of
the form $F(H[\mu],\Gamma[\mu])$, including $\mathcal{J}(\mu)$, admits an atomic minimizer
supported on at most $p(p+1)+1$ noise levels---a consequence of the convex geometry of the
moment map $\mu\mapsto(H[\mu],\Gamma[\mu])$.

%% file: sections/optimal_scheduling.tex
\section{Optimal asymptotic time scheduling in the idealized regime}\label{sec:optimal_scheduling}
The objective $\mathcal{J}(\mu)=\mathrm{Tr}\!\bigl(\bar A\,H[\mu]^{-1}\Gamma[\mu]\,
H[\mu]^{-1}\bigr)$ from~\eqref{eq:general_obj} has a convex-geometric structure that guarantees the existence of an \emph{atomic}
minimizer, supported on finitely many noise levels. The mechanism
is that $\mathcal{J}$ sees $\mu$ only through a finite-dimensional summary.

\paragraph{Geometric framework.}
Collect the pointwise operators of~\eqref{eq:pointwise_ops} into the curve
$c(t):=(A_t,B_t)\in V$, where $V:=\mathbb{S}^p\times\mathbb{S}^p$ is the space of pairs of
symmetric matrices. Via the $\mathrm{vech}$ map (stacking upper-triangular entries, with
off-diagonal entries scaled by $\sqrt2$ to preserve the Frobenius inner product), $V$ is
isomorphic to $\mathbb{R}^{d}$ with $d:=p(p+1)$. Define the \emph{barycenter map}
\begin{equation}\label{eq:barycenter}
\Phi(\mu):=\int_0^T c(t)\,d\mu(t)=\bigl(H[\mu],\,\Gamma[\mu]\bigr)\in V,
\end{equation}
a Bochner integral that is well defined for every $\mu\in\mathcal{P}([0,T])$ because $c$ is
continuous, hence bounded and strongly measurable, on the compact interval $[0,T]$. Both
$H[\mu]$ and $\Gamma[\mu]$ are linear in $\mu$, so $\Phi$ is affine, and by~\eqref{eq:general_obj}
the objective factors through it:
\begin{equation}\label{eq:abstract_problem}
\mathcal{J}(\mu)=F\!\bigl(\Phi(\mu)\bigr),
\qquad
F(H,\Gamma):=\mathrm{Tr}\!\bigl(\bar A\,H^{-1}\Gamma\,H^{-1}\bigr),
\end{equation}
where $F$ is continuous on $\{(H,\Gamma):H\succ0\}$. The scheduling problem thus lives in
the fixed, finite-dimensional space $V\cong\mathbb{R}^{p(p+1)}$, no matter how complex the
measure $\mu$ (Figure~\ref{fig:caratheodory}).

\begin{figure}[t]
  \centering
  \input{figures/caratheodory_tikz}
  \caption{\textbf{Convex-geometry picture of the atomic optimizer.} The operator curve
  $c(t)=(A_t,B_t)$ traces a generally non-convex set in the moment space $V$.
  The barycenter map $\Phi$ carries every sampling measure into the convex hull of the curve,
  and every hull point is realized this way: the shaded region is \emph{both}
  $\Phi(\mathcal{P}([0,T]))$ (green hatching) and $\conv(c([0,T]))$ (solid), the two coinciding. By
  Carath\'eodory's theorem each hull point is a convex combination of at most $d+1$ curve
  points (with $d=\dim V=p(p+1)$; the planar sketch has $d=2$), so the minimizer of
  $\mathcal{J}=F\circ\Phi$, whose argument is the barycenter $(H,\Gamma)$, in orange, is
  realized by an atomic schedule. Two example barycenters $\Phi(\mu_1),\Phi(\mu_2)$, each a
  combination of two curve points, are shown.}
  \label{fig:caratheodory}
\end{figure}

\begin{theorem}[Carathéodory]\label{thm:caratheodory}
Let $S\subset\mathbb{R}^{d}$. Every point of $\operatorname{conv}(S)$ is a convex
combination of at most $d+1$ points of $S$.
\end{theorem}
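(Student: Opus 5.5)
The plan is the classical reduction argument: first write a point of the hull as a finite convex combination, then remove points one at a time while the combination is affinely dependent.

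\emph{Finite representation.} Recall that $\operatorname{conv}(S)$ is the set of all finite convex combinations of points of $S$; this set is convex, contains $S$, and lies in every convex set containing $S$. So any $x\in\operatorname{conv}(S)$ can be written as
\[
x=\sum_{i=1}^{m}\lambda_i s_i,\qquad s_i\in S,\ \lambda_i>0,\ \sum_{i=1}^m\lambda_i=1 .
\]
Among all such representations, take one with $m$ minimal. It suffices to show $m\le d+1$.

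\emph{Affine dependence.} Suppose instead that $m\ge d+2$. Then the $m-1\ge d+1$ vectors $s_2-s_1,\dots,s_m-s_1$ in $\mathbb{R}^d$ are linearly dependent. Hence there are coefficients $\alpha_1,\dots,\alpha_m$, not all zero, with
\[
\sum_i\alpha_i s_i=0,\qquad \sum_i\alpha_i=0 .
\]
Because the $\alpha_i$ sum to zero and are not all zero, at least one of them is strictly positive.

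\emph{Reduction step.} Set
\[
t^\ast=\min\{\lambda_i/\alpha_i:\alpha_i>0\}>0 ,
\]
and define $\lambda_i'=\lambda_i-t^\ast\alpha_i$. These new weights satisfy $\sum_i\lambda_i' s_i=x$ and $\sum_i\lambda_i'=1$. They are also nonnegative:
\begin{itemize}
\item if $\alpha_i\le0$, then $\lambda_i'\ge\lambda_i>0$;
\item if $\alpha_i>0$, then $\lambda_i'\ge 0$ by the choice of $t^\ast$.
\end{itemize}
At an index attaining the minimum, $\lambda_i'=0$. Dropping that point gives a convex representation of $x$ using $m-1$ points of $S$, which contradicts the minimality of $m$. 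Therefore $m\le d+1$.

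The only step needing care is the choice of $t^\ast$, which must keep every weight nonnegative while forcing at least one of them to vanish. The rest is routine linear algebra.

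For the paper's application, apply this with $S=c([0,T])\subset V\cong\mathbb{R}^{p(p+1)}$. A hull point realized as a convex combination of at most $p(p+1)+1$ curve points $c(t_i)$ is then exactly $\Phi\bigl(\sum_i\lambda_i\delta_{t_i}\bigr)$, i.e.\ the barycenter of an atomic schedule.
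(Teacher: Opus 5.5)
Your proof is correct: it is the standard affine-dependence reduction argument for Carath\'eodory's theorem. The only cosmetic point is that several weights may vanish at once in the reduction step, which only strengthens the contradiction with the minimality of $m$. The paper gives no proof of this statement; it cites it as a classical fact and applies it with $S=c([0,T])\subset\mathbb{R}^{p(p+1)}$ in the proof of Proposition~\ref{prop:atomic_optimizer}, exactly as in your final paragraph.
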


\begin{proposition}[Existence of an atomic optimizer]\label{prop:atomic_optimizer}
Assume that $t \mapsto A_t$ and $t \mapsto B_t$ are continuous on $[0,T]$, and that $A_t \succ 0$ for every $t \in [0,T]$. Then the coupled ELBO minimization
$\min_{\mu \in \mathcal{P}([0,T])} \mathcal{J}(\mu)$
admits an atomic optimizer $\mu^*_{\mathrm{at}} = \sum_{i=1}^r w_i \delta_{t_i}$ with $r \le p(p+1)+1$ atoms.
\end{proposition}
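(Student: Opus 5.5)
The plan has two parts. First, show that a minimizer $\mu^\star$ of $\mathcal{J}=F\circ\Phi$ over $\mathcal{P}([0,T])$ exists. Second, replace it by an atomic measure with the same barycenter $\Phi(\mu^\star)$; since $\mathcal{J}$ depends on $\mu$ only through $\Phi(\mu)$, this gives the claim.

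\textbf{Step 1 (domain of $F$).} Because $t\mapsto A_t$ is continuous and $A_t\succ0$ on the compact set $[0,T]$, there is $\lambda>0$ with $A_t\succeq\lambda I$ for all $t$. Hence $H[\mu]\succeq\lambda I$ for every probability measure $\mu$. The image $\Phi(\mathcal{P}([0,T]))$ therefore lies in the region $\{H\succeq\lambda I\}$, where $F$ is finite and continuous, since matrix inversion is continuous there.

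\textbf{Step 2 (existence of a minimizer).} $\mathcal{P}([0,T])$ is weak-$*$ compact by Prokhorov/Riesz. The map $\Phi$ is weak-$*$ continuous, because $c$ is continuous and bounded, so each coordinate $\int c_k\,d\mu$ is continuous. So $\mathcal{J}=F\circ\Phi$ is continuous on a compact set and attains its minimum at some $\mu^\star$.

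An equivalent route avoids Prokhorov. Set $K:=\Phi(\mathcal{P}([0,T]))$. One shows $K=\conv(c([0,T]))$: the inclusion $\supseteq$ holds because finite atomic measures realize convex combinations, and $\subseteq$ holds because the closed convex hull contains all barycenters. Since $c([0,T])$ is compact in $\mathbb{R}^{d}$, its convex hull is compact (Carathéodory gives it as the continuous image of a compact simplex times $[0,T]^{d+1}$), hence already closed, and $F$ attains its minimum on it.

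\textbf{Step 3 (atomic reduction).} Since $v^\star:=\Phi(\mu^\star)\in\conv(c([0,T]))$, Theorem~\ref{thm:caratheodory} with $d=p(p+1)$ writes $v^\star=\sum_{i=1}^r w_i\,c(t_i)$ with $r\le p(p+1)+1$, $w_i\ge0$ and $\sum_i w_i=1$. Define $\mu^\star_{\mathrm{at}}:=\sum_i w_i\delta_{t_i}$. Then $\Phi(\mu^\star_{\mathrm{at}})=v^\star$, so $\mathcal{J}(\mu^\star_{\mathrm{at}})=\mathcal{J}(\mu^\star)$, and $\mu^\star_{\mathrm{at}}$ is also optimal.

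\textbf{Main obstacle.} The delicate point is the inclusion $\Phi(\mu)\in\conv(c([0,T]))$ for a general Borel (possibly diffuse) measure $\mu$. Carathéodory only applies to the convex hull itself, not its closure, so this inclusion is needed before Theorem~\ref{thm:caratheodory} can be used. I would argue it by separation. If $\Phi(\mu)\notin C:=\conv(c([0,T]))$, which is compact and convex, there is a linear functional $\ell$ with $\ell(\Phi(\mu))>\max_{C}\ell$. But $\ell(\Phi(\mu))=\int\ell(c(t))\,d\mu(t)\le\max_t\ell(c(t))\le\max_C\ell$, a contradiction. Compactness of $C$, which follows from Carathéodory plus continuity of $c$, is what makes the closed and non-closed hulls coincide.
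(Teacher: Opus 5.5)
Your proof is correct and follows the paper's argument. It obtains a uniform lower bound $H[\mu]\succeq\lambda I$ from $\min_t\lambda_{\min}(A_t)>0$, identifies $\Phi(\mathcal{P}([0,T]))$ with the compact set $\conv(c([0,T]))$ on which $F$ attains its minimum, and then applies Carath\'eodory in dimension $p(p+1)$. The differences are only in how individual steps are justified: you prove $\Phi(\mu)\in\conv(c([0,T]))$ with a separating hyperplane where the paper approximates the Bochner integral by Lebesgue sums and uses closedness of the compact hull, and you also offer a Prokhorov-compactness route to a minimizer on the measure side.
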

\begin{proof}
By~\eqref{eq:abstract_problem} it suffices to minimize $F$ over the image
$\Phi(\mathcal{P}([0,T]))$. This image is exactly $\operatorname{conv}(c([0,T]))$: any
finite convex combination $\sum_i\lambda_i c(t_i)$ is realized by the atomic measure
$\sum_i\lambda_i\delta_{t_i}$, and conversely every barycenter $\Phi(\mu)$ lies in
$\operatorname{conv}(c([0,T]))$ (approximate the Bochner integral by Lebesgue sums of $c$,
each a convex combination of values of $c$; since $c([0,T])$ is compact---a continuous image
of $[0,T]$---its convex hull is compact, hence closed, so the limit stays in
it). To ensure the minimum is attained inside the region where
$F$ is continuous, note that $A_t\succ0$ together with continuity of
$t\mapsto\lambda_{\min}(A_t)$ on the compact interval $[0,T]$ gives
$\varepsilon:=\min_{t\in[0,T]}\lambda_{\min}(A_t)>0$, whence
$H[\mu]=\int A_t\,d\mu\succeq\varepsilon I$ for \emph{every} $\mu\in\mathcal{P}([0,T])$.
Every point of $\operatorname{conv}(c([0,T]))$ is such a barycenter, so its $H$-block is
uniformly positive definite and $F$ is continuous on this compact set; it therefore attains
its minimum at some $y^\star\in\operatorname{conv}(c([0,T]))$. Applying
Theorem~\ref{thm:caratheodory} with $S=c([0,T])\subset\mathbb{R}^{p(p+1)}$ writes
$y^\star=\sum_{i=1}^r\lambda_i c(t_i)$ with $r\le p(p+1)+1$; the atomic measure
$\mu^*_{\mathrm{at}}=\sum_i\lambda_i\delta_{t_i}$ satisfies $\Phi(\mu^*_{\mathrm{at}})=y^\star$
and hence $\mathcal{J}(\mu^*_{\mathrm{at}})=F(y^\star)=\min_\mu\mathcal{J}(\mu)$.
\end{proof}

The bound $r\le p(p+1)+1$ reflects the dimension of the ambient moment space $V$, not the
geometry of the specific curve $c$; it guarantees \emph{existence} of an atomic minimizer
but does not imply that every minimizer is atomic.

Proposition~\ref{prop:atomic_optimizer} reduces coupled-regime schedule optimization to a
finite-dimensional problem: one jointly optimizes over atom locations
$t_1,\dots,t_r\in[0,T]$ and weights $w_1,\dots,w_r\ge0$ with $\sum_i w_i=1$, with
$r\le p(p+1)+1$ as a worst-case upper bound.
In practice, the effective number of atoms is far smaller (often single digits) because the
curve $t\mapsto(A_t,B_t)$ is low-rank in realistic network geometries, and the optimization
over $(t_i,w_i)$ is tractable by standard gradient-based methods.

Temporal specialization decouples the noise
levels, block-diagonalizing the curvature so its inverse distributes across times, and
recovers the smooth entropic schedule
$m^*(t)\propto\sqrt{\dot{\mathrm{Ent}}[x_0|x_t]}$ as the decoupled limit of the
coupled theory.

%% file: figures/caratheodory_tikz.tex
\begin{tikzpicture}[>=Stealth, line join=round]
  \draw[->, gray] (0,0) -- (9,0);
  \draw[->, gray] (0,0) -- (0,5.0);

  \begin{scope}[yshift=-1.9cm]
  \node[anchor=west, gray!50!black] at (0.0,6.6) {$V=\mathbb{S}^p\times\mathbb{S}^p$};

  \def\outer{plot[smooth, tension=0.7] coordinates
    {(2.2,3.0) (1.7,4.2) (2.4,5.3) (3.8,5.9) (5.4,5.8) (6.6,4.9) (6.9,3.7) (6.2,2.9)}}
  \def\dentopen{plot[smooth, tension=0.7] coordinates
    {(6.2,2.9) (5.0,3.6) (4.4,4.1) (3.6,3.7) (2.6,3.15)}}

  \fill[blue!12] \outer -- cycle;
  \fill[pattern=north east lines, pattern color=green!50!black] \outer -- cycle;
  \draw[dashed, blue!55] (6.2,2.9) -- (2.2,3.0);

  \draw[very thick, black] \outer;
  \draw[very thick, black] \dentopen;
  \node[black, anchor=south] at (5.2,6.05) {$c(t)$};

  \node[green!50!black, anchor=east] at (-0.1,4.8) {$\Phi(\mathcal{P}([0,T]))$};
  \draw[green!50!black] (-0.1,4.7) .. controls (1.0,4.2) .. (2.3,4.2);
  \node[blue!60!black, anchor=west] at (7.5,2.7) {$=\ \conv\!\big(c([0,T])\big)$};
  \draw[blue!55] (7.5,2.8) .. controls (6.6,3.0) .. (5.7,3.2);

  \coordinate (a1) at (2.4,5.3);  \coordinate (b1) at (6.9,3.7);
  \coordinate (a2) at (3.8,5.9);  \coordinate (b2) at (6.6,4.9);
  \foreach \p in {a1,b1,a2,b2} \fill[black] (\p) circle (1.4pt);
  \node[font=\small, anchor=east]  at (2.3,5.35) {$c(t_1)$};
  \node[font=\small, anchor=west]  at (7.0,3.65) {$c(t_2)$};
  \node[font=\small, anchor=south] at (3.6,6.0)  {$c(t_3)$};
  \node[font=\small, anchor=west]  at (6.7,5.0)  {$c(t_4)$};

  \draw[dashed, black!70] (a1) -- (b1);
  \fill[orange!85!black] (4.65,4.5) circle (2pt);
  \node[orange!75!black, anchor=south] at (4.65,4.58) {$\Phi(\mu_1)$};
  \draw[dashed, black!70] (a2) -- (b2);
  \fill[orange!85!black] (5.2,5.4) circle (2pt);
  \node[orange!75!black, anchor=south west] at (5.26,5.45) {$\Phi(\mu_2)$};
  \end{scope}

  \node[anchor=west] at (0.0,-0.7)
    {$\Phi:\mathcal{P}([0,T])\to V,
      \qquad
      \mathcal{J}(\mu)=F\big({\color{orange!75!black}H,\Gamma}\big)
      =\mathrm{Tr}\!\big(\bar A\,H^{-1}\Gamma\,H^{-1}\big)$};
\end{tikzpicture}

%% file: sections/uncoupled_limit.tex
\section{From atomic to entropic schedules: the role of temporal coupling}
\label{sec:uncoupled}

As shown in Section~\ref{sec:asymptotic_scaling}, the cross-time operator
$P_{\tau,t}$~\eqref{eq:projection_operator_general} controls how gradient noise at
training time~$t$ leaks into prediction error at evaluation time~$\tau$. When $P_{\tau,t}$
is nonzero across a wide range of $(t,\tau)$ pairs, the schedule optimization is
inherently nonlocal and the optimal schedule is atomic
(Section~\ref{sec:optimal_scheduling}). We now analyze the opposite regime: when
$P_{\tau,t}$ localizes to the diagonal $t=\tau$, the objective simplifies dramatically
and the optimal schedule becomes a smooth density.

\subsection{Localization of the cross-time kernel}

Recall from~\eqref{eq:kernel_error_general} that the prediction error at evaluation
time~$\tau$ is an integral over training times~$t$, coupled by the kernel
$P_{\tau,t}$~\eqref{eq:projection_operator_general}.
Suppose the Jacobians at different times use \emph{orthogonal} parameter subspaces:
$\mathrm{Im}(J_t^\top)$ and $\mathrm{Im}(J_\tau^\top)$ are orthogonal for $t\neq\tau$.
This is the condition of \emph{temporal specialization}, the network features used to
denoise at one noise level are disjoint from those used at a distant noise level.
Write $V_s:=\mathrm{Im}(J_s^\top)$ for the parameter subspace active at time~$s$, so the
assumption is $V_t\perp V_\tau$ for $t\neq\tau$. We claim that the cross-time operator
$P_{\tau,t}=J_t\,H[\mu]^{-1}A_\tau\,H[\mu]^{-1}J_t^\top$
(Eq.~\eqref{eq:projection_operator_general}) then vanishes for $t\neq\tau$. The argument
hinges on the inner $H[\mu]^{-1}$: orthogonality of the Jacobian ranges alone is not
enough, because that inverse could in principle rotate the range of $J_t^\top$ out of
$V_t$ before $A_\tau$ acts on it. The point is that it does not.

First, each curvature block is supported on its own subspace. Since
$A_s=\mathbb{E}[J_s^\top J_s]$, any $w\perp V_s$ lies in $\ker J_s$, so $A_s w=0$; hence
$A_s$ has range in $V_s$ and annihilates every orthogonal direction. The same holds for
$B_s=\mathbb{E}[J_s^\top\Sigma_s J_s]$, whose range also lies in $V_s$. Consequently,
because each $A_s$ acts within $V_s$ and the $V_s$ are mutually orthogonal, the integral
$H[\mu]=\int A_s\,d\mu(s)$ has no entries linking distinct subspaces: in an orthonormal
basis adapted to $\bigoplus_s V_s$ it is \emph{block-diagonal}. The inverse of a
block-diagonal matrix is block-diagonal, each block being the inverse of the corresponding
block of $H[\mu]$; hence $H[\mu]^{-1}$ is block-diagonal in the same basis and maps each
subspace $V_s$ into itself.

This block-diagonality is the missing ingredient. Reading $P_{\tau,t}$ right-to-left,
$J_t^\top$ has range $V_t$, and $H[\mu]^{-1}$ preserves it, so $H[\mu]^{-1}J_t^\top$ still
has its range in $V_t$. For $\tau\neq t$ the evaluation curvature $A_\tau$ annihilates
$V_t$ (because $V_t\perp V_\tau$), whence $A_\tau\,H[\mu]^{-1}J_t^\top=0$ and therefore
$P_{\tau,t}=0$. The same subspace bookkeeping gives the trace form:
$\mathrm{Tr}(A_\tau\,H[\mu]^{-1}B_t\,H[\mu]^{-1})
=\mathrm{Tr}\!\bigl((H[\mu]^{-1}A_\tau H[\mu]^{-1})\,B_t\bigr)$ by cyclicity, a product of
an operator supported on $V_\tau$ with one supported on $V_t$, which vanishes when
$V_t\perp V_\tau$. The integral over~$t$ in~\eqref{eq:kernel_error_general} therefore
collapses to the single point $t=\tau$.

\paragraph{Exact orthogonality is an idealization.}
Taken literally, the continuum decomposition $\bigoplus_{s\in[0,T]} V_s$ invoked above
cannot exist. Pairwise-orthogonal nonzero subspaces of~$\mathbb{R}^p$ have dimensions
summing to at most~$p$, so there can be at most~$p$ of them; an uncountable family
$\{V_\tau\}_{\tau\in[0,T]}$ that is pairwise orthogonal must therefore satisfy
$V_\tau=\{0\}$ for all but at most~$p$ times~$\tau$, i.e.\ the model would learn nothing at
almost every noise level. The localization above is thus a limiting idealization. Its
finite-bin form imposes orthogonality on a \emph{finite} partition into $B\le p$ bins,
where the block-diagonal argument is exact; the continuum picture is
recovered as the partition is refined. Refining past~$p$ bins necessarily relaxes exact to
\emph{approximate} orthogonality: features at times separated by more than a specialization
scale are nearly orthogonal, so $P_{\tau,t}$ concentrates near the diagonal rather than
vanishing exactly off it. The residual off-diagonal coupling is small when specialization
is strong.

\subsection{The independent-learner limit via binned orthogonality}
\label{subsec:binned_orth}

The localization argument above is heuristic, it invokes a continuum of orthogonal
subspaces. Its finite-bin counterpart discretizes the time axis and imposes orthogonality on
finitely many bins. We partition $[0,T]$ into disjoint bins $\{I_b\}_{b=1}^B$ with widths
$\Delta_b:=|I_b|$ and representative times $t_b\in I_b$; the sampling measure induces bin
masses $\mu_b:=\mu(I_b)$.

\paragraph{Assumption B (Binned orthogonality of time features).}
There exist \emph{orthogonal} projectors
$\{\Pi_b\}_{b=1}^B\subset\mathbb{R}^{p\times p}$ such that for all $t\in I_b$ and all
$x_t$,
\begin{equation}
  J_t(x_t) \;=\; J_t(x_t)\,\Pi_b,
  \qquad
  \Pi_b\Pi_{b'}=0 \ \ \text{for } b\neq b',
  \label{eq:sec7_binned_orth}
\end{equation}
so the parameter space splits into the orthogonal direct sum
$\mathbb{R}^p=\bigoplus_{b=1}^B \mathrm{Im}(\Pi_b)$. This is the legitimate finite-rank
realization of the heuristic decomposition $\bigoplus_s V_s$, necessarily with $B\le p$
blocks. Perturbations outside $\mathrm{Im}(\Pi_b)$ leave the predictor at times in~$I_b$
unchanged to first order, so the global curvature $H[\mu]$ is block-diagonal across bins
and its inverse is the corresponding block-diagonal inverse: the within-bin problems
decouple exactly.

Write the bin-conditioned curvature and noise operators
\[
  H_b:=\mathbb{E}[G_t(x_t)\mid t\in I_b],
  \qquad
  \Gamma_b:=\mathbb{E}[J_t(x_t)^\top\Sigma_t(x_t)J_t(x_t)\mid t\in I_b].
\]
Under Assumption~B the linearized SGD recursion splits into one independent blockwise
recursion per active bin with no cross-talk, so the Polyak--Ruppert CLT of
Section~\ref{sec:asymptotic_scaling}~\eqref{eq:parameter_clt_general} applies within each
block under $H[\mu]\to H_b$, $\Gamma[\mu]\to\Gamma_b$. Each of the $K$ updates draws its
time independently from~$\mu$, so the expected number allocated to bin~$b$ is
\[
N_b(K)
:=
\mathbb{E}\Bigl[\textstyle\sum_{n\le K}\mathbf{1}\{t_n\in I_b\}\Bigr]
=K\mu_b,
\]
about which the empirical count concentrates by the law of large numbers; the averaged
block-$b$ iterate thus has effective sample size $N_b(K)=K\mu_b$.

To see how the bin masses enter the error constant, evaluate the kernel form of the
per-time error~\eqref{eq:kernel_error_general} at the representative time $\tau=t_b$,
\[
\lim_{K\to\infty} K\,\mathscr{L}_{t_b}(K;\mu)
=
\int \mathbb{E}_{x_t\mid t}\bigl[\mathrm{Tr}(P_{t_b,t}\,\Sigma_t)\bigr]\,d\mu(t),
\qquad
P_{t_b,t}=J_t\,H[\mu]^{-1}A_{t_b}H[\mu]^{-1}J_t^\top .
\]
The evaluation curvature $A_{t_b}$ is supported on $\mathrm{Im}(\Pi_b)$, while $J_t$ (for
$t\in I_c$) is supported on $\mathrm{Im}(\Pi_c)$; since $H[\mu]^{-1}$ is block-diagonal,
$P_{t_b,t}=0$ unless $t\in I_b$, and the integral collapses to bin~$b$. The block
restrictions are then exact by definition of the bin averages:
$H[\mu]|_b=\int_{I_b}A_t\,d\mu=\mu_b H_b$ and
$\int_{I_b}\mathbb{E}[J_t^\top\Sigma_t J_t]\,d\mu=\mu_b\Gamma_b$, so
$H[\mu]^{-1}|_b=\mu_b^{-1}H_b^{-1}$. The \emph{only} approximation is that the curvature is
roughly constant across a bin, $A_{t_b}\approx H_b$ (exact in the fine-bin limit); with it,
cyclic invariance of the trace gives
\[
\lim_{K\to\infty} K\,\mathscr{L}_{t_b}(K;\mu)
=
\mu_b^{-1}\,\mathrm{Tr}\!\bigl(A_{t_b}H_b^{-1}\Gamma_b H_b^{-1}\bigr)
\;\approx\;
\mu_b^{-1}\,\mathrm{Tr}\!\left(H_b^{-1}\Gamma_b\right).
\]
Dividing by $K$, the bin-$b$ prediction error is
\begin{equation}
  \mathscr{L}_b(K;\mu)
  \;=\;
  \frac{1}{K\,\mu_b}\,
  \mathrm{Tr}\!\left(H_b^{-1}\Gamma_b\right)
  \;+\; o(K^{-1}),
  \label{eq:block_scaling_main}
\end{equation}
the bin-level analogue of the coupled error
formula~\eqref{eq:general_prediction_error}: each bin's prediction error scales as
$1/(K\mu_b)$ times a bin-specific difficulty constant.

\begin{proposition}[Bin-level optimal schedule in the independent-learner regime]%
  \label{prop:continuum_limit}
Under Assumptions~A and~B, define the \emph{bin difficulty}
$
  \alpha_b
  \;:=\;
  \dot{\mathrm{SNR}}(t_b)\,\mathrm{Tr}\!\left(H_b^{-1}\Gamma_b\right)\,\Delta_b,
  \quad b=1,\dots,B.
$
Assume $\alpha_b>0$ for all active bins. Then the ELBO-weighted bin objective
$
  \sum_{b=1}^B {\alpha_b}/{\mu_b}
$
subject to
$
  \mu_b>0,\ \sum_{b=1}^B\mu_b=1
$
has unique minimizer
$
  \mu_b^*
  =
  {\sqrt{\alpha_b}}\big/{\sum_{j=1}^B \sqrt{\alpha_j}}.
$
\end{proposition}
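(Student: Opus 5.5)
The plan is to treat this as a finite-dimensional constrained convex problem and combine an explicit Lagrange/KKT computation with a Cauchy--Schwarz certificate. That certificate gives global optimality and uniqueness in one stroke.

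First I would record the structure of the problem. The feasible set $\{\mu\in\mathbb{R}^B:\mu_b>0,\ \sum_b\mu_b=1\}$ is the relative interior of the simplex and is convex. Since every $\alpha_b>0$, each map $\mu_b\mapsto\alpha_b/\mu_b$ is strictly convex on $(0,\infty)$, so $f(\mu)=\sum_b\alpha_b/\mu_b$ is strictly convex. Any minimizer is therefore unique, and it suffices to exhibit one point that achieves the global minimum.

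For the candidate, I would form the Lagrangian $\sum_b\alpha_b/\mu_b+\lambda(\sum_b\mu_b-1)$. Stationarity gives $-\alpha_b/\mu_b^2+\lambda=0$, hence $\mu_b=\sqrt{\alpha_b/\lambda}$. Normalization fixes $\sqrt{\lambda}=\sum_j\sqrt{\alpha_j}$, which yields $\mu_b^*=\sqrt{\alpha_b}/\sum_j\sqrt{\alpha_j}$. This point is strictly positive, so it is feasible, and the optimal value is $f(\mu^*)=(\sum_b\sqrt{\alpha_b})^2$.

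To certify global optimality without relying on KKT sufficiency, I would apply Cauchy--Schwarz:
\[
\Bigl(\sum_b\sqrt{\alpha_b}\Bigr)^2=\Bigl(\sum_b\sqrt{\alpha_b/\mu_b}\,\sqrt{\mu_b}\Bigr)^2\le\Bigl(\sum_b\frac{\alpha_b}{\mu_b}\Bigr)\Bigl(\sum_b\mu_b\Bigr)=\sum_b\frac{\alpha_b}{\mu_b}.
\]
This holds for every feasible $\mu$. Equality holds iff $\sqrt{\alpha_b/\mu_b}\propto\sqrt{\mu_b}$, that is $\mu_b\propto\sqrt{\alpha_b}$. Together with the normalization, this pins down $\mu^*$ and gives uniqueness directly, independent of the strict-convexity remark.

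The main obstacle is not the optimization itself but making sure the objective $\sum_b\alpha_b/\mu_b$ really is the ELBO-weighted bin objective. This requires a Riemann-sum discretization of $\int\dot{\mathrm{SNR}}(\tau)\,\lim_K K\mathscr{L}_\tau\,d\tau$ using \eqref{eq:block_scaling_main}. Since the statement takes the bin objective as given, I would only remark on that identification and not reprove it. One also needs $\alpha_b>0$ for all bins; otherwise the infimum is not attained in the open simplex, and the mass on zero-difficulty bins would be sent to zero.
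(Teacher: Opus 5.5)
Your proposal is correct and matches the paper's proof: strict convexity of the separable objective on the open simplex, the stationarity condition $\alpha_b/\mu_b^2=\lambda$, normalization giving $\sqrt{\lambda}=\sum_j\sqrt{\alpha_j}$, and uniqueness from strict convexity. Your Cauchy--Schwarz bound $\bigl(\sum_b\sqrt{\alpha_b}\bigr)^2\le\sum_b\alpha_b/\mu_b$ is a small but useful addition, since it certifies global optimality and uniqueness directly rather than relying on the fact that a stationary point of a convex function over an affine set is a global minimizer.
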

\begin{proof}
The objective $\sum_{b}\alpha_b/\mu_b$ is separable and strictly convex on the simplex
interior, since $\partial^2(\alpha_b/\mu_b)/\partial\mu_b^2=2\alpha_b/\mu_b^3>0$ for
$\mu_b>0$. Introducing a multiplier $\lambda$ for the constraint $\sum_b\mu_b=1$, the
stationarity condition $-\alpha_b/\mu_b^2+\lambda=0$ gives $\mu_b=\sqrt{\alpha_b/\lambda}$,
and normalization fixes $\lambda=\bigl(\sum_{j}\sqrt{\alpha_j}\bigr)^2$, hence
$\mu_b^*=\sqrt{\alpha_b}\big/\sum_{j}\sqrt{\alpha_j}$; strict convexity makes this the unique
minimizer. For uniform bins of width $\Delta=T/B$ the corresponding density is
$m^*(t_b)=\mu_b^*/\Delta\propto\sqrt{\dot{\mathrm{SNR}}(t_b)\,\mathrm{Tr}(H_b^{-1}\Gamma_b)}$,
which converges as $B\to\infty$ to
$m^*(\tau)\propto\sqrt{\dot{\mathrm{SNR}}(\tau)\,\mathrm{Tr}(A_\tau^{-1}B_\tau)}$.
\end{proof}

\paragraph{From the bin difficulty to the entropy rate.}
The optimal masses depend on the data only through the bin difficulties
$\alpha_b=\dot{\mathrm{SNR}}(t_b)\,\mathrm{Tr}(H_b^{-1}\Gamma_b)\,\Delta_b$, which are still
abstract operator traces. Two reductions, used in the rest of the section, make them
concrete. First, \emph{feature--noise decoupling}
(Appendix~\ref{sec:feature_noise_decoupling}): the per-bin error sees the Jacobian only
through the projector $P_t=J_t(\mathbb{E}[G_t])^{-1}J_t^\top$, which in the random-row model
of Appendix~\ref{supp sec: random matrix} has isotropic expectation $\mathbb{E}[P_t]=r\,I$
with $r=p/d$; when $P_t$ is weakly correlated with the residual covariance,
\begin{equation}\label{eq:fnd_approx_main}
\mathrm{Tr}(H_b^{-1}\Gamma_b)\;\approx\; r\cdot\mathrm{MMSE}(t_b),
\end{equation}
and Marchenko--Pastur concentration (Appendix~\ref{supp sec: random matrix}) bounds the
remainder uniformly in time. Second, the \emph{conditional de~Bruijn identity}
(Appendix~\ref{supp sec: conditional entropy}),
\begin{equation}\label{eq:debruijn_main}
\dot{\mathrm{Ent}}[x_0\mid x_t]
\;=\;
\dot{\mathrm{SNR}}(t)\,\mathrm{MMSE}(t),
\end{equation}
identifies the SNR-weighted MMSE with the generative entropy rate, equivalently, with the
rate $-\tfrac{d}{dt}I(X_0;X_t)$ at which mutual information between the clean sample and its
noisy version decays along the forward process. Combining the two,
$\alpha_b\approx r\,\dot{\mathrm{Ent}}[x_0\mid x_{t_b}]\,\Delta_b$: up to the constant~$r$,
the bin difficulty \emph{is} the generative entropy rate.

\subsection{The entropic schedule}

By Proposition~\ref{prop:continuum_limit} the optimal sampling density is
$m^*(\tau)\propto\sqrt{\alpha(\tau)}$, with local difficulty
$\alpha(\tau)=\dot{\mathrm{SNR}}(\tau)\,\mathrm{Tr}(A_\tau^{-1}B_\tau)$. By the
feature--noise decoupling and de~Bruijn reductions just established
(\eqref{eq:fnd_approx_main}--\eqref{eq:debruijn_main}),
$\alpha(\tau)\approx r\,\dot{\mathrm{Ent}}[x_0\mid x_\tau]$, so the square-root law becomes
the \emph{entropic schedule}
\begin{equation}\label{eq:entropic_schedule}
m^*(t)\;\propto\;\sqrt{\dot{\mathrm{Ent}}[x_0\mid x_t]}.
\end{equation}
This is the central practical prescription of the paper: allocate training effort in
proportion to the square root of the generative entropy rate. This follows from the famous I-MMSE formula first derived in \citep{guo2005mutual} and discussed in \citep{kong2023information} and \citep{stancevic2026entropic, stanvcevic2026information} in the context of generative diffusion. 

\subsection{Unifying view: the Carath\'eodory dimension}

The transition from atomic to smooth schedules can now be understood through a single
geometric lens. In the coupled regime, the objective depends on~$\mu$ through the linear
moment map $\mu\mapsto(H[\mu],\Gamma[\mu])\in\mathbb{S}^p\times\mathbb{S}^p$, whose
image is a convex body of dimension at most $p(p+1)$. Carath\'eodory's theorem then
guarantees that the optimizer is supported on at most $p(p+1)+1$ points, an atomic
schedule. The smooth limit is precisely where this finite-dimensional picture breaks down.
A parameter space of dimension~$p$ admits at most~$p$ mutually orthogonal feature blocks
(Section~\ref{subsec:binned_orth}); realizing a genuine \emph{continuum} of orthogonal
subspaces, one degree of temporal specialization per noise level, would require the
parameter count to grow without bound. In that $p\to\infty$ limit the Carath\'eodory bound
$p(p+1)+1$ diverges, the effective moment space becomes infinite-dimensional with one
scalar degree of freedom (the local density $m(\tau)$) per noise level, and the atomic
optimizer gives way to the smooth entropic density.

The two cases are thus endpoints of a coupling-bandwidth spectrum, indexed by how many
orthogonal feature blocks the network supports.

%% file: sections/experiments.tex
\section{Experiments}\label{sec:experiments}

\begin{figure*}[t]
    \begin{minipage}[t]{0.48\textwidth}
        \centering
        \includegraphics[width=\linewidth]{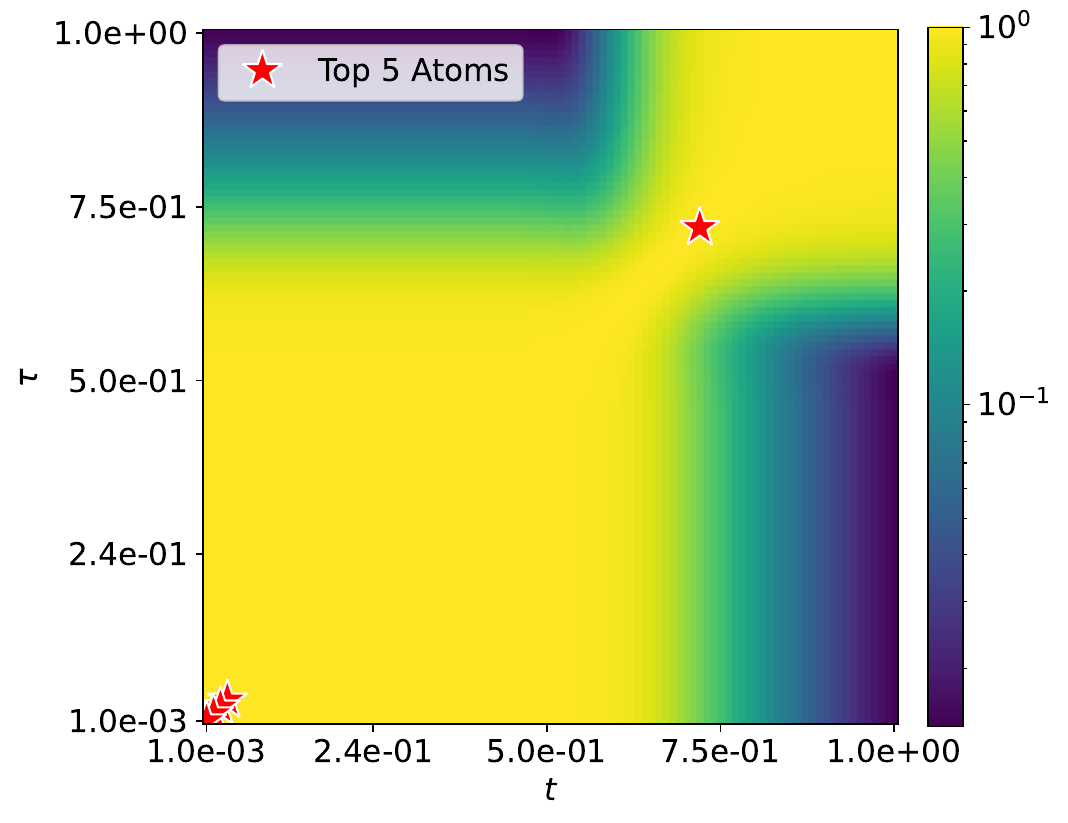}
        \subcaption{Parametric Model - Dirac Mixture of 2 Components}
    \end{minipage}\hfill
    \begin{minipage}[t]{0.48\textwidth}
        \centering
        \includegraphics[width=\linewidth]{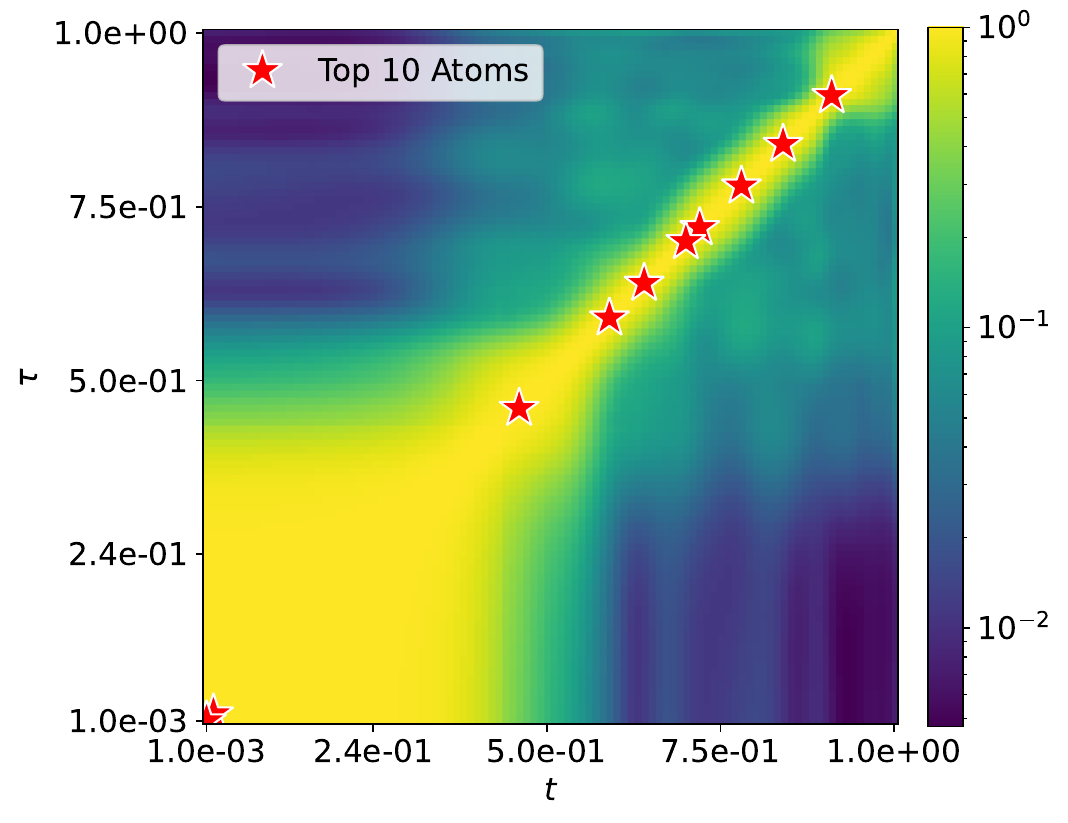}
        \subcaption{Neural Network - Dirac Mixture of 2 Components}
        \label{fig:coupling_main_delta_2_mlp}
    \end{minipage}

    \begin{minipage}[t]{0.48\textwidth}
        \centering
        \includegraphics[width=\linewidth]{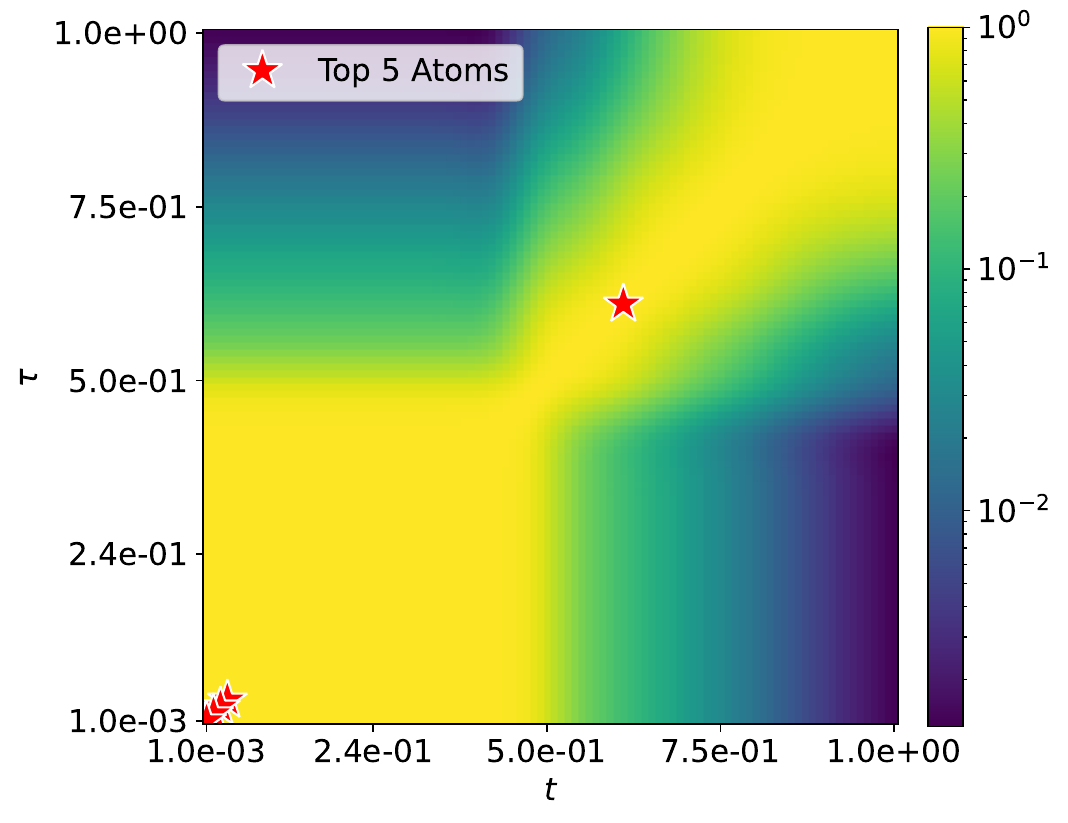}
        \subcaption{Neural Network - Dirac Mixture of 4 Components}
    \end{minipage}\hfill
    \begin{minipage}[t]{0.48\textwidth}
        \centering
        \includegraphics[width=\linewidth]{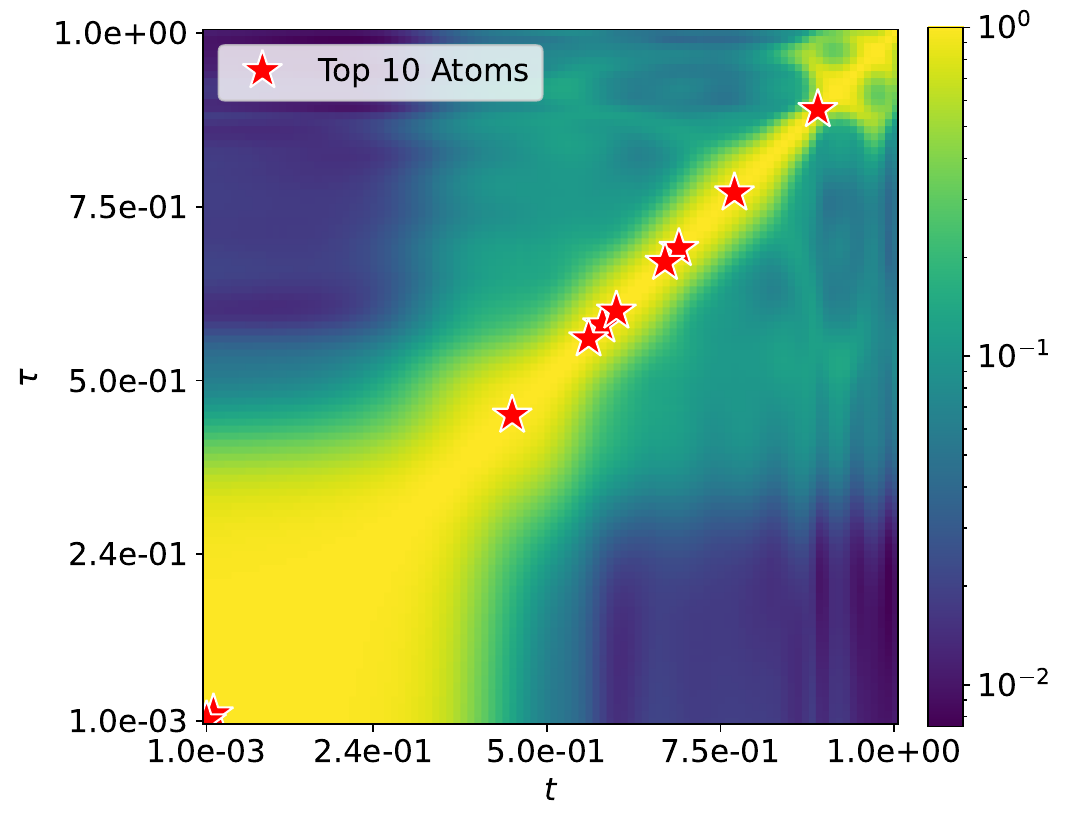}
        \subcaption{Neural Network - Dirac Mixture of 4 Components}
        \label{fig:coupling_main_delta_4_mlp}
    \end{minipage}
    \caption{\textbf{Cross-time feature coupling and optimized schedule atoms.} The heatmaps display the strength of feature coupling between different times using the expected trace norm of \cref{eq:projection_operator_general} with a uniform time schedule. Heat maps have been normalized to have ones on the diagonal. Overlaid on the diagonal of each heatmap are the corresponding time locations of the most active atoms of the optimized schedule. The setup compares two different architectures: a fully parametric model and a neural network, trained on 1D Dirac mixture distributions consisting of either two or four components.}
    \label{fig:feature_coupling_and_sched}
\end{figure*}

We organise the experiments in two stages. We first give controlled low-dimensional experiments where the atomic schedule can be computed exactly and ground-truth scores are available. These experiments are meant to provide validation for the theory, not to provide practical training solutions. Then, we then turn to real-data benchmarks across discrete and continuous image domains.

\textbf{Low-dimensional setup.} We validate the two structural predictions of the theory, %
(i) in the coupled regime the optimizer is atomic, and (ii) in the decoupled regime the optimum
is the smooth entropic density $m^*(t)\propto\sqrt{\dot{\mathrm{Ent}}[x_0|x_t]}$, on a range of
controlled settings of increasing complexity: one-dimensional Dirac mixtures with $2$, $3$, and $4$
components (both a fully parametric model with closed-form score and a $50$k-parameter
\gls{MLP}), the $2$D Swiss Roll and Moons manifolds (a $50$k-parameter \gls{MLP}), and MNIST (a
$14$M-parameter 2D U-Net); all use a variance-exploding diffusion. Following the frozen-feature
regime of the coupled theory, for the neural models we optimize the schedule of the last layer
only, $32$ parameters for the \gls{MLP}, the $288$-weight final convolutional kernel for the
U-Net, treating earlier layers as fixed feature extractors. For each setting we train to
convergence, perturb the target parameters with Gaussian noise to sit near the optimum (as the
local theory requires), and retrain under each schedule over $30$ seeds ($20$ for MNIST),
reporting the negative log-likelihood in bits per dimension (BPD) with its \gls{SEM}. Beyond the
uniform baseline we also compare against the CosMap and Logit-Normal heuristics of
\citet{esser2024scaling}. The full protocol, including the closed-form Dirac-mixture score, is
given in \cref{appx:experiments}.

\begin{figure}[tb]
  \centering
  \includegraphics[width=0.42\textwidth]{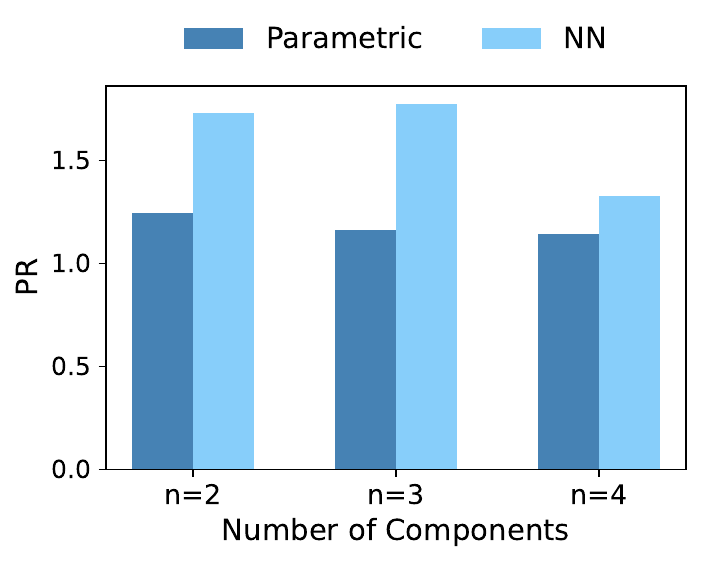}
  \caption{Participation ratio \(\sum_i\nicefrac{1}{m_i^2}\) comparing Parametric vs. neural networks for Dirac Mixtures of \(n\) Components.}
  \label{fig:dirac_pr_bar}
\end{figure}
\textbf{Optimized schedules.} \Cref{fig:toy_schedules} overlays, for the four headline settings, the optimized coupled schedule with the entropic proxy. By inspecting the optimized noise scheduling functions, we observe that, in most datasets, a small set of atoms tend to receive exponentially larger weights, while the remaining probability mass organizes in quasi-continuous densities, roughly following the entropy profiles. \Cref{tab:sparsity_schedule} quantifies this phenomenon in terms of participation ratio of the schedule \( \mathrm{PR} = \nicefrac{1}{\sum_i m_i^2}\) and the coverage of the Top-K atoms, showing that across all our experiments we consistently obtain very few active atoms. In most settings, we see a very large isolated atom at very low noise levels, where Fisher information is very high. \Cref{fig:feature_coupling_and_sched} contextualizes the placement of the remaining atoms through the lens of cross-time feature coupling, $P_{\tau,t}$. While the parametric model is structurally fully coupled, we observe a remarkable drop in functional coupling between the high and low-noise regimes. This block structure reflects a fundamental rotation in the active subspace of the Jacobian: at high noise, parameter gradients push to globally shift the center of mass, whereas at low noise, they isolate specific mixture components. Consequently, parameter perturbations at high $t$ project weakly onto the denoiser objective at low $\tau$. Because of this, the operator curve $c(t)=(A_t,B_t)$, which remains continuous due to residual cross-talk in $P_{\tau, t}$, must undergo a sharp geometric bend in moment space. By the geometric arguments of \cref{fig:caratheodory}, it follows that the optimized schedule must explicitly deploy at least two atoms to anchor these diverging noise regimes. This strategic allocation extends to the neural network experiments; \cref{fig:dirac_pr_bar} shows that, on the same Mixture of Dirac Delta distribution, the neural network atomic schedule spreads the atoms mass more heterogeneously compared to the parametric model due to weaker temporal coupling in high noise regimes. In neural networks, features become complex transformations of the input and develop \textit{temporal specialization}, with several nearly orthogonal feature blocks (\cref{fig:coupling_main_delta_2_mlp} and \cref{fig:coupling_main_delta_4_mlp}). Furthermore, as these orthogonal feature blocks become narrower in time, the piecewise constant-curvature approximation of \cref{subsec:binned_orth} becomes increasingly exact. This continuum limit explains why the MNIST atomic schedule (\cref{fig:toy_schedules}) follows the smooth, square-root generative entropy rate so closely. The reader is referred to \cref{appx:experiments} for supplementary cross-time feature coupling visualizations.

\input{tables_toy_case/pr_table}

\textbf{ELBO objective.} \Cref{fig:toy_sched_objs} plots the objective integrand
$\dot{\mathrm{SNR}}(\tau)\cdot\lim_{K\to\infty}K\,\mathscr{L}_\tau$ against diffusion time $\tau$
for the uniform, atomic, and entropic schedules. Across the neural-network settings both the
atomic and the entropic schedules reduce the peak of the integrand relative to the uniform
baseline, with the atomic schedule attaining the smallest values at its support points, so the
gains concentrate where the theory predicts them. In the parametric setting the entropic
integrand instead exceeds the uniform one, exposing the proxy's failure mode in a fully coupled problem.  Overall, \Cref{fig:toy_lcs} proves that \textit{a lower asymptotic objective  directly translates in better performances in terms of NLL across all settings}.

\textbf{Generative performance.} \Cref{fig:toy_lcs} reports the resulting NLL learning curves (in BPD), comparing the atomic and entropic schedules against the uniform, CosMap, and Logit-Normal baselines. In the neural-network experiments the entropic and atomic schedules are the top performers, and the entropic schedule, far cheaper to compute, essentially matches the atomic one. In the parametric case the atomic schedule remains the best while the entropic is the worst, consistent with the schedule and objective plots above. Crucially, across all our experiments, the atomic schedule shows faster convergence than all the baselines, with the entropic schedule following in neural network settings. This result shows practical significance: while the assumptions under the atomic schedule and its estimation are difficult to realize in practice, the entropic schedule opens the door to practical and compute-efficient diffusion model training.

\begin{figure*}[t]
    \begin{minipage}[t]{0.48\textwidth}
        \centering
        \includegraphics[width=\linewidth]{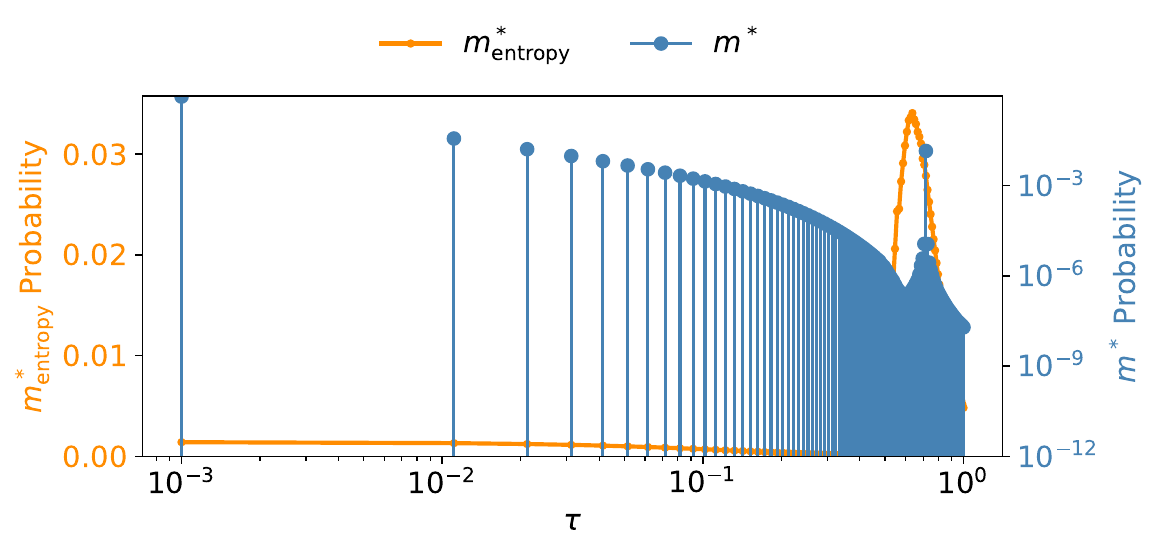}
        \subcaption{Parametric Model - Dirac Mixture}
    \end{minipage}\hfill
    \begin{minipage}[t]{0.48\textwidth}
        \centering
        \includegraphics[width=\linewidth]{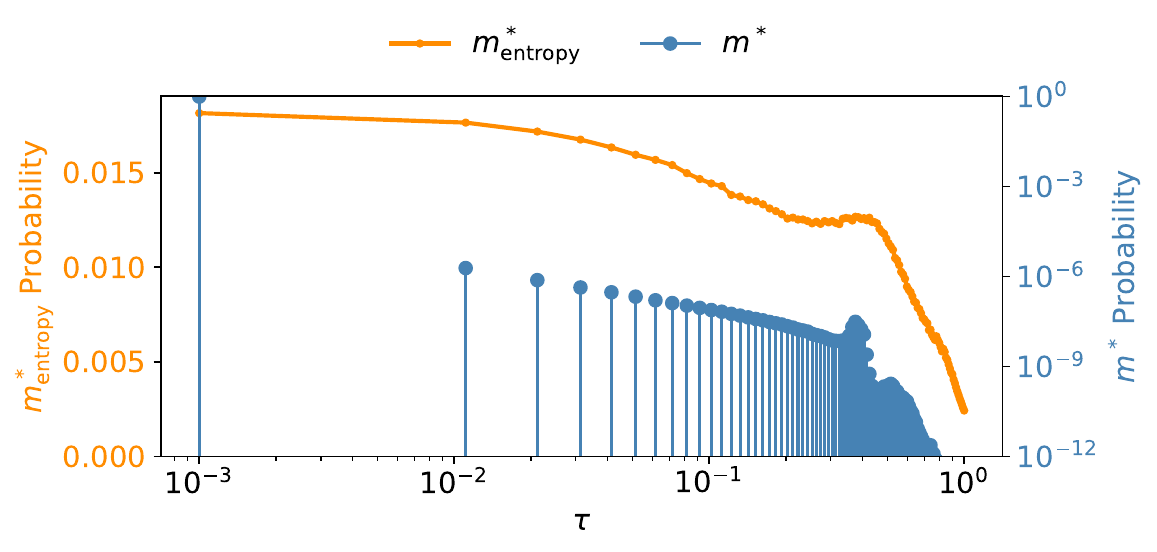}
        \subcaption{Neural Network - MNIST}
        \label{fig:mnist_schedule}
    \end{minipage}

    \begin{minipage}[t]{0.48\textwidth}
        \centering
        \includegraphics[width=\linewidth]{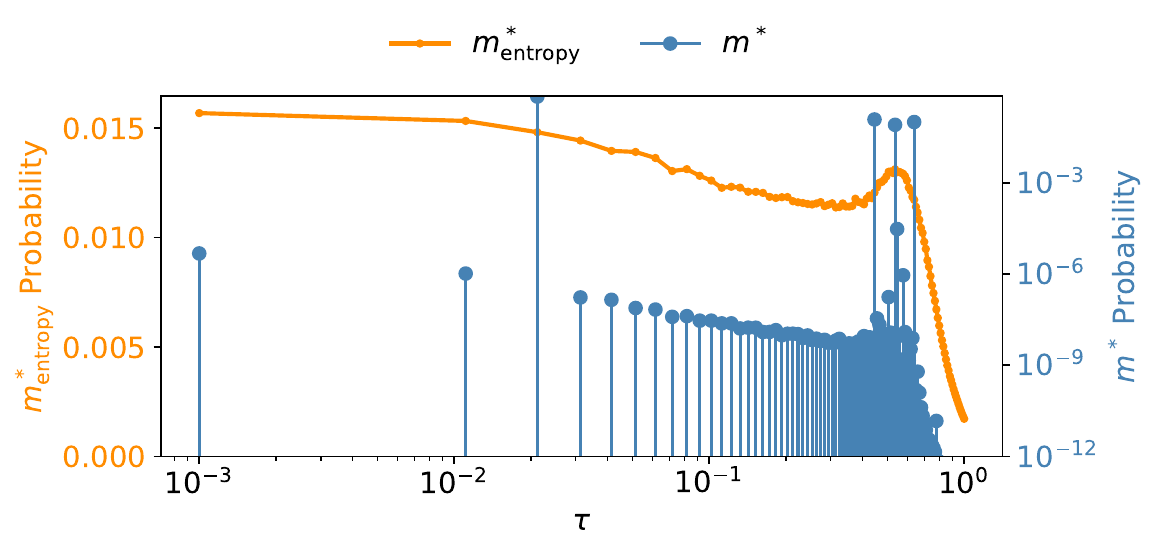}
        \subcaption{Neural Network - Moons}
    \end{minipage}\hfill
    \begin{minipage}[t]{0.48\textwidth}
        \centering
        \includegraphics[width=\linewidth]{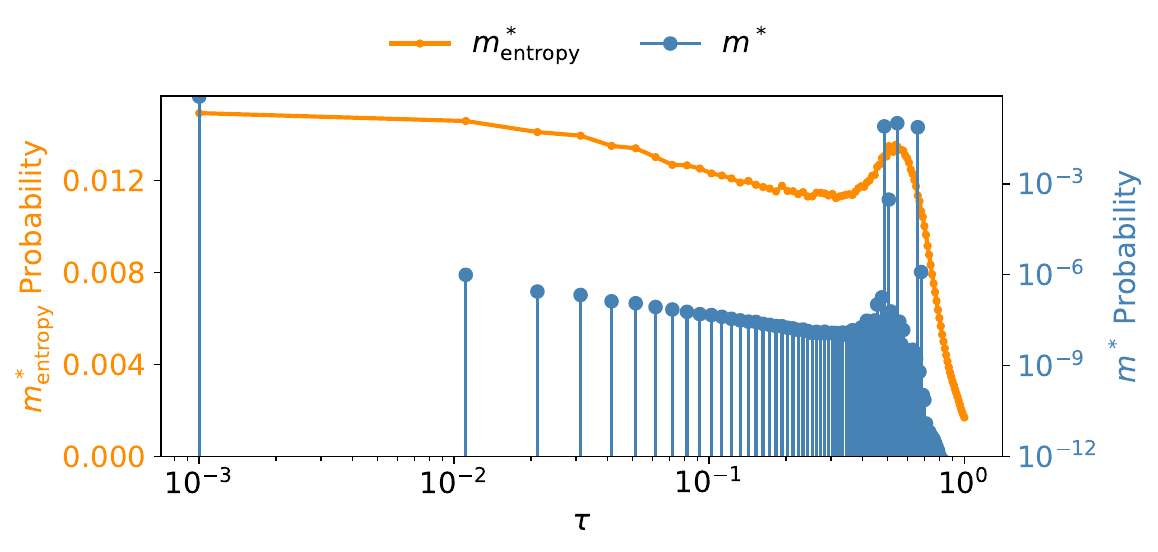}
        \subcaption{Neural Network - Swissroll}
    \end{minipage}
    \caption{\textbf{Atomic schedules versus the entropic proxy.} In all four panels the coupled optimizer appears atomic; the entropic schedule is smooth and tracks the same informative regions without matching the atomic support exactly.}
    \label{fig:toy_schedules}
\end{figure*}

\begin{figure*}[t]
    \begin{minipage}[t]{0.48\textwidth}
        \centering
        \includegraphics[width=\linewidth]{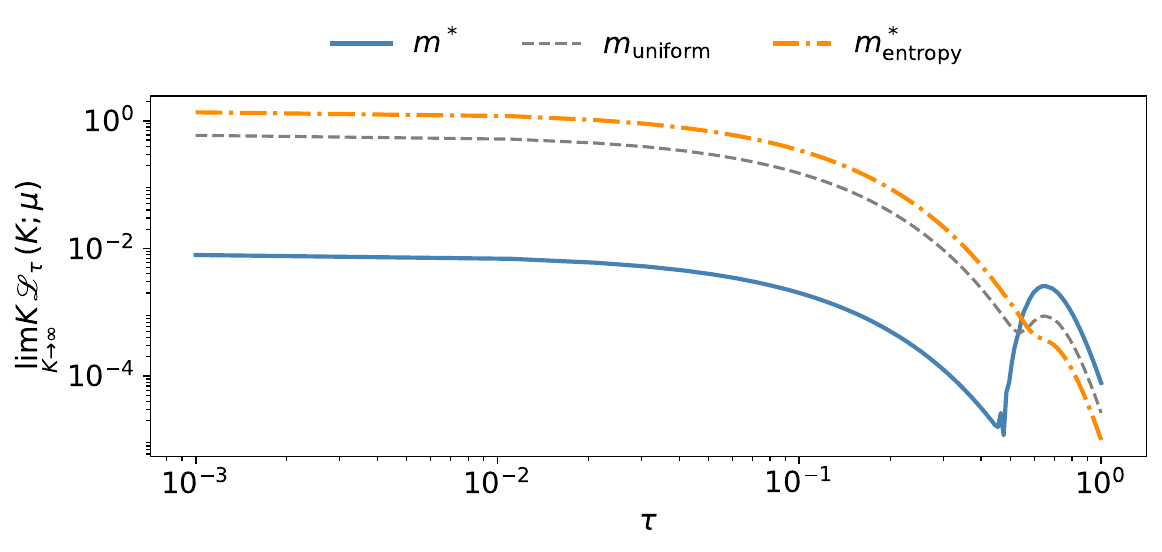}
        \subcaption{Parametric Model - Dirac Mixture}
    \end{minipage}\hfill
    \begin{minipage}[t]{0.48\textwidth}
        \centering
        \includegraphics[width=\linewidth]{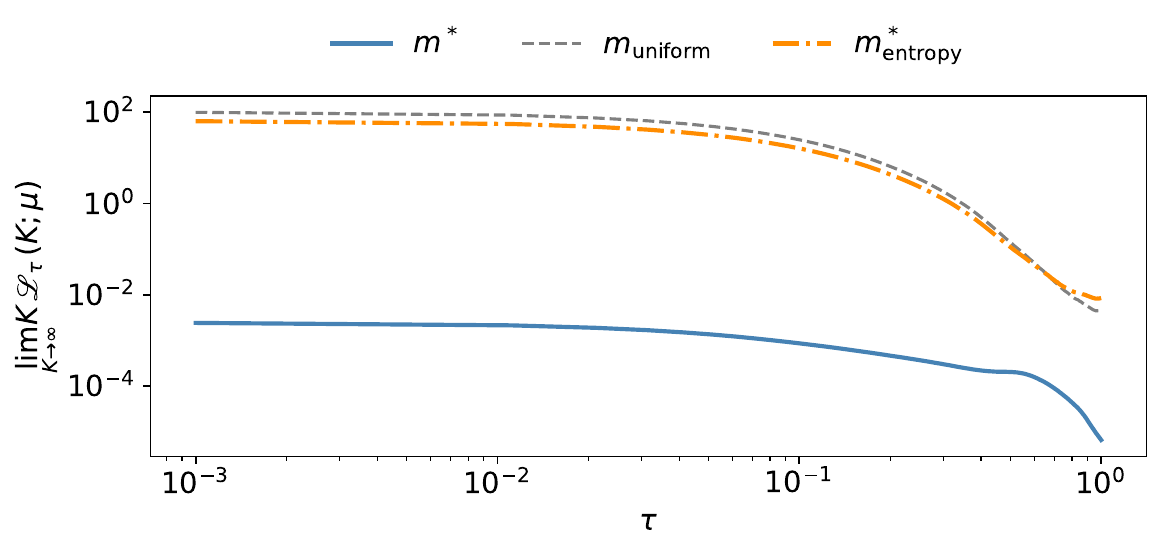}
        \subcaption{Neural Network - MNIST}
    \end{minipage}

    \begin{minipage}[t]{0.48\textwidth}
        \centering
        \includegraphics[width=\linewidth]{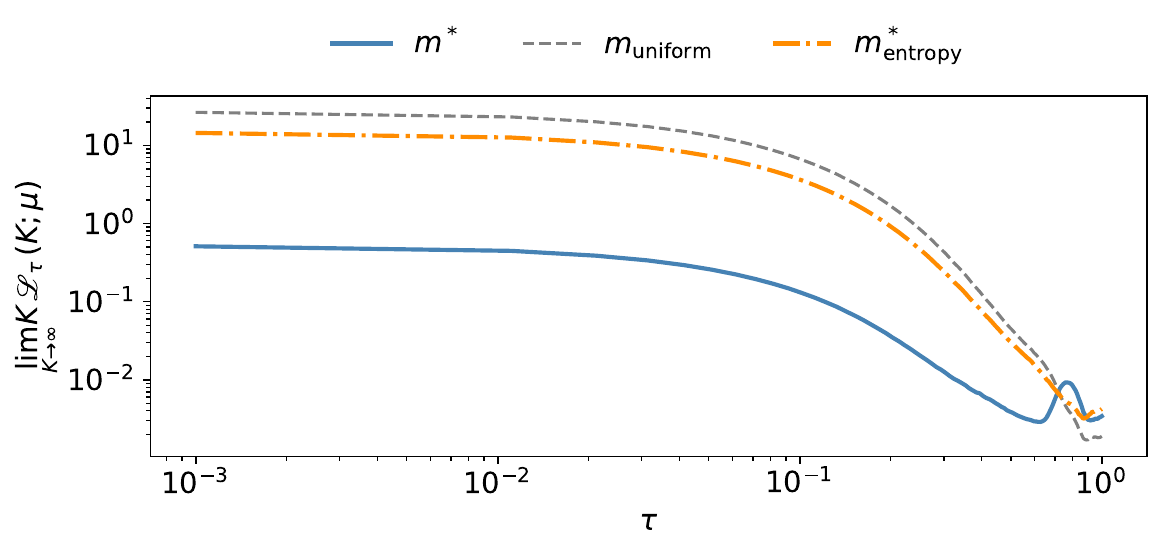}
        \subcaption{Neural Network - Moons}
    \end{minipage}\hfill
    \begin{minipage}[t]{0.48\textwidth}
        \centering
        \includegraphics[width=\linewidth]{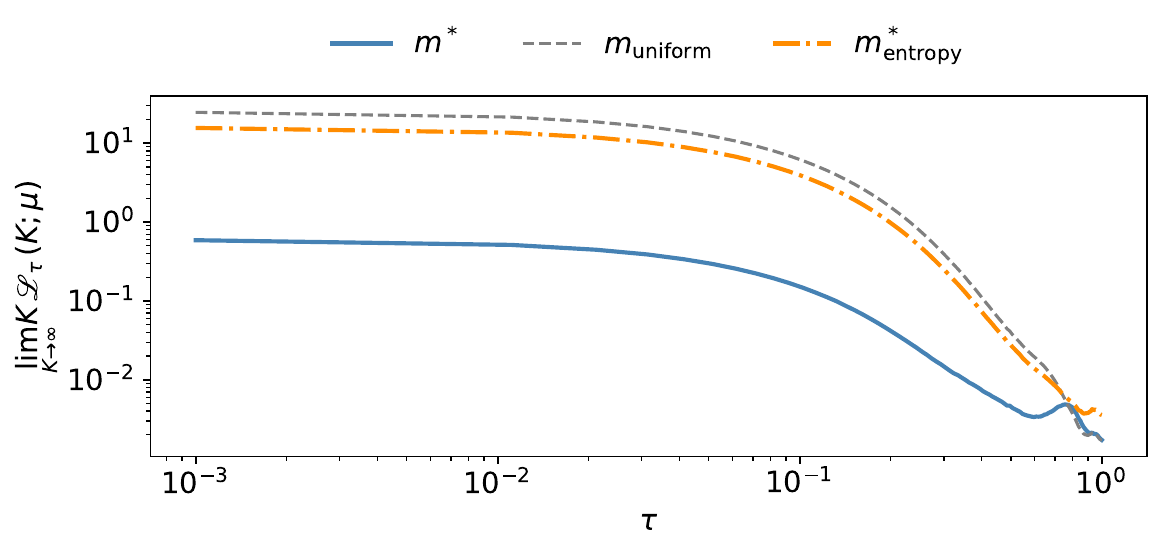}
        \subcaption{Neural Network - Swissroll}
    \end{minipage}
    \caption{\textbf{ELBO objective integrand under different schedules.} Each panel shows $\dot{\mathrm{SNR}}(\tau)\cdot\lim_{K\to\infty}K\,\mathscr{L}_\tau$ vs.\ diffusion time $\tau$ for the uniform schedule, the atomic, and the smooth entropic schedule.}
    \label{fig:toy_sched_objs}
\end{figure*}

\begin{figure*}[t]
    \begin{minipage}[t]{0.48\textwidth}
        \centering
        \includegraphics[width=\linewidth]{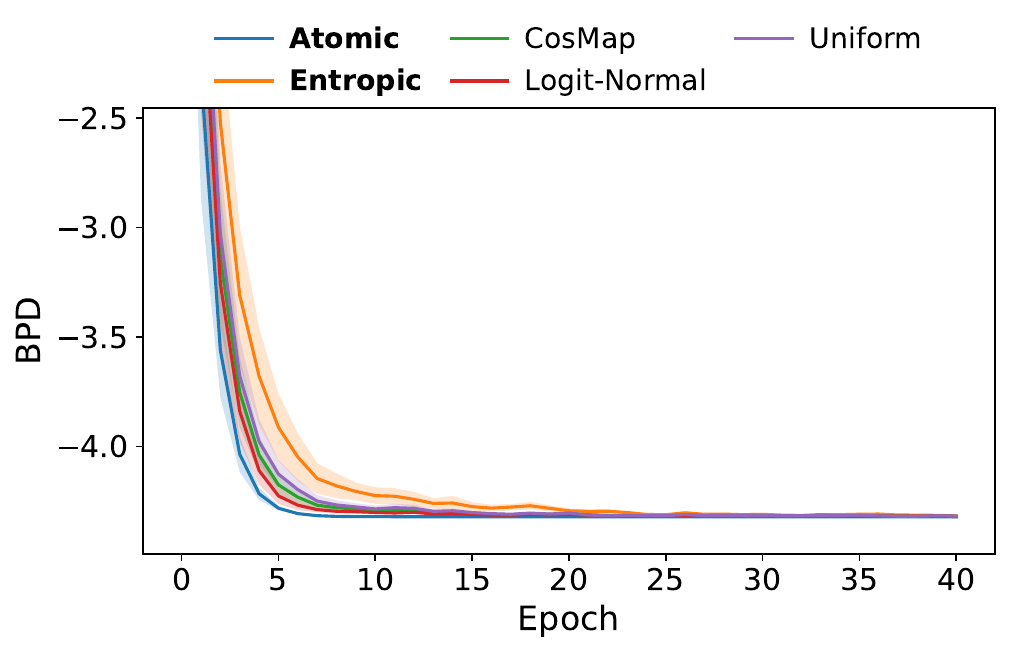}
        \subcaption{Parametric Model - Dirac Mixture}
    \end{minipage}\hfill
    \begin{minipage}[t]{0.48\textwidth}
        \centering
        \includegraphics[width=\linewidth]{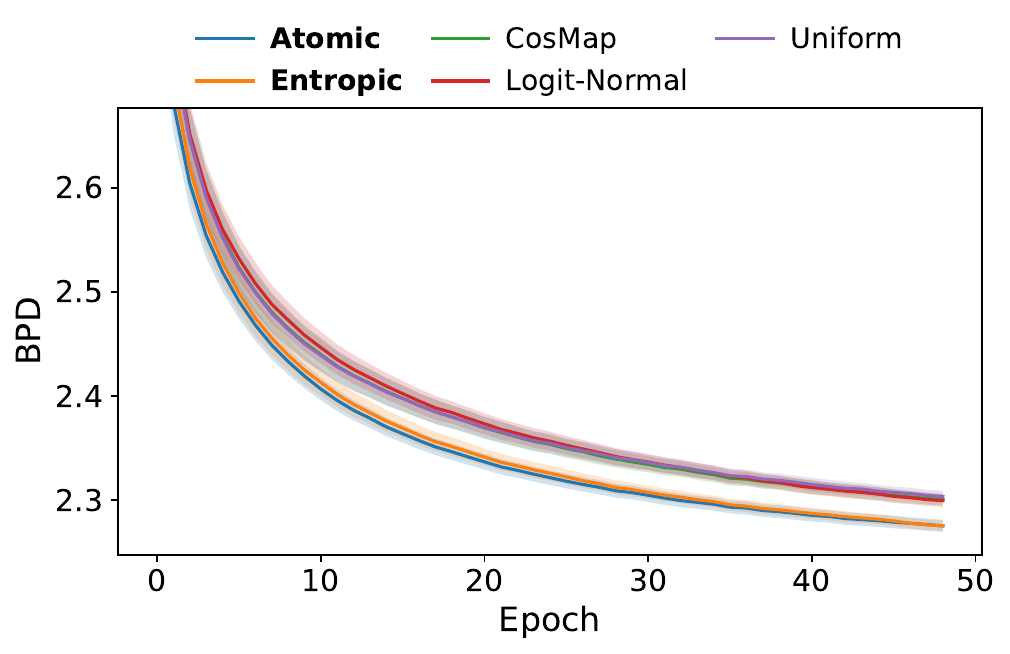}
        \subcaption{Neural Network - MNIST}
    \end{minipage}

    \begin{minipage}[t]{0.48\textwidth}
        \centering
        \includegraphics[width=\linewidth]{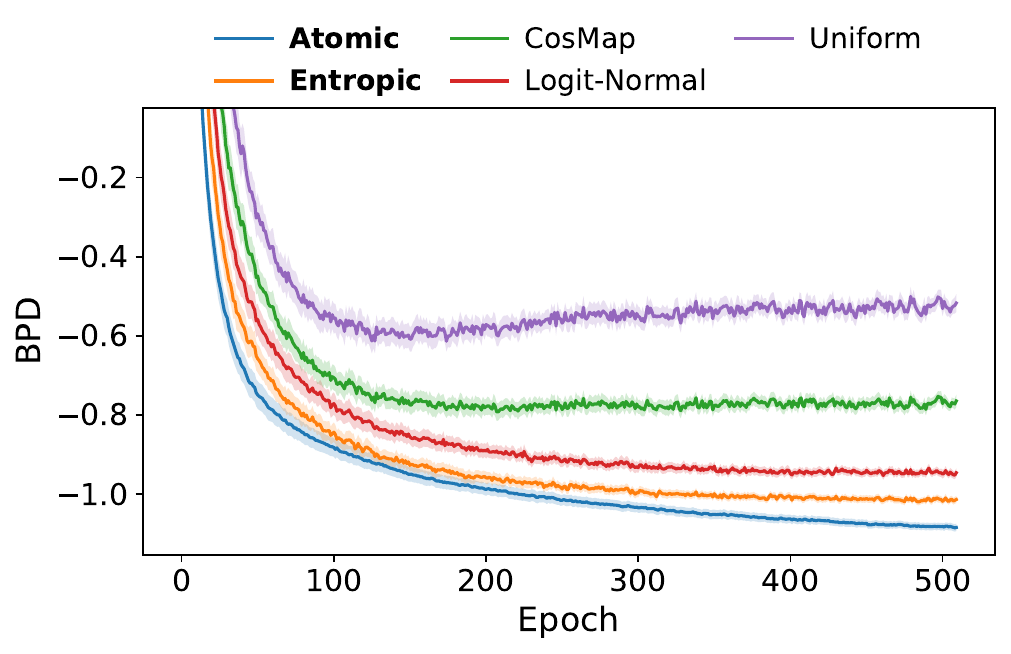}
        \subcaption{Neural Network - Moons}
    \end{minipage}\hfill
    \begin{minipage}[t]{0.48\textwidth}
        \centering
        \includegraphics[width=\linewidth]{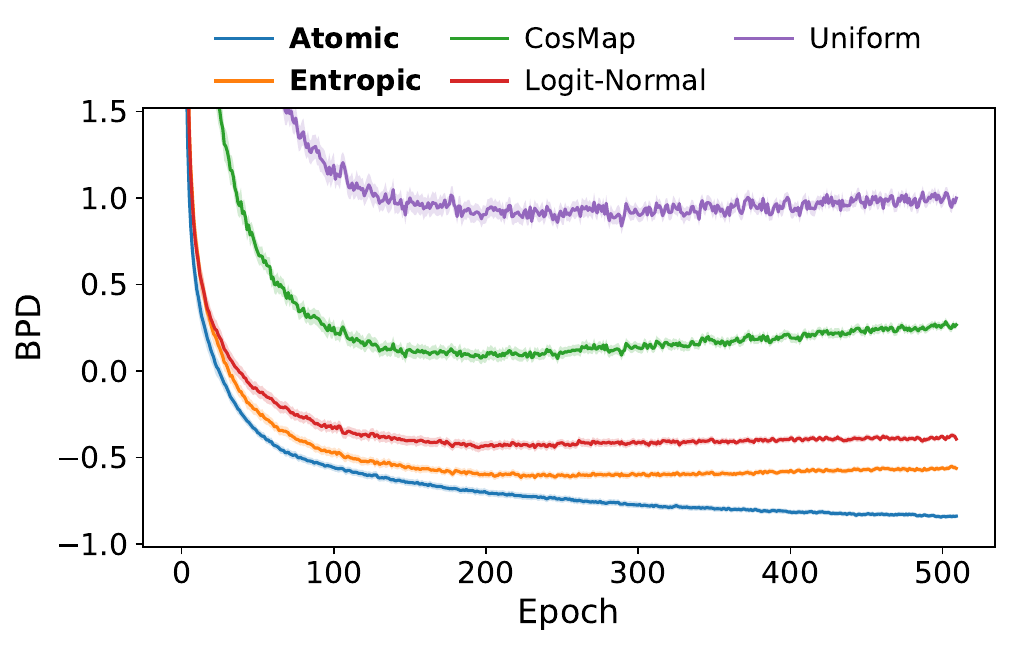}
        \subcaption{Neural Network - Swissroll}
    \end{minipage}
    \caption{\textbf{NLL learning curves (BPD).} Negative log-likelihood in bits per dimension as training progresses, comparing the atomic and entropic schedules against the uniform, CosMap, and Logit-Normal baselines. Curves are averaged over 30 seeds (20 for MNIST); shaded areas show the \gls{SEM}.}
    \label{fig:toy_lcs}
\end{figure*}

\subsection{Experiments on real data}
The full optimization procedure used in the previous experiments is currently not feasible for large neural networks due to the size of the Jacobi matrix and the difficulty in stabilizing its inverse. While these issues can be ultimately fixed by borrowing tools from second-order optimization, here we instead suggest the use of the entropy-based solver for larger scale experiments.

\textbf{Setup.} Across all real-data benchmarks we hold the architecture, training objective,
optimizer, and sampler fixed at the standard per-dataset configurations and vary \emph{only}
the time-sampling schedule, so that the reported differences are attributable to the schedule
alone. Discrete-domain models use the unweighted objective $w\equiv 1$ (with the EDM recipe
used for transfer), while continuous-image models use the EDM objective. The entropic schedule
is precomputed from a pretrained model through the held-out denoising-error estimator of
\cref{appx:experimental_details}~\eqref{eq:entropy_rate_estimator}, and is compared
against a log-uniform baseline (\textsc{LogUnif}) and the EDM/EDM2-style heuristic
(\textsc{EDM}). These entropy profiles can be easily estimated online using running averages of the loss, as shown in ref~\citep{raya2026information}. However, we use the pre-computed approach here for convenience. 

We report final sample quality (FID, or Sei-FID for DNA) and, on discrete
domains, the number of processed examples (in thousands, ``kEx'') needed to reach a fixed
per-dataset quality target; the speedup ``Sp.\ $\times$'' is the corresponding ratio of
processed examples to matched quality relative to the stronger baseline. Dashes denote
configurations or thresholds we did not measure.

\Cref{tab:efficiency_across_domains} compares the entropic square-root schedule against a log-uniform baseline and an EDM/EDM2-style heuristic across discrete and continuous benchmarks. On discrete domains the entropic schedule improves both the final score and the compute-to-target metric, with measured speedups between $2.0\times$ and $2.7\times$. On continuous-image models the comparison is closer: the entropic schedule slightly improves on EDM for MNIST, FashionMNIST, and both CIFAR-10 settings, while FFHQ slightly favors EDM. The gains are strongest on discrete data and competitive on continuous images.

\begin{table*}[t]
  \centering
  \footnotesize
  \setlength{\tabcolsep}{3.1pt}
  \renewcommand{\arraystretch}{1.06}
  \sisetup{detect-weight=true, detect-family=true}

  \begin{minipage}[t]{0.64\linewidth}
    \centering
    {\textsc{Discrete domains}}\hfill{\scriptsize $w\equiv 1$ (main), EDM recipe (transfer)}\par\vspace{0.15em}
    \resizebox{\linewidth}{!}{%
      \begin{tabular}{l
                      S[table-format=1.2] S[table-format=1.2] S[table-format=1.2]
                      S[table-format=3.0] S[table-format=3.0] S[table-format=3.0]
                      S[table-format=1.1]}
        \toprule
        & \multicolumn{3}{c}{Final score $\downarrow$}
        & \multicolumn{3}{c}{kEx to target $\downarrow$}
        & {Sp.\ $\times$} \\
        \cmidrule(lr){2-4}\cmidrule(lr){5-7}\cmidrule(lr){8-8}
        Dataset
          & {\textsc{LogUnif}} & {\textsc{EDM}} & {\textbf{$\sqrt{\phantom{1}}$ Ours}}
          & {\textsc{LogUnif}} & {\textsc{EDM}} & {\textbf{$\sqrt{\phantom{1}}$ Ours}}
          & {} \\
        \midrule
        bMNIST (FID)        & 0.63 & 0.53 & \bfseries 0.40
                           & 100  & 80   & \bfseries 40
                           & 2.0 \\
        bFashionMNIST (FID) & 1.10 & 0.94 & \bfseries 0.89
                           & 100  & 100  & \bfseries 40
                           & 2.5 \\
        DNA (Sei-FID)       & 0.61 & 0.45 & \bfseries 0.42
                           & 100  & 100  & \bfseries 40
                           & 2.7 \\
        \bottomrule
      \end{tabular}%
    }
  \end{minipage}\hfill
  \begin{minipage}[t]{0.35\linewidth}
    \centering
    {\textsc{Continuous images}}\hfill{\scriptsize EDM objective}\par\vspace{0.15em}
    \resizebox{0.98\linewidth}{!}{%
      \begin{tabular}{l
                      S[table-format=1.2] S[table-format=1.2] S[table-format=1.2]
                      S[table-format=1.1]}
        \toprule
        & \multicolumn{3}{c}{Final FID $\downarrow$} & {Sp.\ $\times$} \\
        \cmidrule(lr){2-4}\cmidrule(lr){5-5}
        Dataset
          & {\textsc{LogUnif}} & {\textsc{EDM}} & {\textbf{$\sqrt{\phantom{1}}$ Ours}}
          & {} \\
        \midrule
        MNIST                 & 1.02 & 0.44 & \bfseries 0.43 & 1.0 \\
        FashionMNIST          & 2.75 & 1.78 & \bfseries 1.71 & 1.0 \\
        CIFAR-10 (uncond.)    & 5.93 & 2.04 & \bfseries 1.98 & 1.4 \\
        CIFAR-10 (cond.)      & 4.45 & 1.85 & \bfseries 1.84 & 1.5 \\
        FFHQ $64\times 64$    & 4.68 & \bfseries 2.53 & 2.56 & \textemdash \\
        \bottomrule
      \end{tabular}%
    }
  \end{minipage}

  \caption{\textbf{Training efficiency across domains.}
  \textsc{LogUnif} denotes a log-uniform baseline, \textsc{EDM} the EDM/EDM2-style heuristic, and $\sqrt{\phantom{1}}$ Ours the entropic square-root schedule. Left: discrete domains trained with the unweighted objective $w\equiv 1$, reporting final score and thousands of processed examples (kEx) required to reach a fixed target when that comparison was measured. Right: continuous-image models trained with the EDM objective, reporting final FID; the last column summarizes matched-quality speedups when available. Dashes indicate configurations or threshold comparisons that were not measured.}
  \label{tab:efficiency_across_domains}
  \vspace{-0.6em}
\end{table*}

%% file: tables_toy_case/pr_table.tex
\begin{table*}[t!]
  \centering
  \scriptsize
  \setlength{\tabcolsep}{2.5pt}
  \renewcommand{\arraystretch}{1.06}
  \sisetup{detect-weight=true, detect-family=true}

  \begin{tabular}{l S[table-format=3.3] S[table-format=3.3] S[table-format=3.3] S[table-format=3.3] S[table-format=3.3] S[table-format=3.3] S[table-format=3.3] S[table-format=3.3] S[table-format=3.3]}
    \toprule
    & \multicolumn{9}{c}{Model + Distribution} \\
    \cmidrule(lr){2-10}
    Metric & {Swiss} & {Moons} & {MNIST} & {Dirac-Param-2} & {Dirac-Param-3} & {Dirac-Param-4} & {Dirac-NN-2} & {Dirac-NN-3} & {Dirac-NN-4} \\
    \midrule
    PR $\downarrow$ & 1.715 & 1.943 & 1.000 & 1.245 & 1.161 & 1.144 & 1.733 & 1.774 & 1.327 \\
    Top-1 $\uparrow$ & 0.750 & 0.695 & 1.000 & 0.895 & 0.927 & 0.933 & 0.749 & 0.731 & 0.862 \\
    Top-2 $\uparrow$ & 0.849 & 0.817 & 1.000 & 0.931 & 0.967 & 0.992 & 0.830 & 0.877 & 0.949 \\
    Top-3 $\uparrow$ & 0.927 & 0.918 & 1.000 & 0.947 & 0.980 & 0.996 & 0.892 & 0.957 & 0.996 \\
    Top-4 $\uparrow$ & 1.000 & 1.000 & 1.000 & 0.961 & 0.986 & 0.997 & 0.950 & 1.000 & 1.000 \\
    Top-5 $\uparrow$ & 1.000 & 1.000 & 1.000 & 0.971 & 0.990 & 0.998 & 0.999 & 1.000 & 1.000 \\
    \bottomrule
  \end{tabular}
  \caption{\textbf{Sparsity of the atomic schedule.} Reporting the participation ratio (PR) and the cumulative probability mass covered by the top-$k$ atoms for $k \in \{1, \dots, 5\}$ across various domains and schedules. $\mathrm{PR} = 1/\sum_i m_i^2$ measures the effective number of atoms. \textit{Param} and \textit{NN} denote Parametric and MLP neural network parameterizations respectively. $\downarrow$ means that as the metric diminishes, the effective number of atoms diminishes as well, $\uparrow$ has the opposite meaning.}
  \label{tab:sparsity_schedule}
  \vspace{-0.6em}
\end{table*}

%% file: sections/discussion.tex
\section{Discussion}\label{sec:discussion}

We have developed a general statistical framework for asymptotically optimal noise-level allocation in diffusion-model training.  In the frozen-feature regime, the ELBO-weighted scheduling objective depends on the sampling measure only through a finite-dimensional pair of curvature and noise operators, so its optimum can always be realized by an \emph{atomic} schedule supported on at most $p(p+1)+1$ noise levels, a consequence of the convex geometry of the operator curve and Carathéodory's theorem.  In the idealized independent-learner limit, under an additional feature-noise decoupling condition, the decoupled proxy collapses to the tractable square-root density $m^*(t)\propto\sqrt{\dot{\mathrm{Ent}}[x_0|x_t]}$, which is estimable directly from the denoising loss.

The two regimes are governed by how strongly the network's features couple across noise levels (\cref{sec:uncoupled}): when the coupling is strong the optimal schedule is atomic, and when the features specialize by noise level the optimal mass spreads into the smooth entropic density.  The experiments span this range.  Some cross-time coupling is present in every setting we consider, and as it weakens the atomic optimizer places its mass where the entropic density is large, so the much cheaper entropic schedule closely tracks the atomic optimum on the neural models and is clearly worse only in the fully coupled, low-parameter case.  In practice this suggests a simple rule: allocate training effort in proportion to the square root of the generative entropy rate.  It requires no schedule optimization and can be estimated from the training loss, and in our experiments it is competitive with hand-tuned weightings (\cref{sec:experiments}).

\paragraph{Limitations.}
The atomic-optimizer result (\cref{prop:atomic_optimizer}) relies on a well-specification assumption and the frozen-feature approximation: in the exact problem the Jacobians depend on $\mu$ through $\theta^\star(\mu)$, so the feasible set is no longer convex and the Carathéodory reduction does not directly apply.  The local-linear regime assumed throughout is exact for convex losses and well-supported for overparameterized networks near convergence, but deviations early in training may change which schedule is optimal.  The feature-noise decoupling condition underlying the square-root proxy is useful but not universal; when the network develops strongly input-conditional submodules with heterogeneous entropy profiles, higher-order corrections to the proxy become necessary.

\paragraph{Open problems.}
The most pressing open question is whether the frozen-feature atomic result can be generalized to the self-consistent setting, in which the schedule and the Jacobian geometry are optimized jointly.  A related question concerns genericity: our analysis guarantees that an atomic optimizer always \emph{exists}, but not that every optimizer is atomic, and characterizing when the objective is minimized at an extreme point of the operator curve's convex hull would settle whether sparsity is guaranteed or merely typical.  A second direction is the design of online algorithms that refine the entropy estimate and the schedule at the same time, without a pre-trained reference model, as in the information-based scheduling of \citep{raya2026information}.  Extending the framework beyond Polyak--Ruppert averaging, to momentum methods, adaptive optimizers, or mini-batch training, would broaden its applicability; achieving practically optimal schedulers in that more general setting, ideally by accumulating gradient information as in second-order optimization, remains for future work.

%% file: appendices/elbo_denoiser.tex
\section{Continuous-Time ELBO and Denoiser Matching}\label{supp sec: ELBO}

This appendix records the continuous-time ELBO in the form used throughout the paper.
We follow the standard variational diffusion derivation of \citet{huang2021variational}
and the path-space KL / score-based viewpoint emphasized by
\citet{franzese2023minde}, but specialize directly to our variance-exploding forward
channel and denoiser parameterization.

\subsection{Forward channel and path-space ELBO}

We consider the forward diffusion
\begin{equation}
dx_t = a(t)\,dw_t,
\qquad
x_t = x_0 + \sigma(t)\,z,
\qquad
z\sim\mathcal{N}(0,I),
\label{eq:forward_process}
\end{equation}
with
\begin{equation}
\sigma^2(t) = \int_0^t a^2(\tau)\,d\tau.
\label{eq:sigma_def}
\end{equation}
For a fixed clean sample $x_0$, let $\mathbb{Q}_{x_0}$ denote the corresponding
conditional path measure. A reverse-time generative model with score approximation
$\widetilde{s}_\theta(x_t,t)$ induces a path measure $\mathbb{P}_\theta$ through
\[
dx_t = -a^2(t)\,\widetilde{s}_\theta(x_t,t)\,dt + a(t)\,d\bar{w}_t.
\]
The usual variational inequality gives
\begin{equation}
-\log p_\theta(x_0)
\le
\mathrm{KL}\!\left(\mathbb{Q}_{x_0}\,\|\,\mathbb{P}_\theta\right).
\label{eq:path_vlb}
\end{equation}
By Girsanov's theorem, this path-space KL can be written as
\begin{equation}
\mathrm{KL}\!\left(\mathbb{Q}_{x_0}\,\|\,\mathbb{P}_\theta\right)
=
\frac{1}{2}\int_0^T
a^2(t)\,
\mathbb{E}\!\left[
\Bigl\|\nabla_x\log q(x_t\mid x_0)-\widetilde{s}_\theta(x_t,t)\Bigr\|^2
\right]dt
\;+\;\mathrm{const},
\label{eq:path_kl_score}
\end{equation}
where the constant is independent of $\theta$
\citep{huang2021variational,franzese2023minde}.

\subsection{Denoiser form and SNR weighting}

For the Gaussian forward channel,
\[
x_t\mid x_0 \sim \mathcal{N}(x_0,\sigma^2(t)I),
\qquad
\nabla_x\log q(x_t\mid x_0)
=
-\frac{x_t-x_0}{\sigma^2(t)}.
\]
We parametrize the reverse model by an $x_0$-prediction network $s_\theta(x_t,t)$ via
\[
\widetilde{s}_\theta(x_t,t)
=
\frac{s_\theta(x_t,t)-x_t}{\sigma^2(t)}.
\]
Substituting into \eqref{eq:path_kl_score} gives
\[
\nabla_x\log q(x_t\mid x_0)-\widetilde{s}_\theta(x_t,t)
=
\frac{x_0-s_\theta(x_t,t)}{\sigma^2(t)}.
\]
After averaging over $x_0\sim p_{\mathrm{data}}$ and $x_t\mid x_0$, the ELBO becomes
\[
\mathcal{L}(\theta)
:=
\mathbb{E}_{x_0}\!\left[
\mathrm{KL}\!\left(\mathbb{Q}_{x_0}\,\|\,\mathbb{P}_\theta\right)
\right]
=
c
+
\int_0^T
\frac{a^2(t)}{2\sigma^4(t)}\,
\hat{\mathscr{L}}_t(\theta)\,dt,
\]
where
\begin{equation}
\hat{\mathscr{L}}_t(\theta)
:=
\mathbb{E}\!\left[
\|x_0-s_\theta(x_t,t)\|^2
\right]
\label{eq:denoising_loss}
\end{equation}
is the denoising loss at time $t$.
Using $a^2(t)=2\sigma(t)\dot{\sigma}(t)$, we obtain
\begin{equation}
\frac{a^2(t)}{2\sigma^4(t)}
=
\frac{\dot{\sigma}(t)}{\sigma^3(t)}
=:
\dot{\mathrm{SNR}}(t),
\label{eq:snr_def}
\end{equation}
and therefore
\begin{equation}
\mathcal{L}(\theta)
=
c
+
\int_0^T \hat{\mathscr{L}}_t(\theta)\,\dot{\mathrm{SNR}}(t)\,dt.
\label{eq:continuous_elbo}
\end{equation}
Equivalently, $\dot{\mathrm{SNR}}(t)\,dt$ is the intrinsic weighting measure of the
forward channel, and is unchanged by smooth reparameterizations of time.

\subsection{Bayes decomposition}

Let
\begin{equation}
f_t(x_t) := \mathbb{E}[x_0\mid x_t]
\label{eq:bayes_denoiser}
\end{equation}
be the Bayes-optimal denoiser, and define the prediction error relative to it by
\begin{equation}
\mathscr{L}_t(\theta)
:=
\mathbb{E}_{x_t}\!\left[
\|s_\theta(x_t,t)-f_t(x_t)\|^2
\right].
\label{eq:matching_loss}
\end{equation}
Adding and subtracting $f_t(x_t)$ inside \eqref{eq:denoising_loss}, and using the
orthogonality condition
\(
\mathbb{E}[x_0-f_t(x_t)\mid x_t]=0,
\)
gives
\begin{equation}
\hat{\mathscr{L}}_t(\theta)
=
\mathrm{MMSE}(t) + \mathscr{L}_t(\theta),
\label{eq:loss_decomp}
\end{equation}
where
\begin{equation}
\mathrm{MMSE}(t)
:=
\mathbb{E}\!\left[
\|x_0-f_t(x_t)\|^2
\right]
\label{eq:mmse_def}
\end{equation}
depends only on the forward channel.
Substituting \eqref{eq:loss_decomp} into \eqref{eq:continuous_elbo},
\[
\mathcal{L}(\theta)
=
c
+
\int_0^T
\bigl(\mathrm{MMSE}(t)+\mathscr{L}_t(\theta)\bigr)\,
\dot{\mathrm{SNR}}(t)\,dt.
\]
Defining
\begin{equation}
c'
:=
c
+
\int_0^T \mathrm{MMSE}(t)\,\dot{\mathrm{SNR}}(t)\,dt,
\label{eq:cprime}
\end{equation}
we obtain the final form
\begin{equation}
\mathcal{L}(\theta)
=
c'
+
\int_0^T \mathscr{L}_t(\theta)\,\dot{\mathrm{SNR}}(t)\,dt.
\label{eq:elbo_final}
\end{equation}
Thus, minimizing the continuous-time ELBO is equivalent to minimizing the
$\dot{\mathrm{SNR}}$-weighted denoiser-matching objective, up to the model-independent
constant $c'$.

%% file: appendices/fokker_planck.tex
\section{Conditional Entropy Rate and Mutual-Information Decay}\label{supp sec: conditional entropy}

This appendix derives the information-theoretic identity used in the main text. A
rigorous treatment based on Girsanov's theorem, valid under minimal assumptions on the data
distribution, is given by \citet{franzese2023minde} (see also the variational
diffusion viewpoint of \citet{huang2021variational}). Here we give a short, self-contained
derivation that assumes directly the existence of smooth densities and enough integrability
to differentiate under the integral sign and to integrate by parts with vanishing boundary
terms.

We use the forward channel of Appendix~\ref{supp sec: ELBO},
\[
dx_t = a(t)\,dw_t,
\qquad
x_t = x_0 + \sigma(t)\,z,
\qquad
\sigma^2(t)=\int_0^t a^2(\tau)\,d\tau,
\]
in which only the noisy coordinate diffuses. Let $p_t(x)$ denote the marginal density of
$x_t$, $p_t(x\mid x_0)=\mathcal{N}(x;x_0,\sigma^2(t)I)$ the conditional, and
$p_t(x_0\mid x)$ the posterior. Because the forward SDE has no drift, both the conditional
$p_t(\cdot\mid x_0)$ and the marginal $p_t(\cdot)=\int p_t(\cdot\mid x_0)\,p_0(x_0)\,dx_0$
evolve by the same Fokker--Planck (heat) equation,
\begin{equation}
\partial_t\,p_t(x) = \frac{a^2(t)}{2}\,\Delta_x\,p_t(x),
\label{eq:fokker_planck}
\end{equation}
the conditional from the initial condition $\delta_{x_0}$ and the marginal by linearity.

\paragraph{Relative-entropy production.}
Let $p_t,q_t$ be two densities that both solve~\eqref{eq:fokker_planck}. Differentiating
$\mathrm{KL}(p_t\,\|\,q_t)=\int p_t\log(p_t/q_t)\,dx$ in time, substituting
\eqref{eq:fokker_planck}, and integrating by parts (boundary terms vanish by the assumed
decay) gives the relative-entropy production identity
\begin{equation}
\frac{d}{dt}\,\mathrm{KL}\!\left(p_t\,\|\,q_t\right)
=
-\,\frac{a^2(t)}{2}\,
\mathbb{E}_{x\sim p_t}\!\left[
\bigl\|\nabla_x\log p_t(x)-\nabla_x\log q_t(x)\bigr\|^2
\right].
\label{eq:rel_entropy_production}
\end{equation}
The score-difference (de~Bruijn) form of the KL divergence between diffused measures is
derived from Girsanov's theorem by \citet{franzese2023minde}.

\paragraph{Conditional entropy rate.}
Writing the mutual information as an average relative entropy and applying
\eqref{eq:rel_entropy_production} with $p_t=p_t(\cdot\mid x_0)$ and $q_t=p_t(\cdot)$,
\[
I(X_0;X_t)
=
\mathbb{E}_{x_0}\!\left[
\mathrm{KL}\!\left(p_t(\cdot\mid x_0)\,\|\,p_t(\cdot)\right)
\right],
\qquad
\frac{d}{dt}I(X_0;X_t)
=
-\,\frac{a^2(t)}{2}\,
\mathbb{E}\!\left[
\bigl\|\nabla_x\log p_t(x_t\mid x_0)-\nabla_x\log p_t(x_t)\bigr\|^2
\right].
\]
By Bayes' rule $p_t(x_0\mid x)\propto p_t(x\mid x_0)\,p_0(x_0)$, and since $p_0(x_0)$ does
not depend on $x$,
\begin{equation}
\nabla_x\log p_t(x_0\mid x_t)
=
\nabla_x\log p_t(x_t\mid x_0)-\nabla_x\log p_t(x_t).
\label{eq:bayes_score}
\end{equation}
Since $H(X_0)$ is constant along the forward process,
$H(X_0\mid X_t)=H(X_0)-I(X_0;X_t)$ gives
$\dot{\Ent}[x_0\mid x_t]=-\tfrac{d}{dt}I(X_0;X_t)$, hence
\begin{equation}
\frac{d}{dt}\Ent[x_0\mid x_t]
=
\frac{a^2(t)}{2}\,
\mathbb{E}\!\left[
\|\nabla_x \log p_t(x_0\mid x_t)\|^2
\right].
\label{eq:cond_entropy_score}
\end{equation}

\paragraph{Gaussian channel and the MMSE form.}
For the Gaussian channel $\nabla_x \log p_t(x_t\mid x_0)=-(x_t-x_0)/\sigma^2(t)$, so
\eqref{eq:bayes_score} gives
\[
\nabla_x \log p_t(x_0\mid x_t)
=
\frac{x_0-\mathbb{E}[x_0\mid x_t]}{\sigma^2(t)}.
\]
Substituting into \eqref{eq:cond_entropy_score} and using the definitions of
$\dot{\mathrm{SNR}}(t)$ \eqref{eq:snr_def} and $\mathrm{MMSE}(t)$ \eqref{eq:mmse_def},
\begin{equation}
\dot{\Ent}[x_0\mid x_t]
=
\frac{a^2(t)}{2\sigma^4(t)}\,
\mathbb{E}\!\left[\|x_0-\mathbb{E}[x_0\mid x_t]\|^2\right]
=
\dot{\mathrm{SNR}}(t)\,\mathrm{MMSE}(t).
\label{eq:cond_entropy_mmse}
\end{equation}
Equivalently, since $I(X_0;X_t)=H(X_0)-H(X_0\mid X_t)$ with $H(X_0)$ constant,
\begin{equation}
\frac{d}{dt}I(X_0;X_t)
=
-\dot{\mathrm{SNR}}(t)\,\mathrm{MMSE}(t).
\label{eq:mi_decay}
\end{equation}
Thus the ``generative entropy rate'' used in the paper can be read equivalently as the
growth rate of conditional entropy or as the loss of mutual information between the clean
sample and its noisy version along the forward diffusion.

%% file: appendices/random_jacobian.tex
\section{Random Jacobian Model and Concentration Bounds}
\label{supp sec: random matrix}

This section records the random-matrix heuristic underlying the feature--noise decoupling
reduction used in the main text. We show that under an isotropic random-row model the
projector $P(x_t)$ has scalar expectation, and that even without independence one obtains a
useful operator-norm bound.

Throughout, $J_\theta(x_t)\in\mathbb{R}^{d\times p}$ and $r:=p/d$.

\subsection{Heuristic random-row model}

For a fixed time $t$, we model the data-induced randomness of $J_\theta(x_t)$ as follows.

\paragraph{Assumption A (random rows, heuristic).}
As $x_t$ varies under its marginal law, the rows of $J_\theta(x_t)$ are modeled as independent,
centered random vectors $j_i(x_t)\in\mathbb{R}^p$ satisfying
\[
\mathbb{E}[j_i(x_t)j_i(x_t)^\top] = \Lambda_t \succ 0.
\]
Define the whitened rows
\[
x_i(x_t):=\Lambda_t^{-1/2}j_i(x_t),
\]
and the whitened matrix
\[
X(x_t):=J_\theta(x_t)\Lambda_t^{-1/2}.
\]
Assume the $x_i(x_t)$ are isotropic and sub-Gaussian with uniformly bounded sub-Gaussian norm.
Then
\[
\mathbb{E}[J_\theta(x_t)^\top J_\theta(x_t)] = d\,\Lambda_t,
\]
so the projector-like matrix from the main text becomes
\[
P(x_t)
=
J_\theta(x_t)\,\mathbb{E}[J_\theta(x_t)^\top J_\theta(x_t)]^{-1}J_\theta(x_t)^\top
=
\frac{1}{d}X(x_t)X(x_t)^\top.
\]

\subsection{Scalar expectation under independence}

If $\Sigma_t(x_t)$ is deterministic or independent of $J_\theta(x_t)$, then
\[
\mathbb{E}[P(x_t)\Sigma_t(x_t)]
=
\mathbb{E}[P(x_t)]\,\mathbb{E}[\Sigma_t(x_t)].
\]
It remains to compute $\mathbb{E}[P(x_t)]$. Since
\[
(X X^\top)_{ab} = x_a^\top x_b,
\]
isotropy gives
\[
\mathbb{E}[x_a^\top x_a] = p,
\qquad
\mathbb{E}[x_a^\top x_b] = 0 \quad (a\neq b),
\]
hence
\[
\mathbb{E}[X X^\top] = p\,I_d.
\]
Therefore
\[
\mathbb{E}[P(x_t)] = \frac{p}{d}I_d = r I_d.
\]
Substituting yields
\[
\mathbb{E}[\operatorname{Tr}(P(x_t)\Sigma_t(x_t))]
=
r\,\operatorname{Tr}\!\left(\mathbb{E}[\Sigma_t(x_t)]\right).
\]
Using the conditional de~Bruijn identity
\[
\operatorname{Tr}\!\left(\mathbb{E}[\Sigma_t(x_t)]\right)
=
\frac{\dot{\Ent}[x_0\mid x_t;t,\varphi]}{\dot{\mathrm{SNR}}(t)},
\]
one obtains the heuristic proxy
\[
L_t(n,t)
\approx
\frac{r}{n}
\frac{\dot{\Ent}[x_0\mid x_t;t,\varphi]}{\dot{\mathrm{SNR}}(t)}.
\]

\subsection{Operator-norm bound without independence}

When $P(x_t)$ and $\Sigma_t(x_t)$ are not independent, the exact factorization above fails.
However, for positive semidefinite matrices,
\[
\operatorname{Tr}(P\Sigma)\le \lambda_{\max}(P)\,\operatorname{Tr}(\Sigma).
\]
Under Assumption~A, $P(x_t)=d^{-1}XX^\top$ is a sample covariance matrix of isotropic
sub-Gaussian rows. Standard results (e.g.\ Vershynin, 2018) imply the following concentration
statement.

\begin{lemma}
\label{lemma:MP}
For any $\varepsilon>0$, there exists $d_0(\varepsilon)$ such that for all $d\ge d_0$, with
probability at least $1-\exp(-cd)$,
\[
\lambda_{\max}(P(x_t))
\le
(1+\sqrt{r}+\varepsilon)^2.
\]
If the sub-Gaussian norms and covariance condition numbers are uniform in $t$, the same bound
holds uniformly over $t$.
\end{lemma}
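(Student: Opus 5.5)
The plan is to reduce Lemma~\ref{lemma:MP} to the standard non-asymptotic bound on the extreme singular values of a random matrix with independent isotropic sub-Gaussian rows. Under the random-row model, $P(x_t)=d^{-1}X X^\top$ with $X\in\mathbb{R}^{d\times p}$. The nonzero eigenvalues of $XX^\top$ coincide with those of $X^\top X$, so
\[
\lambda_{\max}(P(x_t))=\frac{1}{d}\,s_{\max}(X)^2,
\]
where $s_{\max}(X)$ is the largest singular value of $X$. It therefore suffices to show that $s_{\max}(X)\le \sqrt d\,(1+\sqrt r+\varepsilon)$ with probability at least $1-\exp(-cd)$, equivalently $s_{\max}(X)\le \sqrt d+\sqrt p+\varepsilon\sqrt d$.

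\textbf{Step 1.} I will invoke the theorem for matrices with independent sub-Gaussian isotropic rows (Vershynin, 2018, Thm.~4.6.1). With $K$ the uniform sub-Gaussian norm bound, it gives, for every $u\ge0$ and with probability at least $1-2\exp(-u^2)$,
\[
s_{\max}(X)\le \sqrt d + C K^2(\sqrt p + u).
\]
The constant $CK^2$ in front of $\sqrt p$ is a problem: as stated, it does not yield the sharp constant $1$ that the lemma requires. There are two ways around this.

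\textbf{Step 2.} Read literally, the lemma has a fixed $r=p/d$, so $\sqrt p=\sqrt r\,\sqrt d$ grows with $d$. In this regime the sharp edge $(1+\sqrt r)^2$ is the Bai--Yin / Marchenko--Pastur almost-sure limit for isotropic rows with independent entries. I would combine that limit with Gaussian-type concentration of $s_{\max}$, which is a $1$-Lipschitz function of the entries, around its mean. This gives $\mathbb{E}\,s_{\max}\le \sqrt d(1+\sqrt r+\varepsilon/2)$ for $d\ge d_0(\varepsilon)$, and then a deviation of size $\varepsilon\sqrt d/2$ costs probability $\exp(-c\varepsilon^2 d)$.

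If only general sub-Gaussian rows are available, I would fall back on the Step~1 bound with $u=\sqrt d$. That yields $\lambda_{\max}(P)\le(1+CK^2(\sqrt r+1))^2$ with probability $1-2e^{-d}$. This is the same form with a constant-dependent edge, and I would remark that the sharp constant needs the entrywise-independent (Bai--Yin) setting.

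\textbf{Step 3 (uniformity in $t$).} The constants $c$, $C$, $d_0$ depend only on $K$ and on $\varepsilon$, and not on $\Lambda_t$, because whitening has already removed the covariance. Uniform sub-Gaussian norms and condition numbers therefore give the same bound for every fixed $t$ with the same constants. If a simultaneous bound over all $t$ is wanted, I would take a finite net of $[0,T]$, apply a union bound, which costs only a polynomial factor absorbed by $\exp(-cd)$, and use continuity of $t\mapsto P$ between net points.

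I expect Step~2 to be the main obstacle. It is exactly where the sharp constant $(1+\sqrt r)^2$, as opposed to a $K$-dependent constant, must be justified. That requires either entrywise independence or an extra argument.
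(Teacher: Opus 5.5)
Your reading of Step~1 is correct, and it is more careful than the paper. The paper's entire justification is the sentence ``standard results (e.g.\ Vershynin, 2018) imply'' the bound. That is exactly your Step~1, which gives an edge of the form $(1+CK^2\sqrt r+\varepsilon)^2$, not the constant $1$ in front of $\sqrt r$.

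You should go one step further, though. Under the stated hypotheses the sharp statement is \emph{false}: rows independent and centered, with whitened versions isotropic and sub-Gaussian with uniformly bounded $\psi_2$-norm. So the ``extra argument'' you leave open in Step~2 cannot exist. Here is a counterexample. Take $x_i=\eta_i g_i$ with $g_i\sim\mathcal{N}(0,I_p)$, and let the $\eta_i$ be independent, equal to $0$ or $\sqrt2$ with probability $1/2$ each. These rows are independent, centered, isotropic (since $\mathbb{E}\eta_i^2=1$) and sub-Gaussian with an absolute constant $K$. Yet, with $S=\{i:\eta_i\neq 0\}$,
\[
\lambda_{\max}(P)=\frac{2|S|}{d}\cdot\frac{1}{|S|}\,\lambda_{\max}\Bigl(\sum_{i\in S}g_ig_i^\top\Bigr)\longrightarrow(1+\sqrt{2r})^2
\]
almost surely. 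This holds because $|S|/d\to 1/2$ and the inner factor is a Gaussian Wishart edge with aspect ratio $p/|S|\to 2r$. For $\varepsilon<(\sqrt2-1)\sqrt r$, the claimed event therefore has probability tending to $0$. If you replace $\sqrt2$ by $q^{-1/2}$ (taken with probability $q$), the edge moves to $(1+\sqrt{r/q})^2$ while $K\asymp q^{-1/2}$. So some $K$-dependence in front of $\sqrt r$ is unavoidable. Your fallback is therefore not a weakened substitute: it is the correct form of the lemma under its stated assumptions. The sharp edge needs extra structure, such as i.i.d.\ entries or Gaussian rows.

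Within that restricted setting, Step~2 can be tightened.
\begin{itemize}
\item If the whitened rows are standard Gaussian, you need neither Bai--Yin nor $d_0$. Gordon's inequality gives $\mathbb{E}\,s_{\max}(X)\le\sqrt d+\sqrt p$. Gaussian concentration of the $1$-Lipschitz map $X\mapsto s_{\max}(X)$ then gives the bound for every $d$, with probability at least $1-e^{-\varepsilon^2 d/2}$.
\item For non-Gaussian i.i.d.\ entries, ``Gaussian-type concentration'' of a Lipschitz function is not automatic. For bounded entries, use Talagrand's inequality, which applies because $s_{\max}$ is convex. It concentrates $s_{\max}$ around its median, and the Bai--Yin limit controls the median directly, so no uniform-integrability step is needed.
\item For unbounded sub-Gaussian entries, an $e^{-cd}$ upper-tail bound at the sharp edge is a large-deviation statement that needs its own argument.
\end{itemize}

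In Step~3, your observation that whitening removes $\Lambda_t$ entirely is what makes the constants uniform in $t$. Indeed $P=d^{-1}XX^\top$ exactly, so condition numbers play no role. The net-plus-continuity argument for a simultaneous bound is unnecessary. It also relies on a joint law across $t$ and on continuity of $t\mapsto P(x_t)$, neither of which the model specifies.
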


Define
\[
C_r(\varepsilon):=\frac{(1+\sqrt{r}+\varepsilon)^2}{r}.
\]
Then Lemma~\ref{lemma:MP} gives
\[
\mathbb{E}[\operatorname{Tr}(P(x_t)\Sigma_t(x_t))]
\lesssim
C_r(\varepsilon)\,r\,
\operatorname{Tr}\!\left(\mathbb{E}[\Sigma_t(x_t)]\right),
\]
and therefore the upper bound
\[
L_t(n,t)
\lesssim
\frac{C_r(\varepsilon)\,r}{n}\,
\frac{\dot{\Ent}[x_0\mid x_t;t,\varphi]}{\dot{\mathrm{SNR}}(t)}.
\]

\subsection{Interpretation}

The random-row model should be read as a mechanism, not a theorem about real network Jacobians.
Its message is that isotropy plus weak feature-noise correlation make the average projector
look scalar, while concentration gives a fallback upper bound even when exact independence is lost.
Thus the random-matrix heuristic supports the entropy-rate proxy used in the main text.

%% file: sections/feature_noise_decoupling.tex
\section{Simplification under feature-noise decoupling}
\label{sec:feature_noise_decoupling}
The formula in Eq.~\ref{eq:general_prediction_error} gives the correct asymptotic scaling. However, the expression is hard to estimate and obscures the dominant sources of time dependence. To isolate the dominant time dependence, we consider the following \emph{heuristic model}: $J_{\theta}(x_t)$ is a random (not necessarily Gaussian) matrix with independent row distribution
\begin{align}
    & \mean{J^j_{\theta}(x_t)}{x_t} = 0 \\
    & \mean{{J^j_{\theta}(x_t)}^\top J^j_{\theta}(x_t)}{x_t} = \Lambda_t
\end{align}
where $\Lambda_t$ is a time-dependent covariance matrix that regulates the correlations in the Jacobian columns for different values of $x_t$. Note that random matrix models for Jacobians are widely used in the theoretical deep learning literature, especially in the context of wide networks \citep{jacot2018neural,arora2019exact, matthews2018gaussian}. Note however that in this case the randomness comes from the input $x_t$, not from the random initialization.

Notice that in Eq.~\ref{eq:general_prediction_error}, the error depends on the Jacobian only through a projector-like matrix:
\begin{align}
    & \mean{\mathcal{P}(x_t) \Sigma_t(x_t)}{x_t} \\
    & \mathcal{P}(x_t) = J_{\theta}(x_t) \mean{J_{\theta}(x_t)^\top J_{\theta}(x_t)}{x_t}^{-1} J_{\theta}^\top(x_t)
\end{align}
A useful cancellation mechanism is that $\mathcal P(x_t)$ be approximately isotropic in output space and only weakly correlated with $\Sigma_t(x_t)$. In the random-row model of Supp.~\ref{supp sec: random matrix}, one obtains $\mean{\mathcal P(x_t)}{x_t}=r I$ with $r=p/d$; if in addition $\mathcal P(x_t)$ and $\Sigma_t(x_t)$ are independent, then
\begin{equation}
    \mean{\mathcal{P}(x_t) \Sigma_t(x_t)}{x_t}
    =
    \mean{\mathcal{P}(x_t)}{x_t}\,\mean{\Sigma_t(x_t)}{x_t}
    \approx
    r\,\mean{\Sigma_t(x_t)}{x_t}.
\end{equation}
In this regime, the dependence on $\Lambda_t$ enters only through the scalar ratio $r$, leading to the proxy asymptotics
\begin{equation} \label{eq:general_asymptotic_error_uncoupled}
   \mathscr{L}(n;t) \approx \frac{r \dot{\Ent}[x_0|x_t;t,\phi]}{n \dot{\text{SNR}}(t)}~,
 \end{equation}
which matches the decoupled reciprocal-cost analysis after the additional entropy-rate approximation. Even without such independence, Marchenko--Pastur concentration gives a useful upper bound in the large-dimensional random-row model under sub-Gaussian assumptions (Supp.~\ref{supp sec: random matrix}):
\begin{equation} \label{eq:general_asymptotic_error_bound}
   \mathscr{L}(n;t) \leq \frac{r C_r(\epsilon) \dot{\Ent}[x_0|x_t;t,\phi]}{n \dot{\text{SNR}}(t)} +
o\!\left(\frac{1}{n}\right)~,
 \end{equation}
where $C_r(\epsilon)$ is time-independent and approaches $1$ in the high dimensional limit. Here, the bound comes from the fact that we can bound the maximum eigenvalue of $\mathcal{P}_t$ independently of $t$.

The random-matrix model above is only a heuristic device. For deterministic feature maps or kernel/NTK-like regimes, the entropy-rate proxy is accurate only when the average projector is near-scalar and feature-noise correlations remain weak.

%% file: appendices/toy_experiments.tex
\section{Experiments}\label{appx:experiments}

Across all settings we compare the atomic optimizer of the coupled theory with the smooth entropic schedule of the uncoupled theory. Parametric models provide a controlled setting with closed-form scores, where the coupled objective is optimized directly near the optimum; the neural-network experiments repeat the analysis with a genuine shared-parameter denoiser, on 2D manifolds and on image data (MNIST). In our experiments, we follow \cref{alg:schedule_opt} to infer the optimal schedule via gradient descent optimization over \(K=10^4\) optimization steps with the Adam optimizer, a total of \(N=100\) time points and learning rate \(\eta=0.1\).

\input{appendices/schedule_opt_algo}

\subsection{Common setup}

We study one-dimensional VE diffusion tasks (a mixture of Dirac masses with 2,3, or 4 components), 2D datasets (Swiss Roll and Moons), and image data (MNIST). For each dataset, we train a model until convergence; this model is then used to compute the entropic and atomic schedules for a specified set of parameters of the model. Afterwards, we retrain the chosen parameters, while keeping other parameters frozen. We do not reinitialize the trainable parameters from scratch; instead, we perturb them with Gaussian noise to place them near the optimum, as the theory requires.
Finally, we retrain with each schedule over 30 different random seeds for toy data and with 20 seeds for image data. The corresponding learning curves report the Negative Log-Likelihood (NLL) in Bits per Dimension (BPD), along with the \gls{SEM}.

\subsubsection{Dirac mixture}\label{sec:dirac}

\begin{figure}[!ht]
    \centering
    \begin{subfigure}[b]{0.48\textwidth}
        \centering
        \includegraphics[width=\textwidth]{figures_toy_case/objs_log_scale/objs_delta_param_2.pdf}
    \end{subfigure}
    \hfill 
    \begin{subfigure}[b]{0.48\textwidth}
        \centering
        \includegraphics[width=\textwidth]{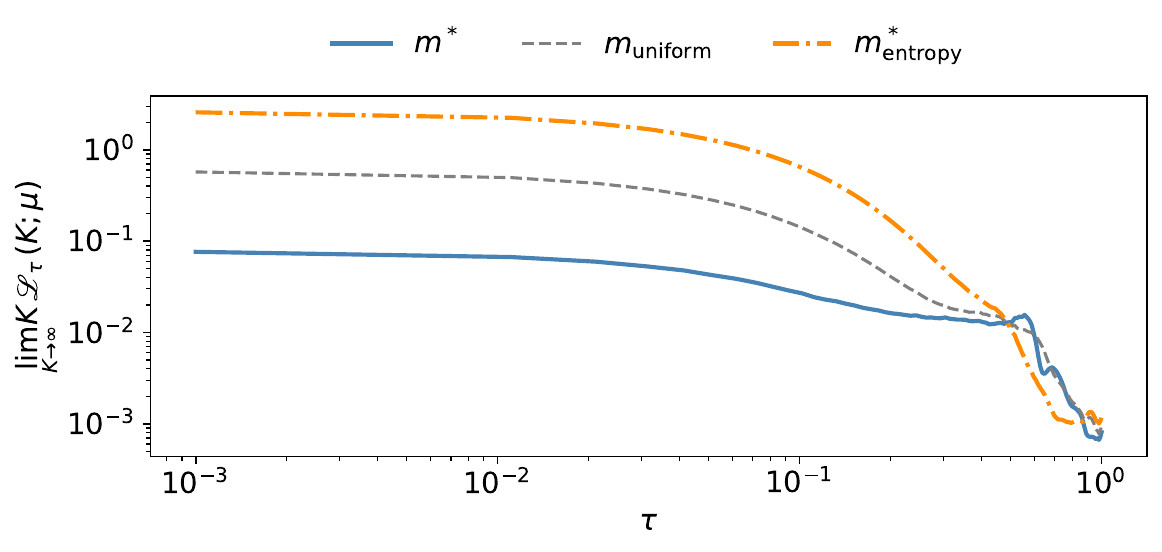}
    \end{subfigure}

    \vspace{0.3cm} 

    \begin{subfigure}[b]{0.48\textwidth}
        \centering
        \includegraphics[width=\textwidth]{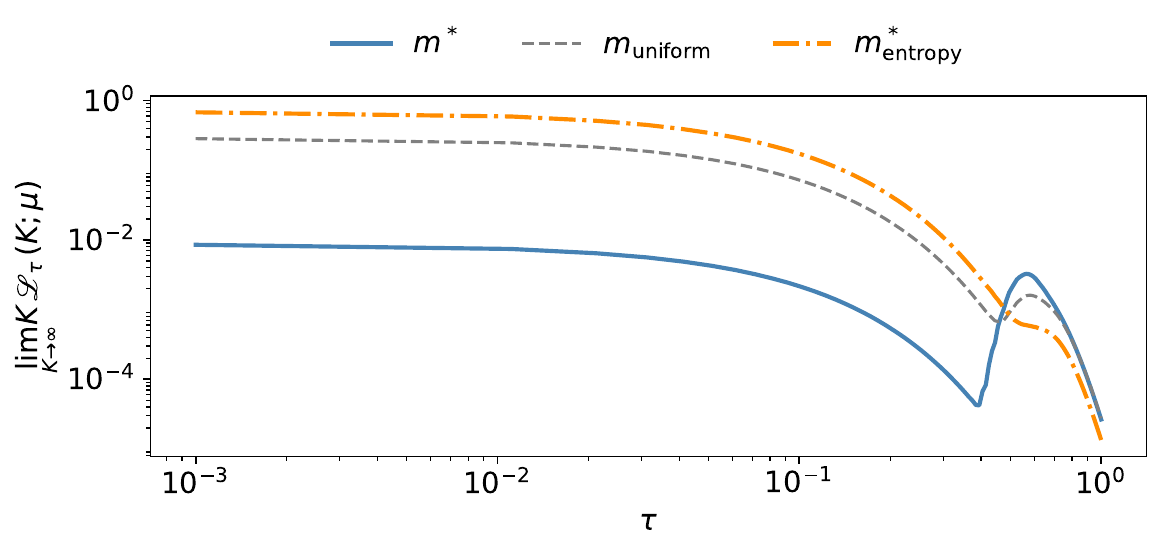}
    \end{subfigure}
    \hfill
    \begin{subfigure}[b]{0.48\textwidth}
        \centering
        \includegraphics[width=\textwidth]{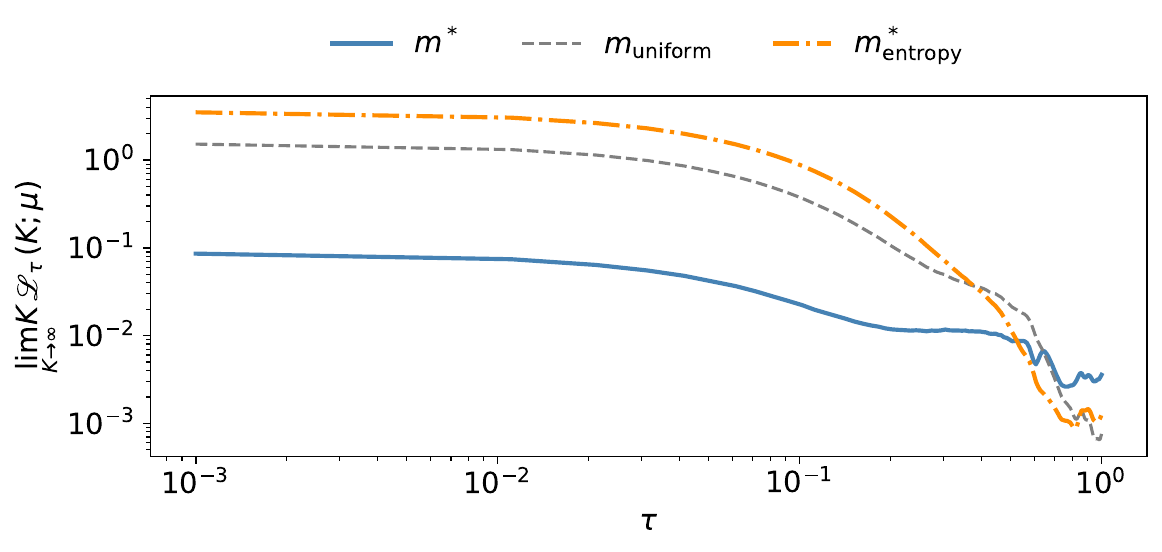}
    \end{subfigure}

    \vspace{0.3cm} 

    \begin{subfigure}[b]{0.48\textwidth}
        \centering
        \includegraphics[width=\textwidth]{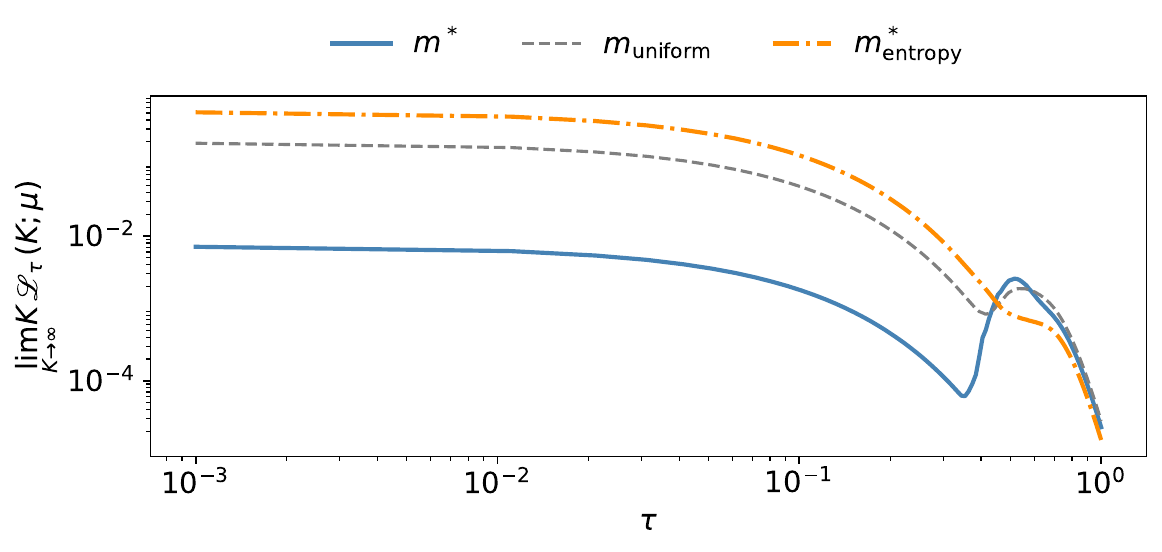}
    \end{subfigure}
    \hfill
    \begin{subfigure}[b]{0.48\textwidth}
        \centering
        \includegraphics[width=\textwidth]{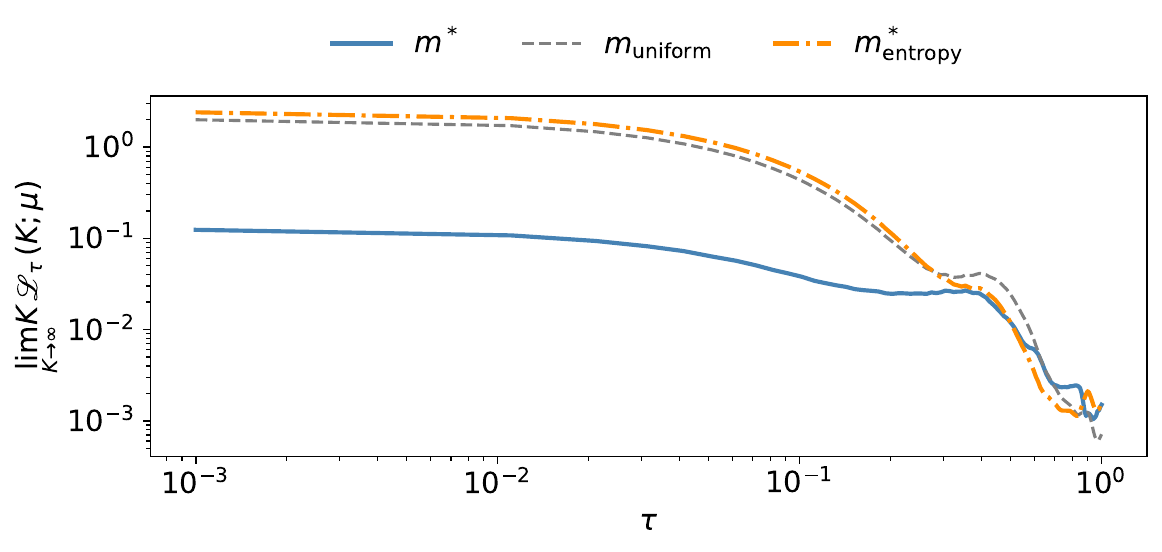}
    \end{subfigure}

    \caption{ELBO-weighted objective for the atomic, entropic and uniform schedules. The curves on the left refer to the parametric model, the ones on the right to the \gls{MLP}. From top to bottom, the first row correspond to the 2 mixture component distribution, the second to 3 components and the last to 4.}
    \label{fig:dirac_sched_objs}
\end{figure}

\begin{figure}[!ht]
    \centering
    \begin{subfigure}[b]{0.48\textwidth}
        \centering
        \includegraphics[width=\textwidth]{figures_toy_case/schedules_log_scale/schedules_delta_param_2.pdf}
    \end{subfigure}
    \hfill 
    \begin{subfigure}[b]{0.48\textwidth}
        \centering
        \includegraphics[width=\textwidth]{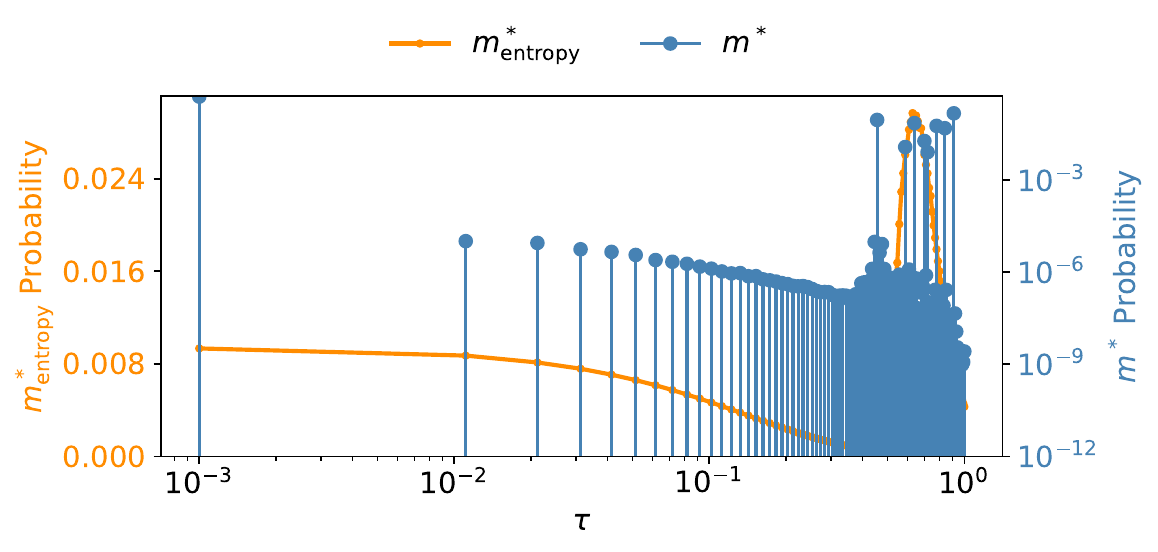}
    \end{subfigure}

    \vspace{0.3cm} 

    \begin{subfigure}[b]{0.48\textwidth}
        \centering
        \includegraphics[width=\textwidth]{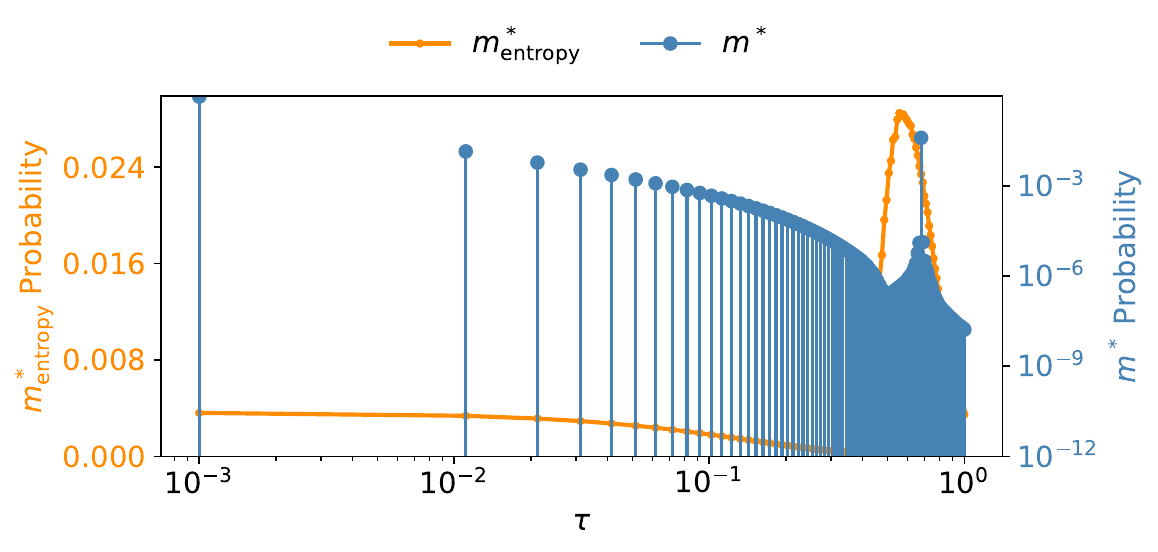}
    \end{subfigure}
    \hfill
    \begin{subfigure}[b]{0.48\textwidth}
        \centering
        \includegraphics[width=\textwidth]{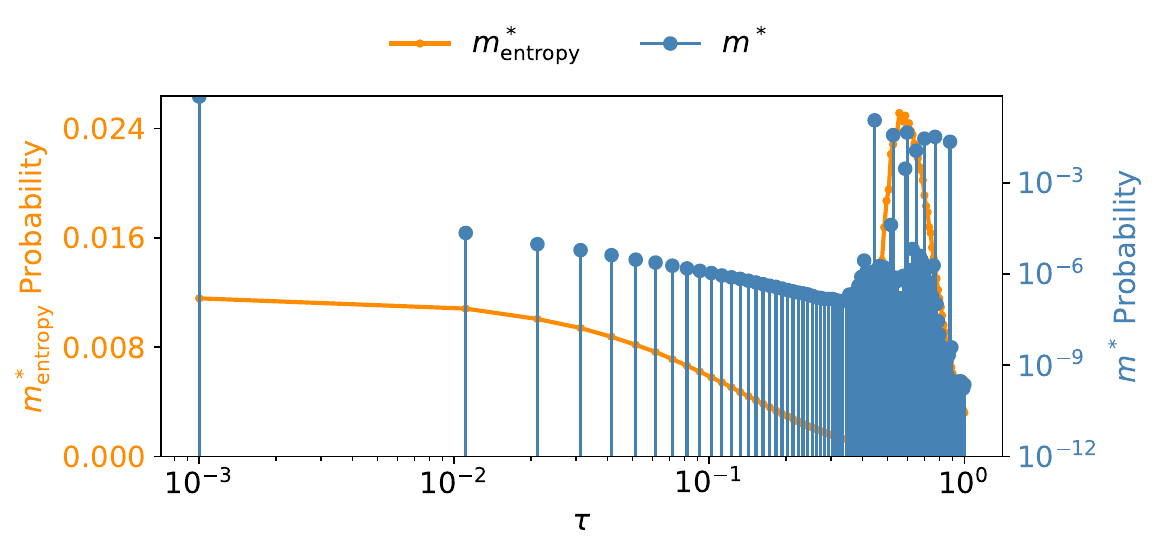}
    \end{subfigure}

    \vspace{0.3cm} 

    \begin{subfigure}[b]{0.48\textwidth}
        \centering
        \includegraphics[width=\textwidth]{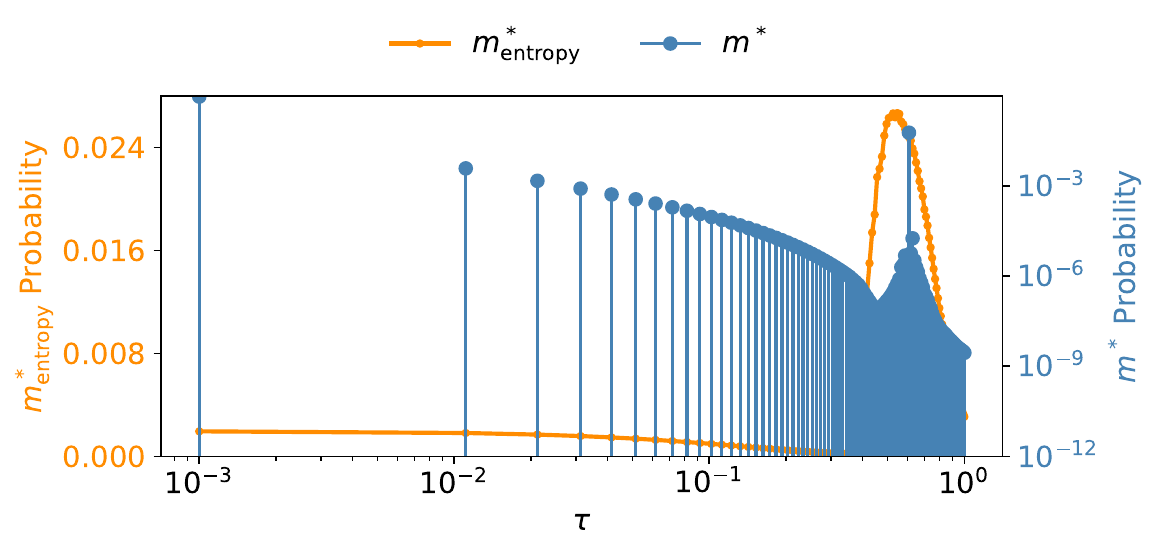}
    \end{subfigure}
    \hfill
    \begin{subfigure}[b]{0.48\textwidth}
        \centering
        \includegraphics[width=\textwidth]{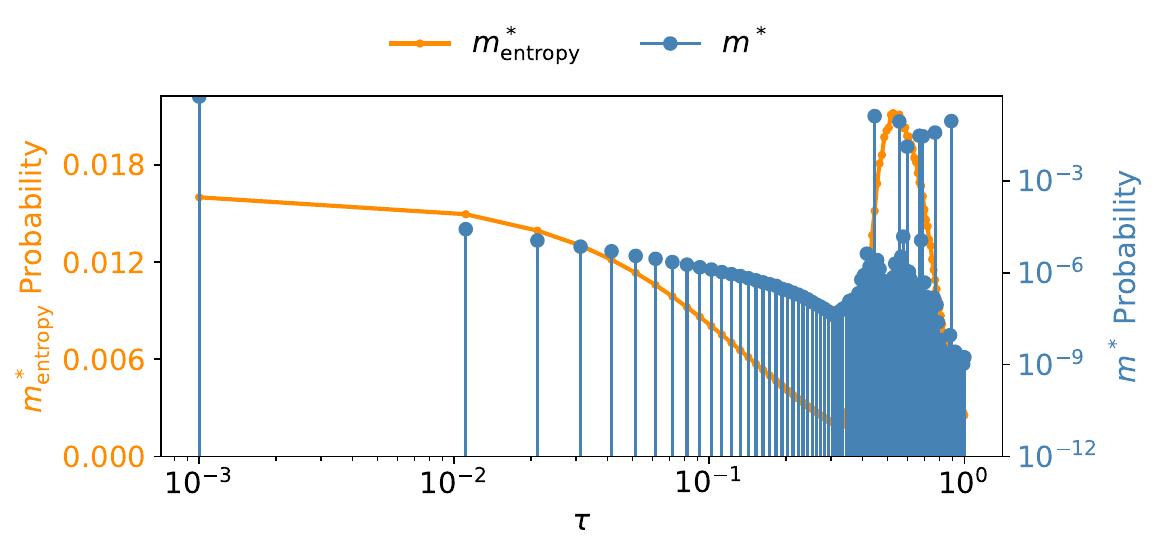}
    \end{subfigure}

    \caption{Atomic and entropic schedules for the mixture of delta distribution. The curves on the left refer to the parametric model, the ones on the right to the \gls{MLP}. From top to bottom, the first row correspond to the 2 mixture component distribution, the second to 3 components and the last to 4.}
    \label{fig:dirac_schedules}
\end{figure}

\paragraph*{Data and models} For the Dirac-mixture setting, we study two different models: a simple \gls{MLP} and a closed-form model parameterized by \(\theta_\delta=\{\{\pi_k\},\{m_k\}\}\),
\begin{equation}
    p_0(x;\theta_\delta)=\sum_{k=1}^{K}\pi_k\,\delta(x-m_k),
\end{equation}
with VE perturbation
\begin{equation}
    p_t(x;\theta_\delta)=\sum_{k=1}^{K}\pi_k\,\mathcal{N}(x;m_k,\sigma(t)^2 I),
\end{equation}
and score
\begin{equation}
    \nabla_x\log p_t(x;\theta_\delta)=\sum_k \rho_k(x,t;\theta_\delta)\,\frac{m_k-x}{\sigma(t)^2},
\end{equation}
where
\[
    \rho_k(x,t;\theta_\delta)
    =
    \frac{\pi_k\,\mathcal{N}(x;m_k,\sigma(t)^2 I)}
         {\sum_{\ell=1}^{K}\pi_\ell\,\mathcal{N}(x;m_\ell,\sigma(t)^2 I)}
\]
are the corresponding posterior responsibilities after noising. In the \gls{MLP} case, we employ a 50k parameters architecture. Differently from the previous setup, we only focus on the last layer, with 32 parameters. In this case, we restrict the coupled optimization to the selected parameters and compute an optimized schedule with frozen features from previous layers. Similarly, we benchmark the efficacy of the atomic schedule by freezing intermediate layers and only training the last layer. In all cases, \cref{fig:dirac_learning_curves} shows that the atomic schedule outperforms the alternatives. The entropic schedule, gives suboptimal performances, especially in the parametric model case. This is consistent with the theory: the parametric model lives in a fully temporally coupled regime and its limited set of parameters is independent of the diffusion time, opposite to the scenario depicted in \cref{sec:uncoupled}. This is validated by the cross-time coupling strength in \cref{fig:dirac_couplings}.
We report the atomic and entropic schedules in \cref{fig:dirac_schedules}, as we can see, in all cases the number of atoms is smaller than the square of the number of parameters, confirming our theoretical results. The ELBO-weighted objective in \cref{fig:dirac_sched_objs} shows consistent results with the generative performance.

\begin{figure}[!ht]
    \centering
    \begin{subfigure}[b]{0.48\textwidth}
        \centering
        \includegraphics[width=\textwidth]{figures_toy_case/delta/analytical/ParametricModel_mod_n2_bpd_learning_curves.pdf}
    \end{subfigure}
    \hfill 
    \begin{subfigure}[b]{0.48\textwidth}
        \centering
        \includegraphics[width=\textwidth]{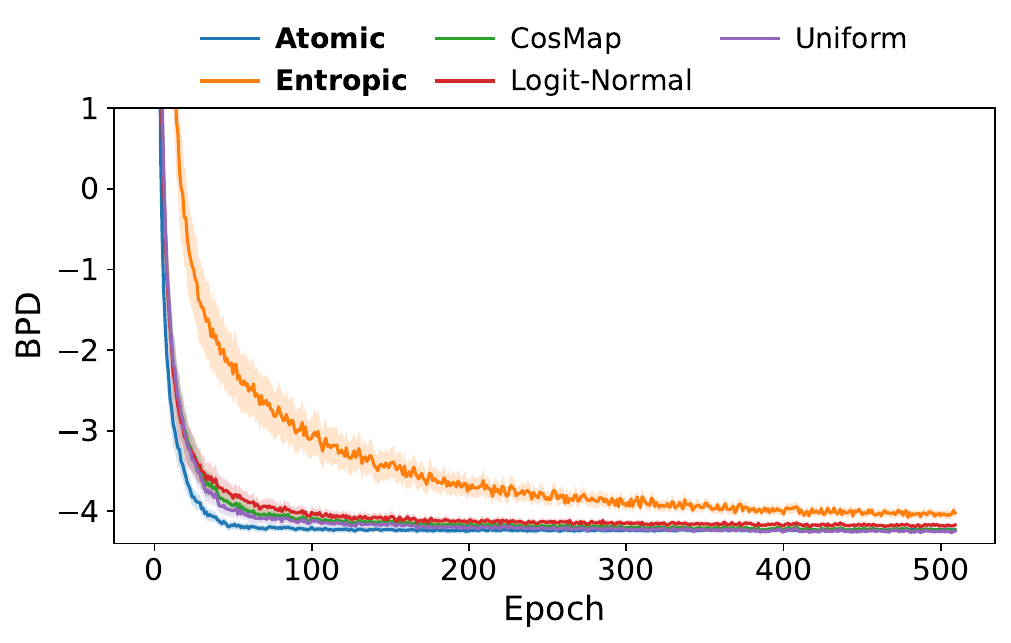}
    \end{subfigure}

    \vspace{0.3cm} 

    \begin{subfigure}[b]{0.48\textwidth}
        \centering
        \includegraphics[width=\textwidth]{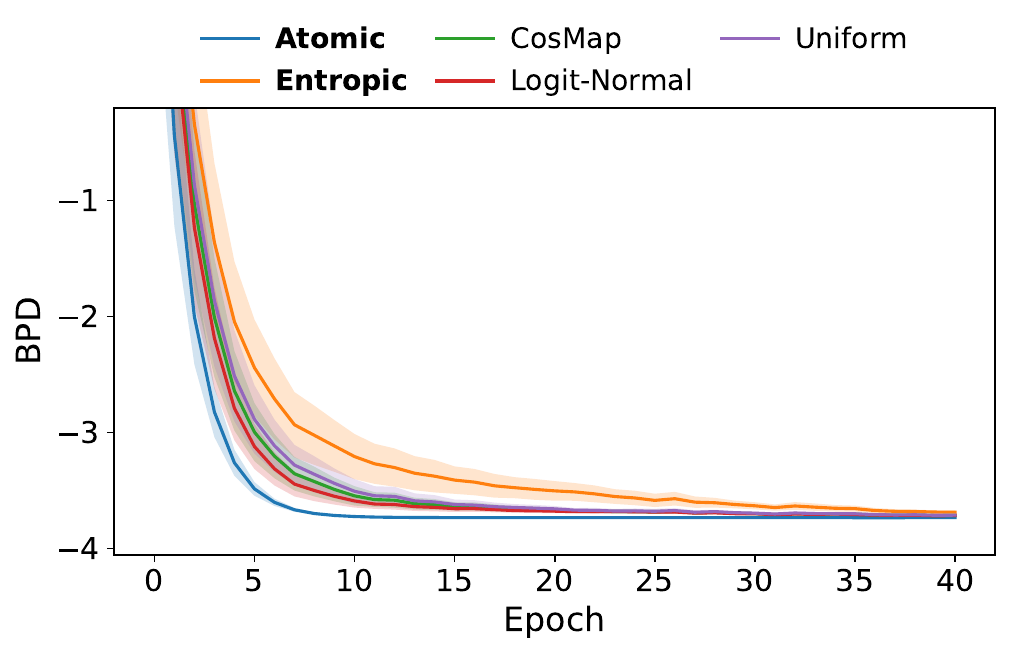}
    \end{subfigure}
    \hfill
    \begin{subfigure}[b]{0.48\textwidth}
        \centering
        \includegraphics[width=\textwidth]{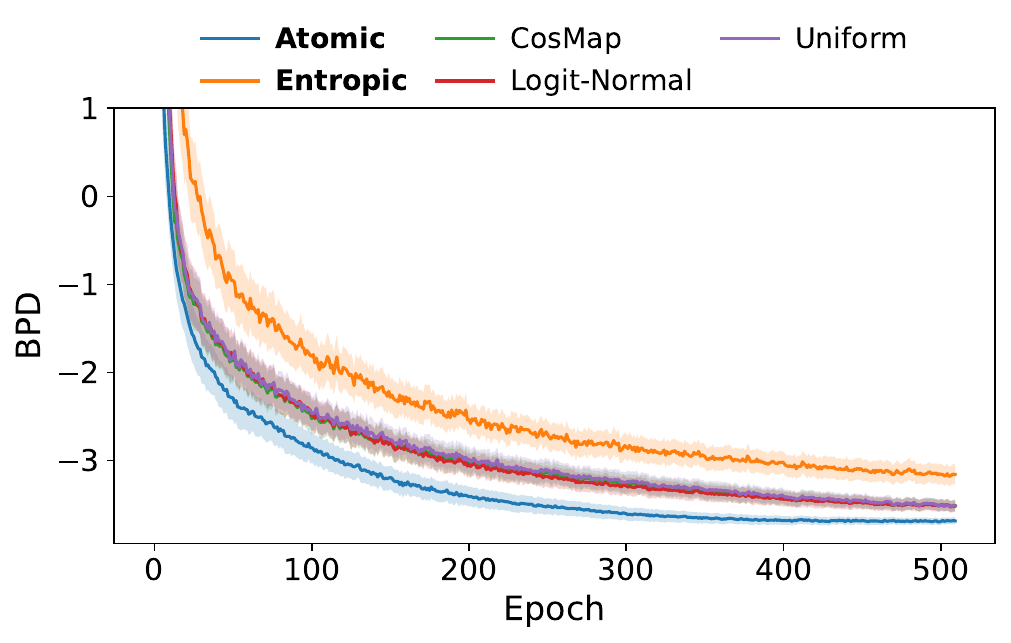}
    \end{subfigure}

    \vspace{0.3cm} 

    \begin{subfigure}[b]{0.48\textwidth}
        \centering
        \includegraphics[width=\textwidth]{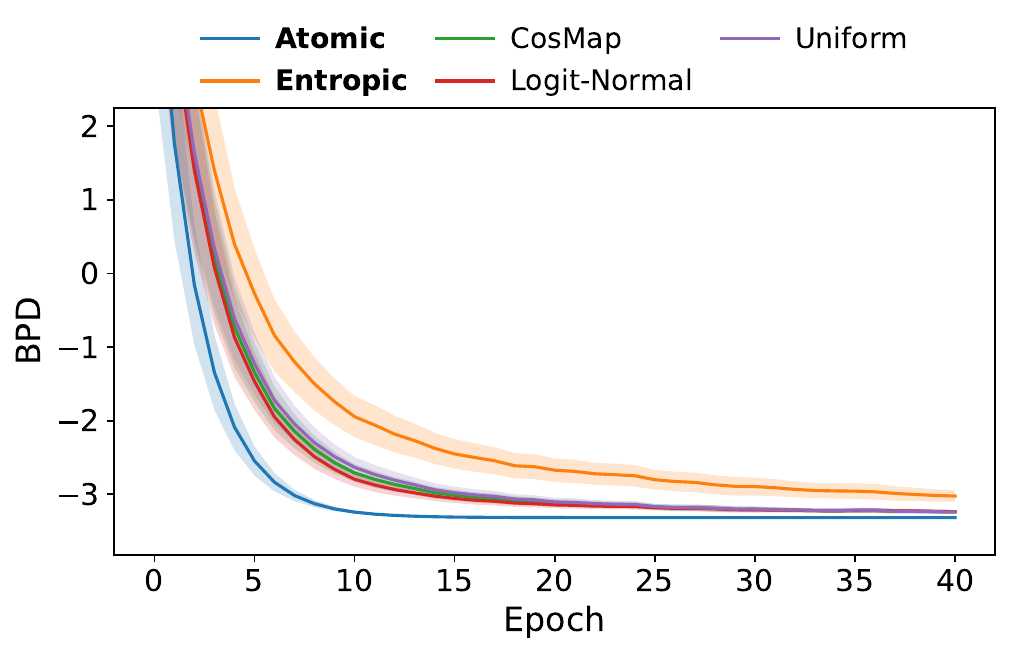}
    \end{subfigure}
    \hfill
    \begin{subfigure}[b]{0.48\textwidth}
        \centering
        \includegraphics[width=\textwidth]{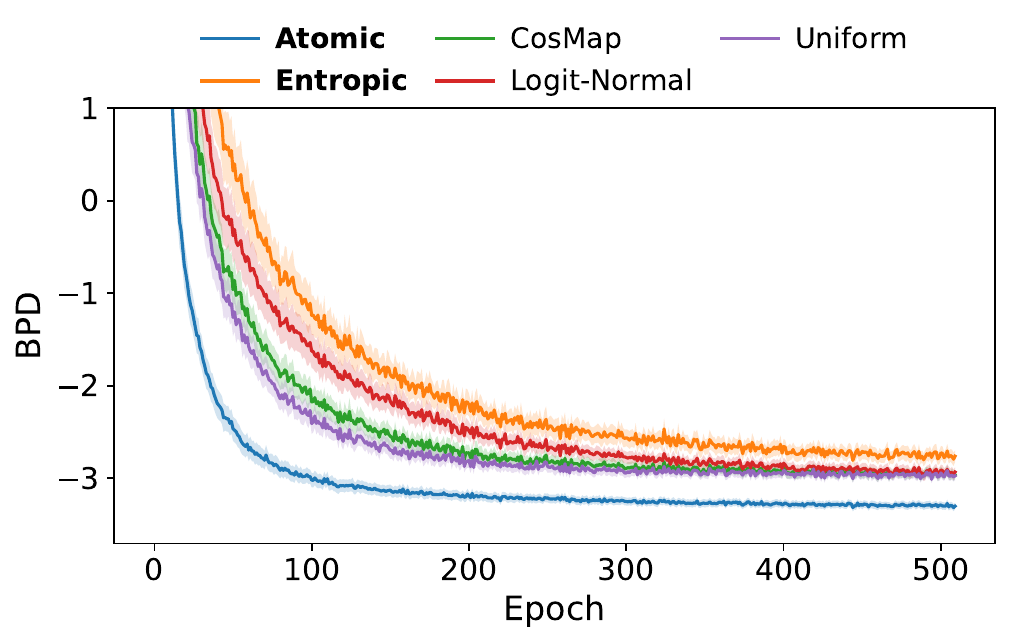}
    \end{subfigure}

    \caption{NLL Learning curve for each time schedule with the mixture of delta distribution. The curves on the left refer to the parametric model, the ones on the right to the \gls{MLP}. From top to bottom, the first row correspond to the 2 mixture component distribution, the second to 3 components and the last to 4.}
    \label{fig:dirac_learning_curves}
\end{figure}

\begin{figure}[!ht]
    \centering
    \begin{subfigure}[b]{0.48\textwidth}
        \centering
        \includegraphics[width=\textwidth]{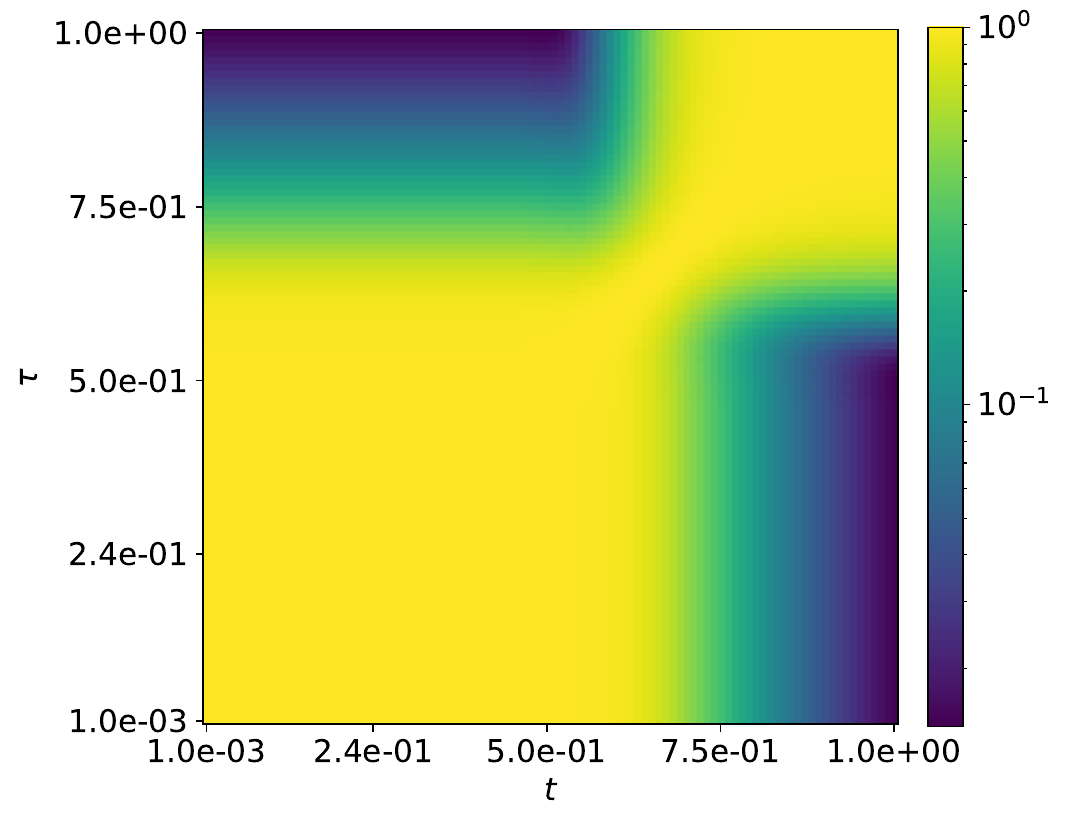}
    \end{subfigure}
    \hfill 
    \begin{subfigure}[b]{0.48\textwidth}
        \centering
        \includegraphics[width=\textwidth]{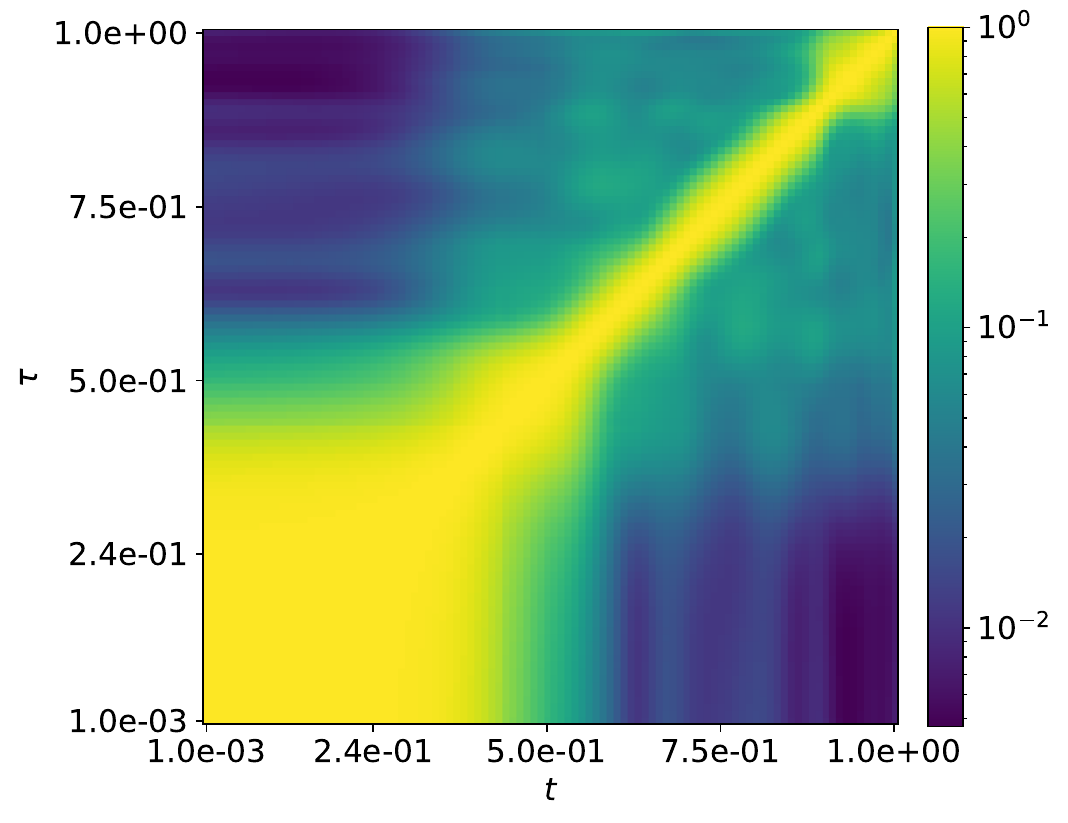}
    \end{subfigure}

    \vspace{0.3cm} 

    \begin{subfigure}[b]{0.48\textwidth}
        \centering
        \includegraphics[width=\textwidth]{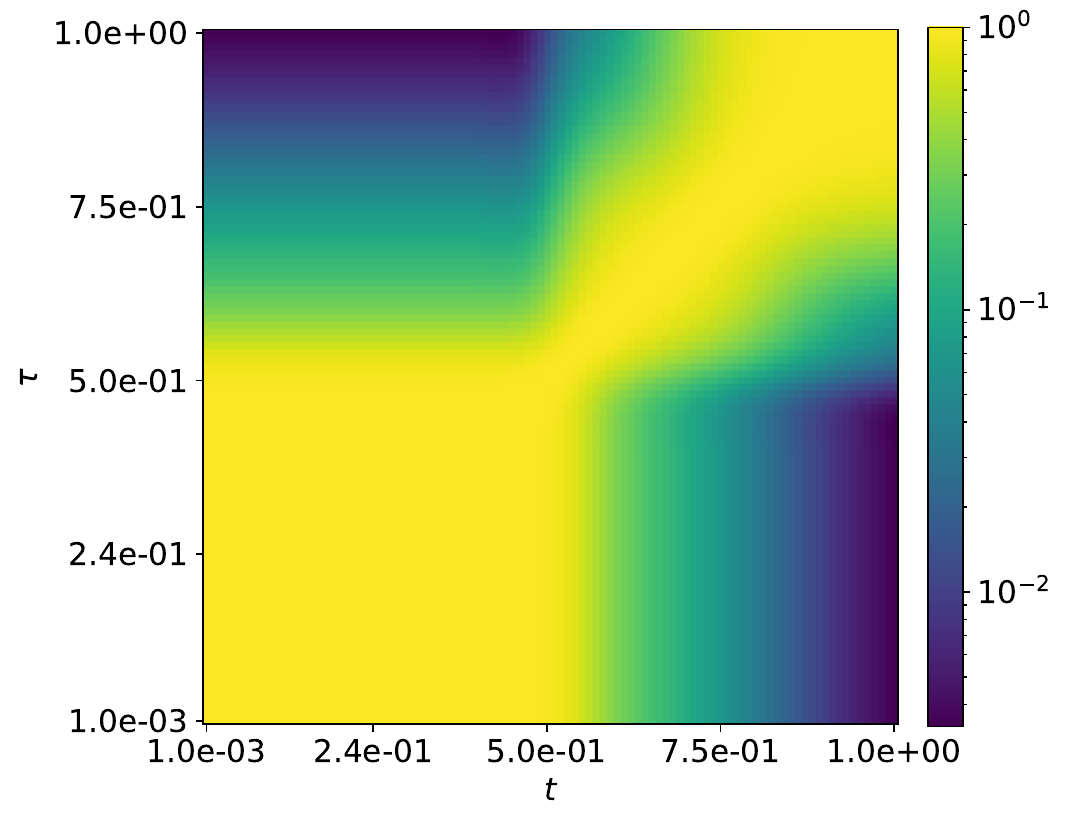}
    \end{subfigure}
    \hfill
    \begin{subfigure}[b]{0.48\textwidth}
        \centering
        \includegraphics[width=\textwidth]{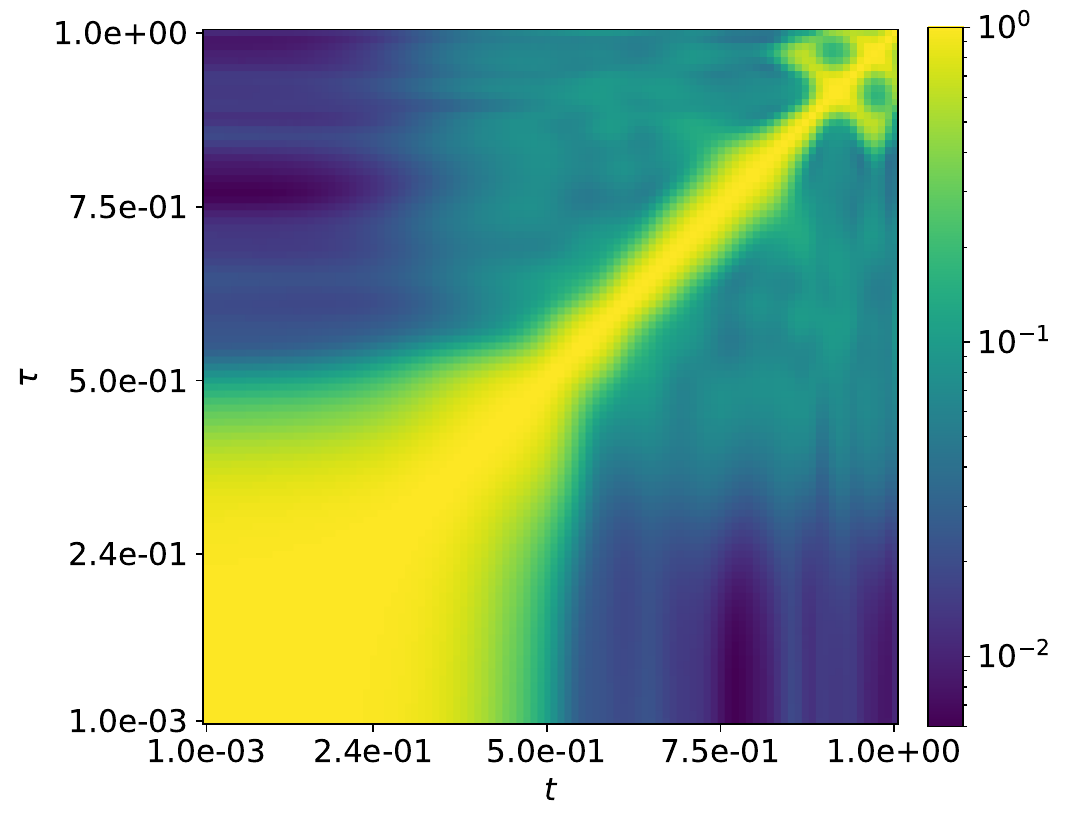}
    \end{subfigure}

    \vspace{0.3cm} 

    \begin{subfigure}[b]{0.48\textwidth}
        \centering
        \includegraphics[width=\textwidth]{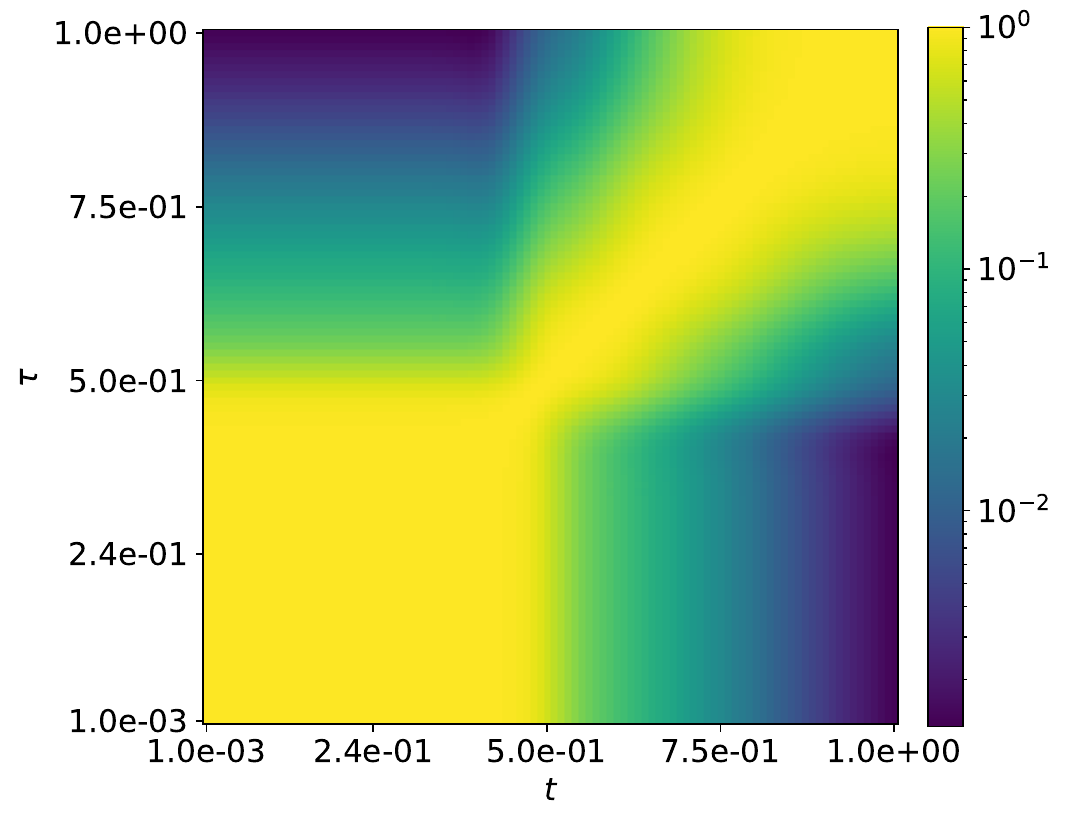}
    \end{subfigure}
    \hfill
    \begin{subfigure}[b]{0.48\textwidth}
        \centering
        \includegraphics[width=\textwidth]{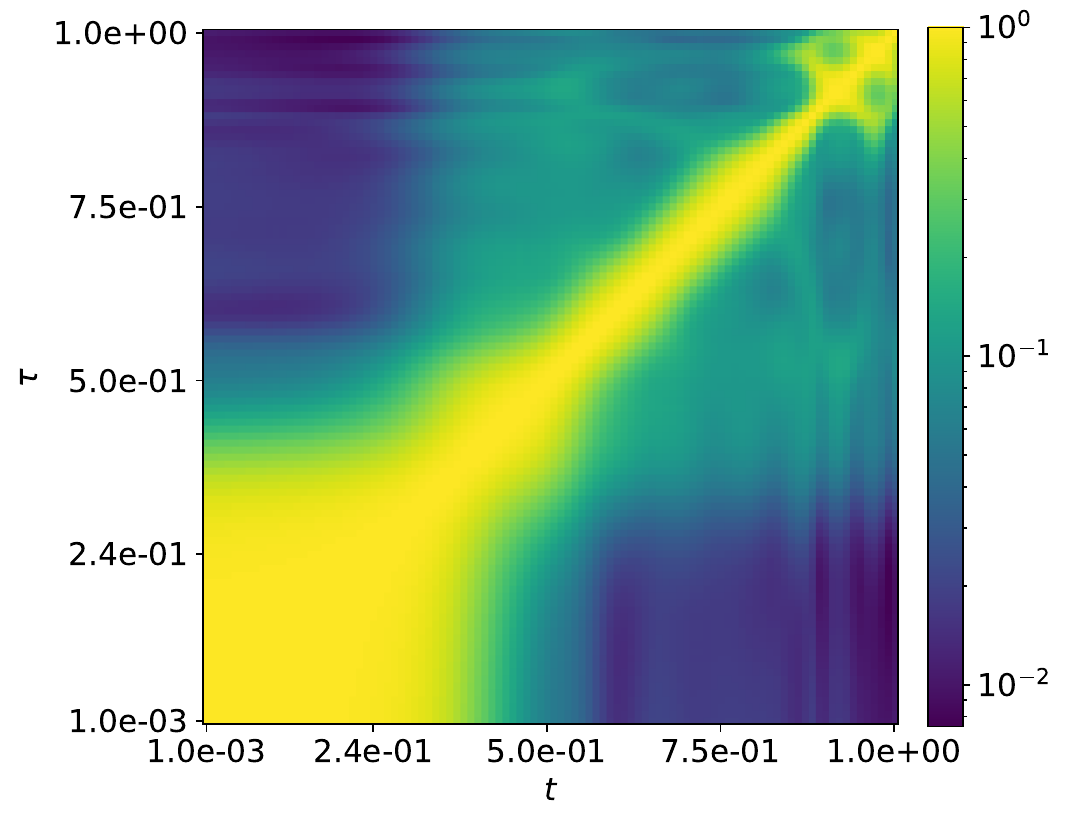}
    \end{subfigure}

    \caption{Strength of cross-time feature coupling between different times using the expected trace norm of \cref{eq:projection_operator_general} with a uniform time schedule with the mixture of delta distribution. The curves on the left refer to the parametric model, the ones on the right to the \gls{MLP}. From top to bottom, the first row correspond to the 2 mixture component distribution, the second to 3 components and the last to 4.}
    \label{fig:dirac_couplings}
\end{figure}

\subsection{Swissroll and Moons}\label{sec:2dtoy}

We also evaluate training performance on the Swiss Roll and Moons 2D datasets using the same configuration as \cref{sec:dirac} for the neural network, with visual depiction of the generated samples available in \cref{fig:2d_examples}. The corresponding learning curves are provided in \Cref{fig:2d_learning_curves}; these results show that the optimal schedule retains the best performances, closely followed by the entropic schedule. \Cref{fig:2d_sched_objs} and \cref{fig:2d_schedules} confirm what we already noticed in \cref{sec:dirac} about the number of atoms and the relation between the ELBO-weighted objective and the generative performance. The strength of cross-time couplings in \cref{fig:2d_couplings} shows a similar trend to the neural network experiments on the Dirac mixture as well, with weak couplings on high noise regions.

\begin{figure}[!ht]
    \centering
    \begin{subfigure}[b]{0.48\textwidth}
        \centering
        \includegraphics[width=\linewidth]{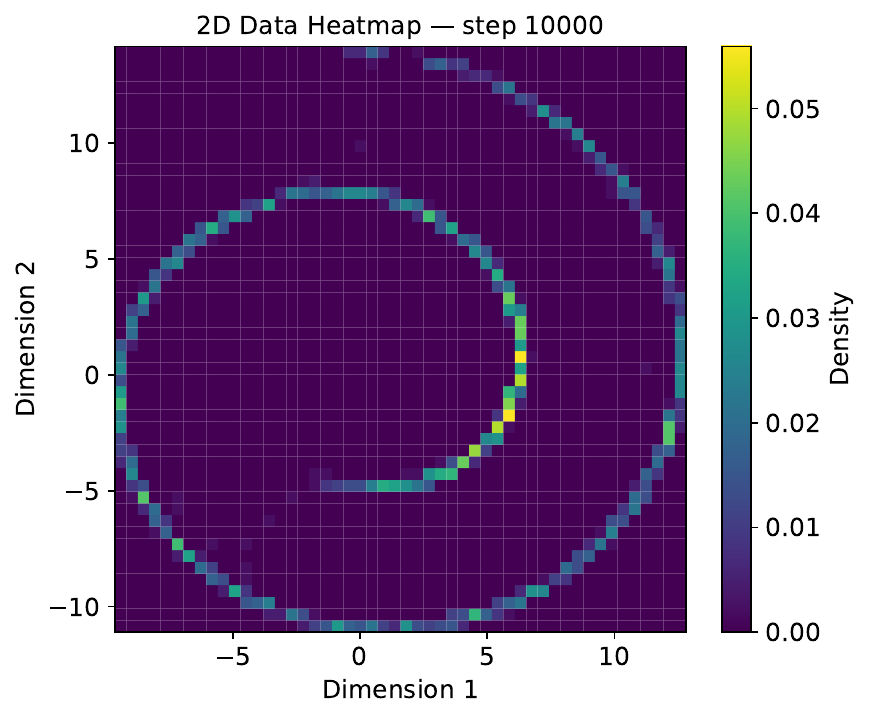}
        \label{fig:swissroll_ex}
    \end{subfigure}
    \hfill 
    \begin{subfigure}[b]{0.48\textwidth}
        \centering
        \includegraphics[width=\linewidth]{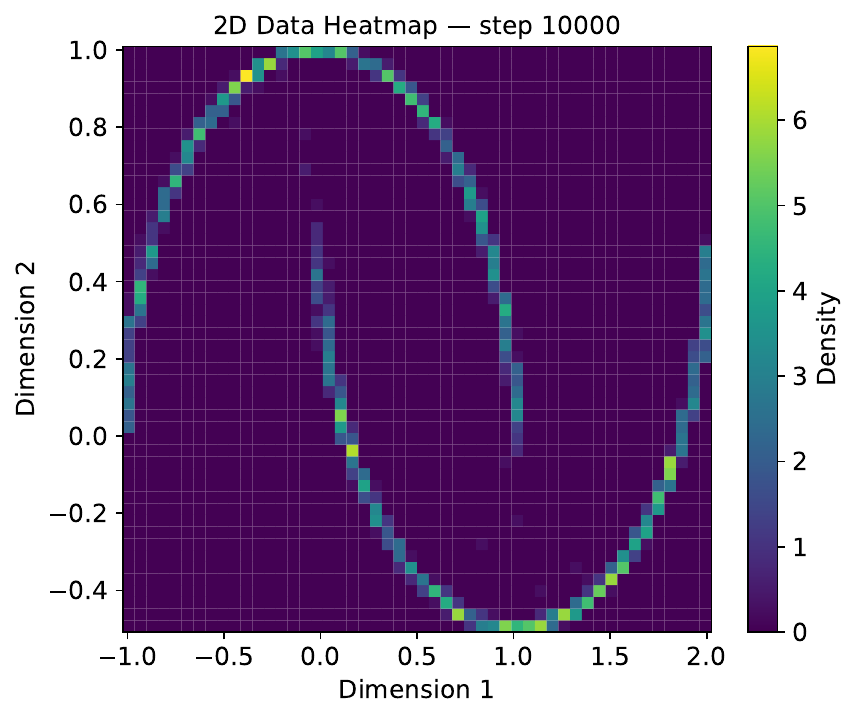}
        \label{fig:moons_ex}
    \end{subfigure}

    \caption{Empirical density of the samples generated by the diffusion models trained on the swissroll (left-hand side) and moons (right-hand side) distributions.}
    \label{fig:2d_examples}
\end{figure}

\begin{figure}[!ht]
    \centering
    \begin{subfigure}[b]{0.48\textwidth}
        \centering
        \includegraphics[width=\linewidth]{figures_toy_case/objs_log_scale/objs_swissroll.pdf}
        \label{fig:swissroll_objs}
    \end{subfigure}
    \hfill 
    \begin{subfigure}[b]{0.48\textwidth}
        \centering
        \includegraphics[width=\linewidth]{figures_toy_case/objs_log_scale/objs_moons.pdf}
        \label{fig:moons_objs}
    \end{subfigure}

    \caption{ELBO-weighted objective for the atomic, entropic and uniform schedules. We have the swissroll model on the left-hand side and the moons model on the right-hand side.}
    \label{fig:2d_sched_objs}
\end{figure}

\begin{figure}[!ht]
    \centering
    \begin{subfigure}[b]{0.48\textwidth}
        \centering
        \includegraphics[width=\linewidth]{figures_toy_case/schedules_log_scale/schedules_swissroll.pdf}
        \label{fig:swissroll_schedules}
    \end{subfigure}
    \hfill 
    \begin{subfigure}[b]{0.48\textwidth}
        \centering
        \includegraphics[width=\linewidth]{figures_toy_case/schedules_log_scale/schedules_moons.pdf}
        \label{fig:moons_schedules}
    \end{subfigure}

    \caption{Entropic and atomic schedules. for the swissroll model on the left-hand side and for the moons model on the right-hand side.}
    \label{fig:2d_schedules}
\end{figure}

\begin{figure}[!ht]
    \centering
    \begin{subfigure}[b]{0.48\textwidth}
        \centering
        \includegraphics[width=\linewidth]{figures_toy_case/moons/moons_bpd_learning_curves.pdf}
    \end{subfigure}
    \hfill 
    \begin{subfigure}[b]{0.48\textwidth}
        \centering
        \includegraphics[width=\linewidth]{figures_toy_case/swissroll/swissroll_bpd_learning_curves.pdf}
    \end{subfigure}

    \caption{NLL Learning curve for each time schedule for the swissroll (left-hand side) and moons (right-hand side) distributions. Shaded areas indicate the \gls{SEM}.}
    \label{fig:2d_learning_curves}
\end{figure}

\begin{figure}[!ht]
    \centering
    \begin{subfigure}[b]{0.48\textwidth}
        \centering
        \includegraphics[width=\linewidth]{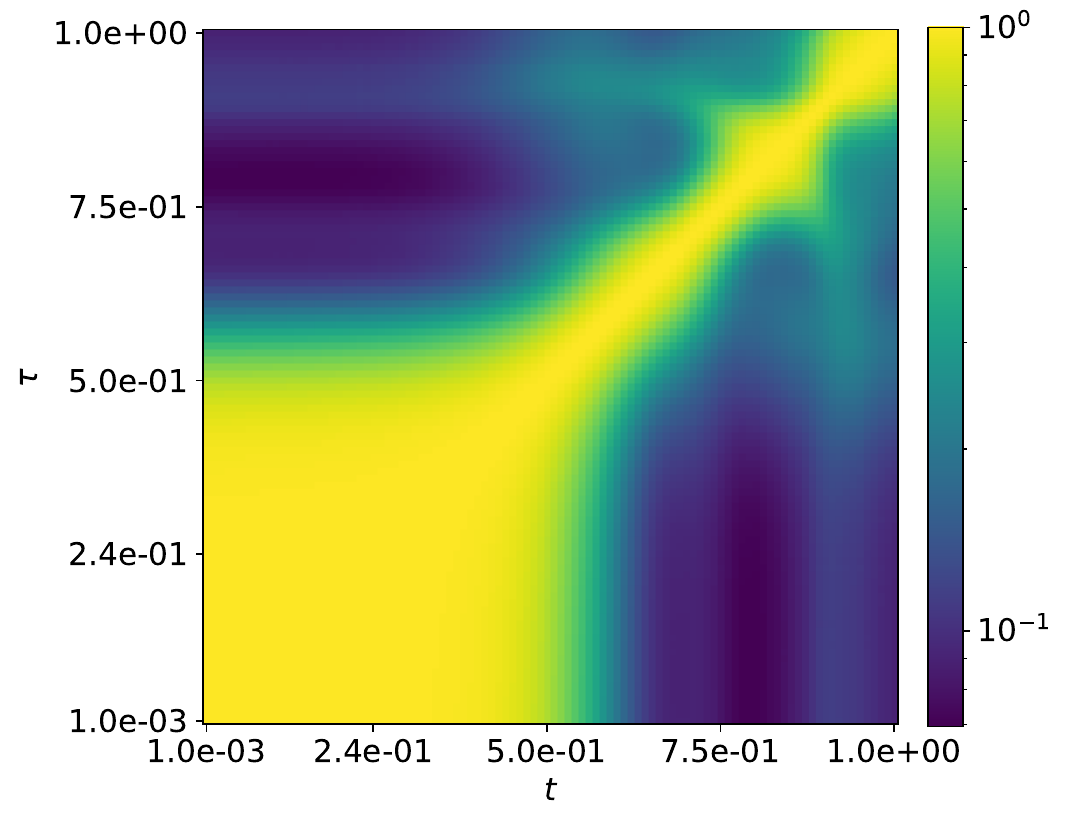}
    \end{subfigure}
    \hfill 
    \begin{subfigure}[b]{0.48\textwidth}
        \centering
        \includegraphics[width=\linewidth]{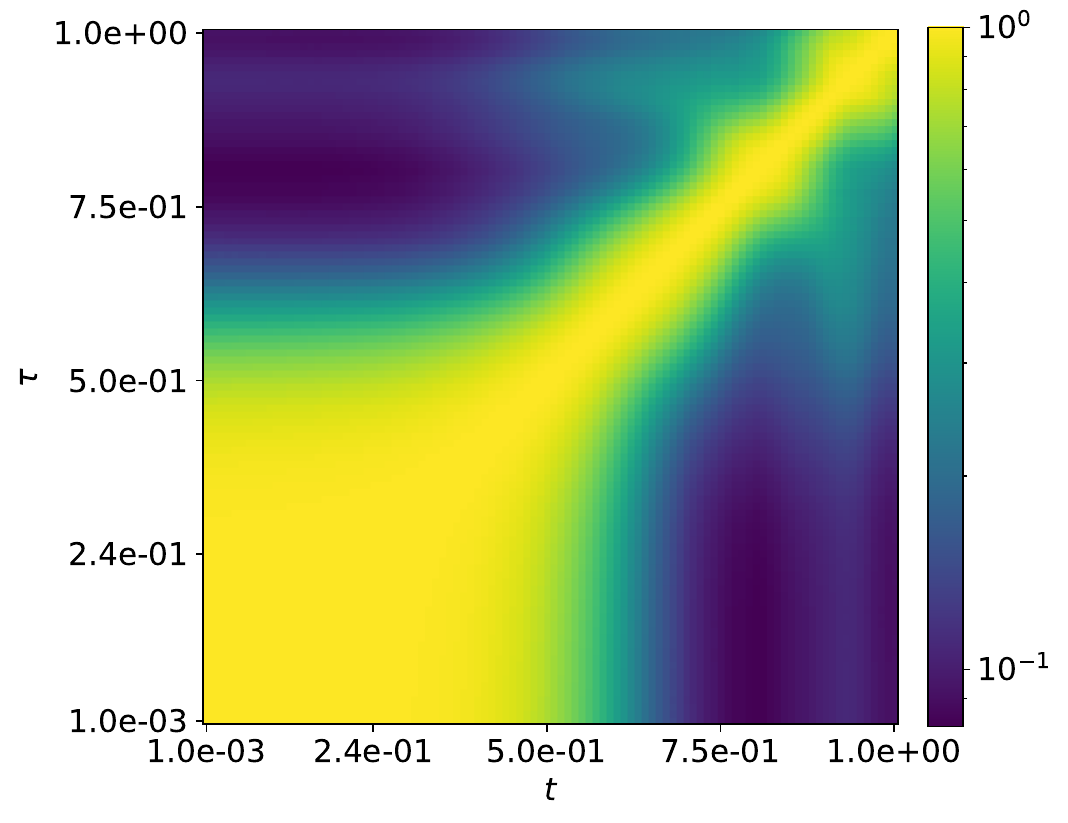}
    \end{subfigure}

    \caption{Strength of cross-time feature coupling between different times using the expected trace norm of \cref{eq:projection_operator_general} with a uniform time schedule for the swissroll (left-hand side) and moons (right-hand side) distributions.}
    \label{fig:2d_couplings}
\end{figure}

\subsection{MNIST}

In order to extend our analysis to higher-dimensional data, we train a variance-exploding diffusion model on MNIST using a 2D UNet with 14 million parameters. Because analyzing asymptotic objectives on the whole parameter count is unfeasible, we restrict the coupled optimization to the weights of the last convolutional kernel in the network, which amounts to 288 parameters. Again, \cref{fig:mnist_lc} shows that entropic and optimal schedules retain the best performance, with no clear winner. This is consistent with \cref{fig:mnist_coupling}: cross-time couplings are rather weak compared to other distributions, which means that the setting is closer to the assumptions required for the entropic schedule to perform well. We report the atomic and entropic schedules in \cref{fig:mnist_scheds} with their ELBO-weighted objectives in \cref{fig:mnist_objs}. The atomic schedule attains a massive atom close to clean data, while the rest follow more closely the shape of the entropic proxy.

\begin{figure}[!ht]
    \centering
    \begin{subfigure}[b]{0.48\textwidth}
        \centering
        \includegraphics[width=\textwidth]{figures_toy_case/objs_log_scale/objs_mnist.pdf}
        \caption{ELBO-weighted objective for the atomic, entropic and uniform schedules.}
        \label{fig:mnist_objs}
    \end{subfigure}
    \hfill 
    \begin{subfigure}[b]{0.48\textwidth}
        \centering
        \includegraphics[width=\textwidth]{figures_toy_case/schedules_log_scale/schedules_mnist.pdf}
        \caption{Atomic and entropic schedules.}
        \label{fig:mnist_scheds}
    \end{subfigure}

    \vspace{0.3cm} 

    \begin{subfigure}[b]{0.48\textwidth}
        \centering
        \includegraphics[width=\textwidth]{figures_toy_case/mnist/mnist_bpd_learning_curves.pdf}
        \caption{NLL Learning curve for each time schedule. Shaded areas indicate the \gls{SEM}.}
        \label{fig:mnist_lc}
    \end{subfigure}
    \hfill 
    \begin{subfigure}[b]{0.48\textwidth}
        \centering
        \includegraphics[width=\textwidth]{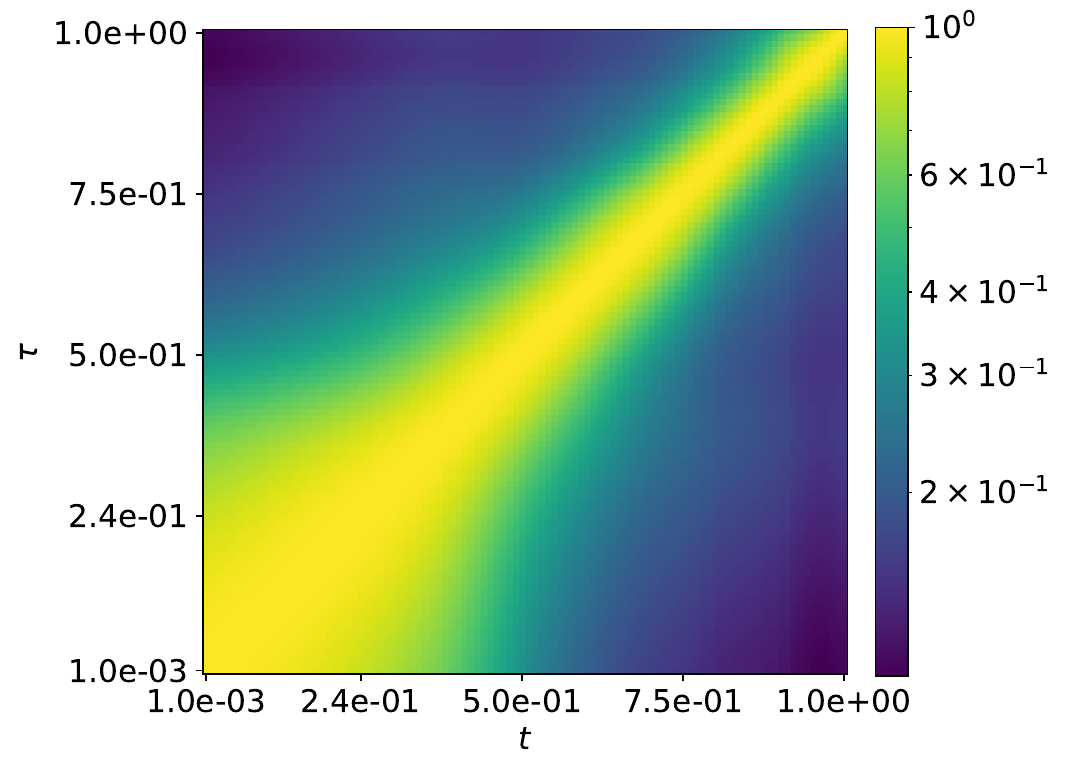}
        \caption{Strength of cross-time feature coupling between different times using the expected trace norm of \cref{eq:projection_operator_general} with a uniform time schedule.}
        \label{fig:mnist_coupling}
    \end{subfigure}
    \caption{Key visuals for the MNIST experiments.}
    \label{fig:mnist_all}
\end{figure}

%% file: appendices/schedule_opt_algo.tex
\begin{algorithm}[ht]
\caption{Asymptotic Schedule Optimization via Gradient Descent}
\label{alg:schedule_opt}
\begin{algorithmic}[1]
\Require Calibration dataset $\mathcal{D}$, evaluation time bounds $[t_{\min}, t_{\max}]$, number of bins $N$, optimization steps $K$, learning rate $\eta$.
\Ensure Optimal discrete time-sampling measure $\mu^\star$.

\State Initialize uniform time grid $\mathcal{T} = \{\tau_1, \dots, \tau_N\}$ over $[t_{\min}, t_{\max}]$.
\State Initialize schedule logits $\phi^{(0)} = \mathbf{0} \in \mathbb{R}^N$.

\Statex \textbf{Phase 1: Pre-compute Local Curvature and Noise Operators}
\For{$i = 1, \dots, N$}
    \State Compute pointwise curvature: 
    \Statex \qquad $A_{\tau_i} = \mathbb{E}_{x_0 \sim \mathcal{D}, x_{\tau_i}|x_0}\!\left[J_{\tau_i}(x_{\tau_i})^\top J_{\tau_i}(x_{\tau_i})\right]$
    \State Compute pointwise gradient noise: 
    \Statex \qquad $B_{\tau_i} = \mathbb{E}_{x_0 \sim \mathcal{D}, x_{\tau_i}|x_0}\!\left[J_{\tau_i}(x_{\tau_i})^\top \Sigma_{\tau_i}(x_{\tau_i}) J_{\tau_i}(x_{\tau_i})\right]$
\EndFor

\Statex \textbf{Phase 2: Schedule Optimization Loop}
\For{$k = 1, \dots, K$}
    \State Apply softmax to obtain the current probability measure: 
    \Statex \qquad $\mu^{(k)} = \operatorname{Softmax}(\phi^{(k-1)})$
    \State Assemble global curvature and noise operators:
    \Statex \qquad $H[\mu^{(k)}] = \sum_{i=1}^N \mu^{(k)}_i A_{\tau_i}$ \quad \text{and} \quad $\Gamma[\mu^{(k)}] = \sum_{i=1}^N \mu^{(k)}_i B_{\tau_i}$
    \For{$j = 1, \dots, N$}
        \State Evaluate the unweighted cross-time objective at $\tau_j$:
        \Statex \qquad $U_j = \operatorname{Tr}\!\left(A_{\tau_j} H[\mu^{(k)}]^{-1} \Gamma[\mu^{(k)}] H[\mu^{(k)}]^{-1}\right)$
        \State Weight by the signal-to-noise ratio derivative:
        \Statex \qquad $E_j = \dot{\mathrm{SNR}}(\tau_j) \, U_j$
    \EndFor
    \State Approximate the continuous objective integral via trapezoidal rule: 
    \Statex \qquad $\mathcal{J}(\phi^{(k-1)}) \approx \operatorname{Trapz}(\{E_j\}_{j=1}^N, \mathcal{T})$
    \State Update logits using an optimizer (e.g., Adam or SGD):
    \Statex \qquad $\phi^{(k)} \leftarrow \phi^{(k-1)} - \eta \nabla_\phi \mathcal{J}(\phi^{(k-1)})$
\EndFor
\State \Return $\mu^\star = \operatorname{Softmax}(\phi^{(K)})$
\end{algorithmic}
\end{algorithm}

%% file: appendices/experimental_details.tex
\section{Practical Estimation of the Entropy Rate}\label{appx:experimental_details}

From the de Bruijn identity
$\dot{\Ent}[x_0|x_t;t,\phi]=\dot{\mathrm{SNR}}(t)\,\mathrm{MMSE}(t)$
(Supp.~\ref{supp sec: conditional entropy}), equivalently
$-\frac{d}{dt}I(X_0;X_t)$, it follows that, given a trained denoising model
$s_\theta(x_t,t)$, we can estimate the entropy rate from a dataset
$\{y^j, \dots, y^N\}$ as the scaled denoising error
\begin{equation}\label{eq:entropy_rate_estimator}
    \hat{\dot{\Ent}}(t) = \dot{\mathrm{SNR}}(t)\,\frac{1}{N}\sum_{j=1}^N
    \bigl\|y^{j} - s_\theta(y^{j} + \sigma(t)\,z^{j},\,t)\bigr\|^2,
\end{equation}
where $z^j\sim\mathcal{N}(0,I)$ independently. The estimator is unbiased when the network is
perfectly trained. Since \eqref{eq:entropy_rate_estimator} is essentially the training loss
evaluated on a held-out set, it does not need to be re-computed separately and can be accumulated
during training at negligible overhead. \citet{stancevic2026entropic} show that these
estimates are stable and accurate even for dataset sizes of a few hundred points.

In practice the entropy curve, or equivalently the mutual-information decay curve, must be
estimated either in advance from a pre-trained model or adaptively adjusted as training
progresses. In all experiments we estimate these curves in advance from pre-trained diffusion models,
focusing on schedule optimization rather than the online estimation procedure.